\def\eqref#1{equation~\ref{#1}}
\def\Eqref#1{Equation~\ref{#1}}
\def\1{\bm{1}}
\def\vzero{{\bm{0}}}
\def\va{{\bm{a}}}
\def\vc{{\bm{c}}}
\def\ve{{\bm{e}}}
\def\vf{{\bm{f}}}
\def\vg{{\bm{g}}}
\def\vh{{\bm{h}}}
\def\vp{{\bm{p}}}
\def\vs{{\bm{s}}}
\def\vu{{\bm{u}}}
\def\vv{{\bm{v}}}
\def\vx{{\bm{x}}}
\def\vy{{\bm{y}}}
\def\vz{{\bm{z}}}
\def\mA{{\bm{A}}}
\def\mB{{\bm{B}}}
\def\mC{{\bm{C}}}
\def\mD{{\bm{D}}}
\def\mG{{\bm{G}}}
\def\mI{{\bm{I}}}
\def\mJ{{\bm{J}}}
\def\mL{{\bm{L}}}
\def\mM{{\bm{M}}}
\def\mP{{\bm{P}}}
\def\mX{{\bm{X}}}
\DeclareMathAlphabet{\mathsfit}{\encodingdefault}{\sfdefault}{m}{sl}
\SetMathAlphabet{\mathsfit}{bold}{\encodingdefault}{\sfdefault}{bx}{n}
\def\gA{{\mathcal{A}}}
\def\gB{{\mathcal{B}}}
\def\gC{{\mathcal{C}}}
\def\gE{{\mathcal{E}}}
\def\gF{{\mathcal{F}}}
\def\gG{{\mathcal{G}}}
\def\gI{{\mathcal{I}}}
\def\gL{{\mathcal{L}}}
\def\gP{{\mathcal{P}}}
\def\gQ{{\mathcal{Q}}}
\def\gR{{\mathcal{R}}}
\def\gT{{\mathcal{T}}}
\def\gU{{\mathcal{U}}}
\def\gV{{\mathcal{V}}}
\def\gW{{\mathcal{W}}}
\def\gX{{\mathcal{X}}}
\newcommand{\R}{\mathbb{R}}
\DeclareMathOperator{\id}{id}
\newcommand{\euscr}{\EuScript}
\newcommand{\vgen}{\vg}
\newcommand{\vsubgen}{\vg}
\newcommand{\venc}{\hat{\vf}}
\newcommand{\vdec}{\hat{\vg}}
\newcommand{\vsubdec}{\hat{\vg}}
\newcommand{\vrep}{\vh}
\newcommand{\vrepinv}{\vv}
\newcommand{\vpath}{\bm{\gamma}}
\newcommand{\dec}{\hat{g}}
\newcommand{\vobs}{\vx}
\newcommand{\setobs}{\mathcal{X}}
\newcommand{\src}{s}
\newcommand{\vsrc}{\vs}
\newcommand{\setsrc}{\mathcal{S}}
\newcommand{\tgt}{z}
\newcommand{\vtgt}{\vz}
\newcommand{\settgt}{\mathcal{Z}}
\newcommand{\concomp}{\euscr{C}}
\newcommand{\numslot}{K}
\newcommand{\numslott}{L}
\newcommand{\dimsrc}{d_s}
\newcommand{\dimobs}{d_x}
\newcommand{\dimtgt}{d_z}
\newcommand{\raisemath}[1]{\mathpalette{\raisem@th{#1}}}
\newcommand{\raisem@th}[3]{\raisebox{#1}{$#2#3$}}
\DeclareMathOperator{\Conf}{Conf}
\DeclareMathOperator{\diag}{diag}
\DeclareMathOperator{\rank}{rank}
\DeclareMathOperator{\row}{row}
\DeclareMathOperator{\supp}{supp}
\newcommand{\ivis}[1]{[#1]}
\newcommand*{\mni}{\pitchfork}
\newcommand*{\nmni}{\not\pitchfork}
\newcommand*{\rkron}{\odot}
\newcommand*\diff{\mathop{}\!\mathrm{d}}
\newcommand*{\drv}{D}
\newcommand*{\jac}{\mJ}
\newcommand*\abs[1]{\left\lvert#1\right\rvert}
\newcommand*\restr[2]{{
  #1 % the function
  \kern-\nulldelimiterspace % no space left of vertical bar
  \left.\kern-\nulldelimiterspace % no space right of vertical bar
  \vphantom{\big|} % pretend it's a little taller at normal size
  \right|_{\raisemath{1pt}{#2}} % this is the delimiter
  }}
\theoremstyle{plain}
\newtheorem{definition}{Definition}
\newtheorem{example}{Example}
\newtheorem{remark}{Remark}
\newtheorem{lemma}{Lemma}
\declaretheorem{proposition}
\def\Defref#1{Defn.~\ref{#1}}
\def\valpha{{\bm{\alpha}}}
\def\vbeta{{\bm{\beta}}}
\def\veta{{\bm{\eta}}}
\def\vphi{{\bm{\phi}}}
\def\vpsi{{\bm{\psi}}}
\def\vpi{{\bm{\pi}}}
\def\vxi{{\bm{\xi}}}
\def\vvarrho{{\bm{\varrho}}}
\def\vzeta{{\bm{\zeta}}}
\title{Mechanistic Independence: A Principle for Identifiable Disentangled Representations}
\author{%
  Stefan Matthes \\
  Technical University of Munich \\
  \texttt{stefan.matthes@tum.de}
  \And
  Zhiwei Han \\
  Technical University of Munich \\
  \texttt{zhiwei.han@tum.de}
  \And
  Hao Shen \\
  Technical University of Munich \\
  \texttt{hao.shen@tum.de}
}
\begin{document}

\maketitle

\begin{abstract}
\emph{Disentangled representations} seek to recover latent factors of variation underlying observed data, yet their \emph{identifiability} is still not fully understood.  
We introduce a unified framework in which disentanglement is achieved through \emph{mechanistic independence}, which characterizes latent factors by how they act on observed variables rather than by their latent distribution.  
This perspective is invariant to changes of the latent density, even when such changes induce statistical dependencies among factors.  
Within this framework, we propose several related independence criteria -- ranging from support-based and sparsity-based to higher-order conditions -- and show that each yields identifiability of latent subspaces, even under nonlinear, non-invertible mixing.  
We further establish a hierarchy among these criteria and provide a graph-theoretic characterization of latent subspaces as connected components.  
Together, these results clarify the conditions under which disentangled representations can be identified without relying on statistical assumptions.   
\end{abstract}

\section{Introduction}

Disentangled representations capture the underlying explanatory factors that generate observed data.
They are widely believed to promote compositionality, enable controllable generation, and facilitate transfer \citep{bengio2013representation,beta_vae,scholkopf2021toward,locatello2019challenging,greff2020binding,goyal2022inductive}).  
From a scientific perspective, disentanglement aligns with the goal of discovering the causal or mechanistic structure of data-generating processes \citep{scholkopf2021toward}.  
The question of whether such representations can be consistently recovered is addressed by identifiability.  
If a model class lacks identifiability, different training runs may encode incompatible factors, thereby undermining interpretability and transfer.

A classical route to identifiability is to posit \emph{statistical independence} of the latent factors, as in independent component analysis (ICA) \citep{comon1994independent,hyvarinen2000independent} and independent subspace analysis (ISA) \citep{cardoso1998multidimensional,hyvarinen2000emergence}.
Early work focused on linear mixing, where identifiability can be obtained under mild conditions.
For general \emph{nonlinear} mixing, however, identifiability is impossible without further assumptions \citep{hyvarinen1999nonlinear, locatello2019challenging}, motivating a large body of work that augments statistical assumptions with temporal cues \citep{tcl, pcl, slow_vae}, auxiliary variables \citep{pcl, gcl, ivae}, multiple views \citep{ice_beem, gresele2020incomplete, von2021self, cl_ica, matthes2023towards}, or interventions \citep{locatello2020weakly, lachapelle2022disentanglement, ahuja2022weakly, brehmer2022weakly, ahuja2023interventional, jiang2023learning, yao2023multi, zhang2024identifiability, ng2025causal}.

% A complementary strategy constrains the \emph{mechanism} that maps latents to observations \citep{jutten2004advances, zhang2008minimal, horan2021unsupervised, ima, moran2021identifiable, buchholz2022function, reizinger2022embrace, kivva2022identifiability, ghosh2023independent, zheng2023generalizing, nguyen2025diverse}.
A complementary strategy constrains the \emph{mechanism} that maps latents to observations, such as post-nonlinear mixtures \citep{taleb1999source, jutten2004advances, zhang2012identifiability, lyu2021identifiability}, near-linear mixtures \citep{zhang2008minimal}, local isometries \citep{horan2021unsupervised}, piecewise-affine mixtures \citep{kivva2022identifiability}, sparsity \citep{moran2021identifiable, zheng2022identifiability, zheng2023generalizing}, additive structures \citep{lachapelle2023additive}, conformal maps and orthogonal coordinate transformations \citep{ima, buchholz2022function, ghosh2023independent}, and Jacobian-based constraints \citep{brady2023provably, brady2024interaction, nguyen2025diverse}.
Independent Mechanism Analysis (IMA)~\citep{ima} proposes to address nonlinear ICA by restricting the mixing function so that its Jacobian has orthogonal columns.
This couples statistical independence of the latents with a mechanistic constraint on the generator.
In contrast, we pursue \emph{mechanistic independence} as a stand-alone organizing principle: factors are defined by \emph{how they act} on observations (through the generator), not by how they are distributed.
This shift yields identifiability statements that are invariant to reweightings of the latent density and allows the true factors to be misaligned with any statistically independent subspaces.

This work presents a family of mechanistic independence criteria -- spanning support-based separation, sparsity gaps in first-order action, and higher-order (cross-derivative) constraints.
Similar to ISA that shows identifiability with respect to a minimal decomposition into independent subspaces, each criterion comes with a corresponding notion of \emph{irreducibility} that rules out spurious internal splits of a factor and yields an identifiability theorem.
Our framework covers multi-dimensional factors and partial disentanglement.

Our framework generalizes and unifies recent identifiability results based on mechanistic constraints: object-centric disentanglement via disjoint supports~\citep{brady2023provably}, interaction asymmetry~\citep{brady2024interaction}, and additive decoders~\citep{lachapelle2023additive}, and it partially subsumes sparsity-based nonlinear ICA results~\citep{zheng2022identifiability, zheng2023generalizing} (the parts that do not require statistical independence).
Moreover, defining independent mechanisms by Jacobian-orthogonality as in IMA \citep{ima} appears in our taxonomy as one instance within a broader class of mechanistic constraints.
Unlike approaches that rely primarily on distributional assumptions (e.g., temporal structure or auxiliary variables), our results hinge on properties of the generator and therefore remain valid under broad latent densities.
The main contributions of this work are as follows.

% \paragraph{Overview.}
% Section~\ref{sec:disentanglement_identifiability} formalizes disentanglement and proves a local-to-global result.
% Section~\ref{sec:misa} develops mechanistic independence (support-, sparsity-, and higher-order–based criteria), defines irreducibility, and establishes local identifiability theorems.
% The discussion section presents a hierarchy among criteria and a graph-theoretic view of factors as connected components.

\begin{itemize}
    % \item We characterize factors by their \emph{action} on observations, not by latent distributions; this yields invariance to reweightings of the latent density (including ones that introduce dependencies).
    \item We define a notion of local disentanglement and prove that under mild topological assumptions (such as path-connectedness of the source space) local disentanglement extends to global disentanglement even for generators that are not fully invertible.
    \item We introduce a family of mechanistic independence criteria for subspaces and prove for each identifiability (up to block-wise invertible transforms and permutations). %, allowing \emph{partial} disentanglement and \emph{non-invertible} generators.
    \item We discuss how the independence criteria are related and show that the independent and irreducible factors coincide with connected components of graphs derived from mechanistic assumptions of the generator.
\end{itemize}

\paragraph{Notation}
We write $[n] \coloneqq \{1,\dots,n\}$ for $n \in \mathbb{N}$. Scalars are denoted by lowercase letters, vectors by bold lowercase, and matrices by bold uppercase (e.g., $a \in \mathbb{R}$, $\va \in \mathbb{R}^n$, $\mA \in \mathbb{R}^{n\times n}$).
Scalar-valued functions are written $f,f_i$, while general maps are written $\vf,\vf_i$.
For $\vp \in \setsrc_1 \times \dots \times \setsrc_n$, we set $\drv_i \vf_\vp \coloneqq \drv \vf_\vp \circ \iota_i$ for the differential in the $i$-th argument ($\iota_i$ the canonical inclusion), and more generally $\drv_{i,j}^n \vf_\vp \coloneqq \drv^n \vf_\vp \circ (\iota_i,\iota_j,\id,\dots,\id)$.

\section{Disentanglement and Identifiability}
\label{sec:disentanglement_identifiability}

We now formalize the data-generating assumptions and the notion of disentanglement used throughout the paper, before turning to identifiability.
Our goal is to explain when a decoder (or encoder) \emph{recovers}, up to natural ambiguities, the underlying factors of variation that compose the observations.

\subsection{Data Generating Process}
\label{subsec:dgp}

We model latent factors of variation as subspaces of a product manifold, reflecting the often compositional nature of observed data.
Let the set of generative (latent) configurations be an open\footnote{The condition that $\setsrc$ is open implies that each factor can vary independently and without restriction at any point within the space and is a common assumption.} subset $\setsrc \subseteq \setsrc_1 \times \dots \times \setsrc_{\numslot}$, where each factor space $\setsrc_i$ has positive dimension.
We assume the latent distribution $\mathbb{P}_{\vsrc}$ is strictly positive on $\setsrc$.

In line with the manifold hypothesis in representation learning (though assuming that observations lie on rather than merely near a manifold), we posit that observations are produced via a \emph{generator} (also called a \emph{ground-truth decoder} or \emph{mixing function}) 
\begin{equation*}
    \vgen\colon \setsrc \to \setobs \subseteq \R^{\dimobs}.
\end{equation*}
We denote the observation manifold by $\setobs \coloneqq \vgen(\setsrc)$, where typically $\dimsrc \coloneqq \dim(\setsrc)$ is much smaller than $\dimobs$.

Notably, instead of characterizing the underlying factors through (conditional) statistical independence or latent-space group actions \citep{higgins2018towards}, we characterize them by their action on the observation manifold via $\vgen$.
While these notions may align, they do not necessarily have to.
Several possibilities for making this precise are discussed in Section~\ref{sec:misa}.

\subsection{Disentangled Representations}
\label{subsec:disentangled}

To discuss how a learned representation may or may not reflect the underlying generative factors, we consider a target representation space \(\settgt \subseteq \prod_{j=1}^{\numslott} \settgt_j\).
In a disentangled representation, each component \(\settgt_j\) is intended to capture a single generative factor, or at most a restricted subset of factors.
We formalize this with the notion of a \emph{decomposable map}.

% \begin{definition}[Decomposable map]
% Let $\setsrc \subseteq \prod_{i=1}^{\numslot}\setsrc_i$ and 
% $\settgt \subseteq \prod_{j=1}^{\numslott}\settgt_j$.
% A map $\widetilde{\vrep}\colon \setsrc \to \settgt$ is 
% \emph{decomposable} if there exists a surjection $\sigma\colon[\numslot]\to[\numslott]$
% and maps $\vrep_j\colon \prod_{i\in\sigma^{-1}(j)}\setsrc_i \to \settgt_j$ such that, for all $\vsrc\in\setsrc$,
% \begin{equation}
% \widetilde{\vrep}(\vsrc) = \bigl(\vrep_j((\vsrc_i)_{i\in\sigma^{-1}(j)})\bigr)_{j=1}^{\numslott}.    
% \end{equation}
% \end{definition}

% In other words, target factor $\vtgt_j \in \settgt_j$ depends only on the subset of source factors $\{\vsrc_i : \sigma(i) = j\}$.

% \begin{definition}[Disentanglement]\label{definition:disentanglement_main}
% A decoder $\vdec\colon \settgt \to \setobs$ is \emph{disentangled} w.r.t.\ a generator $\vgen\colon \setsrc \to \setobs$ if there exists a decomposable map $\vrep\colon\setsrc\to\settgt$ such that $\vgen=\vdec\circ\vrep$.
% \end{definition}

\begin{definition}[Decomposable map]
Let $\setsrc \subseteq \prod_{i=1}^{\numslot}\setsrc_i$ and
$\settgt \subseteq \prod_{j=1}^{\numslott}\settgt_j$.
A map $\vrep\colon \setsrc \to \settgt$ is
\emph{partially decomposable} if there are a surjection $\sigma\colon[\numslot]\to[\numslott]$
and maps $\vrep^{(j)}\colon \prod_{i\in\sigma^{-1}(j)}\setsrc_i \to \settgt_j$ such that, for all $\vsrc\in\setsrc$,
\begin{equation}
\vrep(\vsrc)=\bigl(\vrep^{(j)}((\vsrc_i)_{i\in\sigma^{-1}(j)})\bigr)_{j=1}^{\numslott}.
\end{equation}
If $\numslot=\numslott$, it is called fully decomposable, or simply decomposable.
\end{definition}

Thus, each target factor $\vtgt_j\in\settgt_j$ depends only on the source factors $\{\vsrc_i:\sigma(i)=j\}$.

\begin{definition}[Disentanglement]\label{definition:disentanglement_main}
A decoder $\vdec\colon \settgt \to \setobs$ is \emph{disentangled} w.r.t.\ a generator $\vgen\colon \setsrc \to \setobs$ if $\vgen=\vdec\circ\vrep$ for some decomposable map $\vrep\colon\setsrc\to\settgt$.
If $\vrep$ is partially decomposable, this is called partial disentanglement.
\end{definition}

Disentanglement asserts that varying a single factor of the learned representation changes the decoded observation exactly as varying the corresponding source factors would.
It can also be defined in terms of an encoder $\venc\colon\setobs\to\settgt$ (e.g., $\venc\circ\vgen=\vrep$).
However, when $\vgen$ is not invertible, $\venc$ may not exist or may lack desirable properties such as continuity\footnote{A practical example where continuity breaks is the \emph{responsibility problem} which arises when learning representations of unordered data, such as sets or objects within an image \citep{zhang2019fspool, hayes2023responsibility, mansouri2023object}. The permutation invariance makes the generator non-invertible.}.
Notably, an oracle generator would be trivially disentangled w.r.t.\ itself, even if not invertible.
Under additional regularity assumptions, disentanglement forms an equivalence relation (see Propositions~\ref{proposition:disentanglement_equivalence_class} and \ref{proposition:global_disentanglement_equivalence_class}), meaning that $\vgen$ and $\vdec$ represent equivalent generative models.

% TODO: Could be removed
% In our analysis, it is usually unimportant whether $\numslott$ is smaller than $\numslot$, since we assume that $\vgen$ and $\vdec$ are at least locally invertible in this work.
% Still, we distinguish between \emph{fully disentangled} decoders ($\numslott = \numslot$) and \emph{partially disentangled} ones ($\numslott < \numslot$).

More generally, $\vdec$ is \emph{locally disentangled} if, for every $\vsrc \in \setsrc$, there exists a neighborhood of $\vsrc$ where the restriction of $\vgen$ admits such a disentangled representation (see \Defref{definition:local_disentanglement}).
At first glance, local disentanglement may appear less significant than the global property.
However, under mild topological constraints the two notions coincide, even when $\vgen$ is not fully invertible (see next section).

\subsection{Identifiability}
\label{subsec:identifiability}

Identifiability asks whether a (locally) disentangled description is essentially unique given only observations in $\setobs$.
It characterizes when a learned representation must be disentangled.
% From a different perspective, it also establishes when a characterization of a factor is well-defined.
The following global result shows that, under mild topological assumptions, local disentanglement implies global disentanglement.
The key condition is connectedness of slices in the source space.
A $k$-slice is the subspace obtained by holding all but $k$ factors constant (see \Defref{definition:k_slice}).
Note that path-connectedness of a space and of its slices are related but independent notions (see Remark~\ref{remark:path-connected}).

% \begin{restatable}[Global Identifiability]{theorem}{theoremglobalidentifiabilitymain}\label{theorem:global_identifiability_main}
%     Let $\setsrc$ be an open subspace of the product manifold $\prod_{i=1}^{\numslot} \setsrc_i$, where each factor $\setsrc_i$ has positive dimension.  
%     Then local disentanglement extends to global disentanglement if:
%     \begin{enumerate}
%         \item $\vgen\colon \setsrc \to \setobs$ is locally injective.
%         \item $\setsrc$ is path-connected.
%         \item Every ($\numslot$-1)-slice of $\setsrc$ is path-connected.
%     \end{enumerate}
% \end{restatable}
\begin{restatable}[Global identifiability]{theorem}{theoremglobalidentifiabilitymain}\label{theorem:global_identifiability_main}
    Let $\setsrc$ be an open subspace of the product manifold $\prod_{i=1}^{\numslot} \setsrc_i$, where each factor $\setsrc_i$ has positive dimension. 
    Assume $\vdec:\settgt\to\setobs$ is locally disentangled w.r.t.\ $\vgen:\setsrc\to\setobs$.
    Then $\vdec$ is globally disentanglement if:
    \begin{enumerate}
        \item $\setsrc$ is simply connected\footnote{A simply connected set is path-connected and such that every loop can be continuously shrunk to a point.}.
        \item Every ($\numslot$-1)-slice of $\setsrc$ is path-connected.
        \item $\vgen\colon \setsrc \to \setobs$ is continuous and locally injective.
        \item $\vdec\colon \settgt \to \setobs$ is a covering map\footnote{A covering map is a continuous map such that every point of the codomain has a neighborhood whose preimage is a disjoint union of open sets, each mapped homeomorphically onto that neighborhood.}.
    \end{enumerate}
\end{restatable}

A proof is given in Appendix~\ref{sec:proof_global_identifiability}.
Informally, local disentanglement propagates along paths: since each factor can vary independently, and local injectivity prevents branching, local decompositions extend globally. 
Conditions~(1) and~(4) can be further relaxed (see Remark~\ref{remark:relaxing_simply_connected_and_covering}).

% Importantly, local disentanglement already excludes certain non-invertible generators.
% For instance, suppose two one-dimensional sources are arranged like a sheet of paper that is bent and glued together by $\vgen$.
% Wherever the sheet overlaps, the underlying subspaces must align (up to permutation and block-wise invertible transformations).
% One example is a generator mapping the sheet to a Möbius strip embedded in $\R^{\dimobs}$.

In many practical cases (e.g., convex open sets in $\R^{n}$), the topological conditions on $\setsrc$ hold automatically. 
% , and local injectivity follows from standard regularity assumptions.
Thus, the main challenge is usually to establish local disentanglement, and the remainder of the paper therefore focuses on local identifiability.

\section{Identifiability via Independent Mechanisms}
\label{sec:misa}

We now establish a general framework that certifies local disentanglement by analyzing how latent factors \emph{act} on the observation manifold through the generator $\vgen$.
The key difference from classical approaches is that independence is formulated at the level of the \emph{generative mechanism} rather than the latent probability law.
As a result, it accommodates almost arbitrary distributions, including those with statistical dependencies between and within subspaces.
% As a result, it is invariant to reweightings of the latent density and accommodates arbitrary distributions, including those with statistical dependencies between and within subspaces.
Importantly, there is no universal notion of mechanistic independence comparable to statistical independence.
Instead, we present a family of independence criteria -- disjointedness (Type~D), mutual non-inclusion (Type~M), sparsity gap (Type~S), and higher-order separability (Type~$\text{H}_n$) -- each of which leads to disentangled representations when mirrored in the learned representation.

\subsection{Local Identifiability of Type~D}
\label{subsec:type1}

We begin by slightly extending the result of \cite{brady2023provably} and rephrasing it within our framework.

\begin{definition}[Mechanistic independence of Type~D]\label{definition:independence_d_main}
    We say that $\setsrc_i$ and $\setsrc_j$ (equivalently, $\vsrc_i$ and $\vsrc_j$) are \emph{mechanistically independent of Type~D} if, for all $\vsrc \in \setsrc$, $\vu \in T_{\vsrc_i}\setsrc_i$, and $\vv \in T_{\vsrc_j}\setsrc_j$,
    \begin{equation}\label{eq:hadamard_orthogonality}
        \drv_i \vgen_{\vsrc}(\vu) \bullet \drv_j \vgen_{\vsrc}(\vv) = \vzero,
    \end{equation}
    where $T_{\vsrc_i}\setsrc_i$ denotes the tangent space of $\setsrc_i$ at $\vsrc_i$ and $\bullet$ denotes the element-wise (Hadamard) product in $\R^{\dimobs}$.\footnote{Throughout this work, we identify $T_{\vgen(\vsrc)}\setobs$ with its natural inclusion in $\R^{\dimobs}$.}
\end{definition}

We call this Type~D independence since Hadamard orthogonality expresses that different factors act on a \emph{disjoint} set of observation coordinates.
For example, in images, each factor controls a non-overlapping set of pixels.
Independence among the $\settgt_j$ is defined analogously via $\vdec$.

To ensure disentanglement, independence alone is insufficient: if a source factor $\setsrc_i$ can be decomposed into smaller, mutually independent components, a learned representation may split and recombine them arbitrarily.
This motivates the notion of reducibility.

\begin{definition}[Reducibility of Type~D]\label{definition:reducibility_d_main}
    We say that $\setsrc_i$ is \emph{reducible of Type~D} if there exists $\vsrc \in \setsrc$ such that $T_{\vsrc_i}\setsrc_i$ admits a nontrivial\footnote{``Nontrivial'' means $\dim(U),\dim(V)>0$.} direct-sum decomposition $T_{\vsrc_i}\setsrc_i = U \oplus V$ with the property that, for all $\vu \in U$ and $\vv \in V$,
    \begin{equation*}
        \drv_i \vgen_{\vsrc}(\vu) \bullet \drv_i \vgen_{\vsrc}(\vv) = \vzero.
    \end{equation*}
    If no such decomposition exists, we call $\setsrc_i$ \emph{irreducible of Type~D}.
\end{definition}

This coincides with reducibility as defined in \citep{brady2023provably} (see Proposition~\ref{prop:rank-graph-blocks-comp}), but makes the connection to Type~D independence explicit.
If $\setsrc_i$ is reducible we could split it at a point into smaller independent subspaces, and if a factor is one-dimensional it should always be irreducible.

\begin{restatable}[Local identifiability of Type~D]{theorem}{theoremidentifiabilitydmain}\label{theorem:identifiability_d_main}
Let $\vgen \colon \setsrc \to \setobs$ and $\vdec \colon \settgt \to \setobs$ be local diffeomorphisms\footnote{A diffeomorphism is a smooth bijection between manifolds with a smooth inverse. A local diffeomorphism is a map that restricts to a diffeomorphism on some neighborhood of each point.} with $\vgen(\setsrc)\subseteq\vdec(\settgt)$.
Then $\vdec$ is locally disentangled w.r.t.\ $\vgen$ if:
\begin{enumerate}
    \item $\setsrc \subseteq \prod_{i=1}^{\numslot} \setsrc_i$ is open, and all factors are Type~D independent and irreducible.
    \item $\settgt \subseteq \prod_{i=1}^{\numslott} \settgt_i$ is open with $\numslott = \numslot$, and the factors are independent of Type~D.
\end{enumerate}
If $\numslott < \numslot$, then $\vdec$ is locally partially disentangled.
\end{restatable}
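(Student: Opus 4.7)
Since $\vgen(\setsrc) \subseteq \vdec(\settgt)$ and both maps are local diffeomorphisms, we can invert $\vdec$ near any target point and define $\vrep := \vdec^{-1} \circ \vgen$ locally, a diffeomorphism between open subsets of the two product manifolds. The plan is to show its Jacobian has block-diagonal structure compatible with a surjection $\sigma \colon [\numslot] \to [\numslott]$, and then integrate to a genuine product decomposition. At each $\vsrc$, set $T^g_i := \drv_i \vgen_\vsrc(T_{\vsrc_i}\setsrc_i)$ and $T^d_j := \drv_j \vdec_{\vrep(\vsrc)}(T_{\vtgt_j}\settgt_j)$ as subspaces of $\R^{\dimobs}$ with coordinate supports $S^g_i$ and $S^d_j$. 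Type~D independence makes both families pairwise support-disjoint, and local injectivity yields isomorphisms $\drv_i \vgen \colon T_{\vsrc_i}\setsrc_i \to T^g_i$ and $\drv_j \vdec \colon T_{\vtgt_j}\settgt_j \to T^d_j$. The common tangent space $T := \drv\vgen(T_\vsrc \setsrc)$ thus admits two support-disjoint decompositions, $T = \bigoplus_i T^g_i = \bigoplus_j T^d_j$.

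\textbf{Forcing a singleton support-class (the main obstacle).} Writing $\drv\vrep_\vsrc(\vu) = \sum_j \vw_j(\vu)$ with $\vw_j(\vu) \in T_{\vtgt_j}\settgt_j$ for $\vu \in T_{\vsrc_i}\setsrc_i$, the identity $\drv\vgen_\vsrc(\vu) = \sum_j \drv_j\vdec(\vw_j(\vu)) \in T^g_i$ expresses a vector supported in $S^g_i$ as a sum of terms with pairwise disjoint supports $S^d_j$; injectivity of each $\drv_j\vdec$ then forces $\vw_j(\vu) = \vzero$ whenever $S^g_i \cap S^d_j = \emptyset$. Let $J_i := \{j : S^g_i \cap S^d_j \neq \emptyset\}$; the crux is showing $|J_i| = 1$. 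Suppose instead $|J_i| \geq 2$, and partition $J_i = A \sqcup B$ nontrivially, letting $S_A := \bigcup_{j \in A}(S^g_i \cap S^d_j)$ and $S_B$ analogously. For any $\vt \in T^g_i \subseteq T$, decompose $\vt = \sum_j \vt_j$ with $\vt_j \in T^d_j$; then $\vt|_{S_A} = \sum_{j \in A}\vt_j \in T$, and since $\vt|_{S_A}$ is supported in $S_A \subseteq S^g_i$ while $T = \bigoplus_{i'} T^g_{i'}$ also has disjoint supports $S^g_{i'}$, the components for $i' \neq i$ vanish and $\vt|_{S_A} \in T^g_i$. This produces a nontrivial support-disjoint direct sum $T^g_i = T^g_{iA} \oplus T^g_{iB}$; pulling back along the isomorphism $\drv_i\vgen_\vsrc$ gives a nontrivial decomposition $T_{\vsrc_i}\setsrc_i = U \oplus V$ with $\drv_i\vgen(\vu) \bullet \drv_i\vgen(\vv) = \vzero$ for all $\vu \in U, \vv \in V$, contradicting irreducibility of $\setsrc_i$. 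The delicate point is precisely this lift: ensuring that an observation-level support partition becomes a genuine direct sum on the source side, which requires the dual use of both decompositions of $T$.

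\textbf{From infinitesimal to local decomposition.} With $|J_i(\vsrc)| = 1$ at every $\vsrc$, define $\sigma(i)$ as the unique element of $J_i(\vsrc)$. The nonzero-entry sets of Jacobians are upper semicontinuous in $\vsrc$, so $S^g_i$ and $S^d_j$ can only grow under small perturbation; combined with $J_i$ being a singleton everywhere, this forces $\sigma$ to be locally constant near each $\vsrc^*$. On a small product neighborhood of $\vsrc^*$ the vanishing partials $\partial \vrep^j / \partial \vsrc_i = \vzero$ for $i \notin \sigma^{-1}(j)$ then integrate (using connectedness of slices of the product neighborhood) to show that each $\vrep^j$ depends only on $(\vsrc_i)_{i \in \sigma^{-1}(j)}$, giving the required decomposable form. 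Finally, surjectivity of $\sigma$ follows from $\bigoplus_i \drv\vrep(T_{\vsrc_i}\setsrc_i) = T_{\vrep(\vsrc)}\settgt = \bigoplus_j T_{\vtgt_j}\settgt_j$: any $j$ outside the image of $\sigma$ would leave the corresponding summand $T_{\vtgt_j}\settgt_j$ uncovered.
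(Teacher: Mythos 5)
Your proposal is correct, and while it follows the same overall template as the paper's proof (differentiate the local transition map, use Type~D independence of both factorizations together with irreducibility of the source to force a block structure, then integrate the vanishing partials on a product neighborhood), it executes the two key steps differently. First, you work with $\vrep=\vdec^{-1}\circ\vgen$ directly, which immediately produces the decomposable map demanded by the definition of local disentanglement; the paper instead analyzes $\vrepinv=\vgen^{-1}\circ\vdec$ and must invoke a separate symmetry lemma (Lemma~\ref{lemma:blockwise_inverse}) to flip the block structure back into a decomposable $\vrep$. Second, for the central step the paper fixes product bases, permutes rows so that $\drv\vgen$ becomes block-row-diagonal, derives $(\mA_{i,i}\mB_{i,k})\rkron(\mA_{i,i}\mB_{i,-k})=\vzero$ via the face-splitting product, and then splits into cases on $\dim(\setsrc_i)$; your coordinate-free argument instead exploits the two support-disjoint decompositions $T=\bigoplus_i T^g_i=\bigoplus_j T^d_j$ of the common tangent space, shows that restriction to the support classes $S_A,S_B$ maps $T^g_i$ into itself, and obtains the forbidden nontrivial support-disjoint splitting $T^g_i=T^g_{iA}\oplus T^g_{iB}$ in one stroke (the one-dimensional case is absorbed automatically, since a line admits no nontrivial direct sum). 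This buys a cleaner, basis-free argument at the cost of two steps you state tersely but which do go through: the fact that each $\vt_j$ in $\vt=\sum_j\vt_j$ is supported in $S^d_j\cap S^g_i$ (needed for $\vt|_{S_A}=\sum_{j\in A}\vt_j$, and a consequence of pairwise disjointness of the $S^d_j$), and the final integration step, which is exactly the content of the paper's Lemma~\ref{lemma:function_dependence}. Neither is a gap.
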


Intuitively, if each source factor influences a disjoint set of observation coordinates, and no finer decomposition is possible, then any learned representation that also acts on disjoint coordinates recovers the true source factors (up to block-wise invertible transformations and permutations).

This result generalizes Theorem~1 of \citep{brady2023provably} to partial disentanglement and non-invertible generators (when taking Theorem~\ref{theorem:global_identifiability_main} into account).
A proof is given in Appendix~\ref{sec:proof_identifiability_d}.
Interestingly, all local identifiability proofs in this paper follow a common template:  
starting from the local reconstruction identity $\vdec=\vgen\circ\vrepinv$ (where $\vrepinv \coloneqq \vgen^{-1}\circ\vdec$ exists locally since both maps are local diffeomorphisms), one applies the independence conditions to constrain interactions between source and target factors.
If a source factor interacted with multiple target factors, their independence would force a decomposition of the source factor, contradicting irreducibility.
Occasionally, additional assumptions are needed to further restrict the function class.

Since Type~D independence requires that no observation coordinate is affected by two factors, a natural question is whether this can be relaxed to allow limited overlap while still achieving identifiability.
We next express this via supports (the index set of nonzero elements, denoted with $\supp(\cdot)$) of Jacobians.

% Identify $T_{\vgen(\vsrc)}\setobs$ with its inclusion in $\mathbb{R}^{\dimobs}$.
Select a product basis $(\vu_1, \dots, \vu_{\dimsrc})$ for $T_{\vsrc}\setsrc$; define $\Omega_{i}(\vsrc) \coloneqq \supp(\drv\vgen_{\vsrc}(\vu_i))$ for the $i$-th basis vector; and let $\concomp_j$ be the index set of basis vectors of $T_{\vsrc_j}\setsrc_j$. 
% Finally, let $\concomp_j$ be the index set of basis vectors of $T_{\vsrc_j}\setsrc_j$.
Then Type~D independence can be reformulated as
\begin{equation}
    \forall i\neq j,\, \forall a \in \concomp_i,\, \forall b \in \concomp_j:\quad 
    \Omega_{a}(\vsrc) \cap \Omega_{b}(\vsrc)=\varnothing.
\end{equation}

As long as the basis respects the product structure, the particular choice does not matter.
In the next two sections, we show how this condition can be relaxed, either via \emph{mutual non-inclusion} or through a \emph{sparsity gap}.

\subsection{Local Identifiability of Type~M}

Define the \emph{mutual non-inclusion} relation between sets $\gA, \gB \subseteq [\dimobs]$ as
\(
\gA \mni \gB \coloneqq \gA \nsubseteq \gB \;\wedge\; \gA \nsupseteq \gB,
\)
that is, the sets may intersect, but neither is contained in the other.

\begin{definition}[Mechanistic independence of Type~M]\label{definition:independence_s_main}
We say that $\setsrc_i$ and $\setsrc_j$ are \emph{mechanistically independent of Type~M} if, for every $\vsrc \in \setsrc$,
\begin{equation}
    \forall i\neq j,\, \forall a \in \concomp_i,\, \forall b \in \concomp_j:\quad 
    \Omega_{a}(\vsrc) \mni \Omega_{b}(\vsrc).
\end{equation}
\end{definition}

Type~M independence allows observation coordinates to be influenced jointly by multiple factors as long as neither support fully contains the other.  
In image data, for example, different factors may affect intersecting sets of pixels, allowing partial occlusion, shadows and reflections.
Unlike Type~D independence, this notion depends on the choice of basis for $T_{\vsrc}\setsrc$.  
To make it meaningful, we restrict to $\setsrc \subseteq \R^{\dimsrc}$ (only for Type~M), where $T_{\vsrc}\R^{\dimsrc}$ carries a canonical basis that aligns with the product structure.  
Reducibility is then expressed directly in these fixed coordinates.

\begin{definition}[Reducibility of Type~M]\label{definition:reducibility_s_main}
The component $\setsrc_i$ is \emph{reducible of Type~M} if there exist $\vsrc \in \setsrc$ and a partition $\concomp_i = \gA \cup \gB$ such that
\[
    \forall a \in \gA,\, \forall b \in \gB:\quad 
    \Omega_{a}(\vsrc) \mni \Omega_{b}(\vsrc).
\]
\end{definition}

\begin{restatable}[Local identifiability of Type~M]{theorem}{theoremidentifiabilitysmain}\label{theorem:identifiability_s_main}
Let $\vgen \colon \setsrc \to \setobs$ and $\vdec \colon \settgt \to \setobs$ be local diffeomorphisms with $\vgen(\setsrc)\subseteq\vdec(\settgt)$.
Then $\vdec$ is locally disentangled w.r.t.\ $\vgen$ if:
\begin{enumerate}
    \item $\setsrc \subseteq \R^{\dimsrc}$ is open, and the factors are Type~M independent and irreducible.
    \item $\settgt \subseteq \R^{\dimsrc}$ is open, and the factors are independent of Type~M.
    \item For all \(\vsrc \in \setsrc\) and \(\vtgt \in \settgt\) with $\vgen(\vsrc)=\vdec(\vtgt)$,
    \begin{equation}\label{eq:l0_inequality}
        \|\jac_{\vdec}(\vtgt)\|_0\leq\|\jac_{\vgen}(\vsrc)\|_0.
    \end{equation}
    \item For all such pairs,
    \begin{equation}\label{eq:union_of_supports}
        \widehat{\Omega}_k(\vtgt) = \bigcup_{i \in \supp(\mB_{:,k})} \Omega_{i}(\vsrc),
        %\qquad \text{where } \mB\coloneqq\jac_{\vgen^{-1}\circ\vdec}(\vtgt).
    \end{equation}
    where $\mB\coloneqq\jac_{\vgen^{-1}\circ\vdec}(\vtgt)$ and $\widehat{\Omega}_k$ mirrors $\Omega_i$ for $\vdec$.
\end{enumerate}
\end{restatable}

This theorem generalizes Theorem~3.1 of \citep{zheng2023generalizing} (itself an extension of \citep{zheng2022identifiability}) to multidimensional factors (see Proposition~\ref{proposition:generalization_sparsity_and_beyond} for a detailed comparison).  
Statistical independence of the sources is not required.  
Assumptions (1)–(2) mirror those in Theorem~\ref{theorem:identifiability_d_main}; condition (3) motivates a sparsity regularizer; and condition (4) rules out pathological cases and is implied by condition \textit{(i)} in \citep{zheng2023generalizing}.
It usually holds when $\vgen$ is sufficiently nonlinear, though a failure mode is illustrated in Example~\ref{ex:types_showcase}, case $\mB$, where the Jacobian is constant on $\setsrc$.

\subsection{Local Identifiability of Type~S}\label{sec:local_identifiability_type_s}

We now return to the setting where $\setsrc$ is a smooth manifold and replace the mutual non-inclusion assumption with a \emph{sparsity gap} criterion.  
Among all coordinate systems, the basis aligned with the true factor decomposition yields the sparsest first-order action of the generator.

For $\vsrc \in \setsrc$, let $\rho_{\mathfrak{B}}^+(\vsrc)$ be the minimal $\ell_0$-norm of the matrix representing $\drv \vgen_{\vsrc} \colon T_{\vsrc}\setsrc \to T_{\vgen(\vsrc)}\setobs$ when the domain basis is aligned with the decomposition 
% $\mathfrak{B} \coloneqq \bigoplus_{i \in [\numslot]} T_{\vsrc_i}\setsrc_i$.
\[
    \mathfrak{B} \coloneqq \bigoplus_{i \in [\numslot]} T_{\vsrc_i}\setsrc_i.
\]
% and identify $T_{\vgen(\vsrc)}\setobs$ with its inclusion in $\R^{\dimobs}$.  
Conversely, let $\rho_{\mathfrak{B}}^-(\vsrc)$ be the infimum of the $\ell_0$-norm over all bases of $T_{\vsrc}\setsrc$ that \emph{do not} respect $\mathfrak{B}$.

\begin{definition}[Mechanistic independence of Type~S]\label{definition:independence_s2_main}
The subspaces $\{\setsrc_i\}_{i=1}^{\numslot}$ are \emph{mechanistically independent of Type~S} if, 
for every $\vsrc \in \setsrc$,
\begin{equation}\label{eq:sparsity_gap}
    \rho_{\mathfrak{B}}^+(\vsrc) \;<\; \rho_{\mathfrak{B}}^-(\vsrc).
\end{equation}
\end{definition}

Viewing the Jacobian as a dictionary that maps infinitesimal latent directions to observation directions, Type~S independence states that the sparsest such dictionary (in the $\ell_0$ sense) is attained precisely when the basis aligns with the true factorization. Any misalignment necessarily incurs a strict sparsity gap.

If the supports of different components are disjoint, any mixing of partial derivatives can only enlarge the support, since no cancellations are possible. 
In this case, \Eqref{eq:sparsity_gap} holds trivially. 
Thus Type~D independence is a special case of Type~S independence.
The sparsity gap, however, is considerably stronger: it remains valid even when the supports substantially overlap.
For instance, suppose we have one-dimensional sources where each support $\Omega_i(\vsrc)$ overlaps with the others by less than half of its elements. Even if a misaligned basis were tuned so that every shared element canceled perfectly (if at all possible), the total number of nonzeros would still increase. Thus, the sparsity gap persists under this optimal misaligned (but still suboptimal) basis transformation.
In higher-dimensional subspaces, the situation becomes more intricate, since inter-cancellations within block columns are possible.
In a sense, the sparsity gap captures all such potential cancellations and characterizes the theoretical limiting case. 
As before, irreducibility rules out internal decompositions (see \Defref{definition:reducibility_s2}).
% % TODO: Could be removed
% \begin{definition}[Reducibility of Type~S]%\label{definition:reducibility_s2}
% We say that the component $\setsrc_i$ is \emph{reducible of Type~S} 
% if there exist $\vsrc \in \setsrc$ and a nontrivial decomposition $T_{\vsrc_i}\setsrc_i \;=\; U \oplus V \eqqcolon \mathfrak{B}_i$ such that
% \[
%     \rho_{\mathfrak{B}_i}^+(\vsrc) \;<\; \rho_{\mathfrak{B}_i}^-(\vsrc).
% \]
% % Otherwise, we call $\setsrc_i$ \emph{irreducible of Type~S}.
% \end{definition}
Example~\ref{ex:types_showcase} discusses Type~M/S independence and reducibility in detail.

\begin{restatable}[Local identifiability of Type~S]{theorem}{theoremidentifiabilityssmain}\label{theorem:identifiability_s2_main}
Let $\vgen:\setsrc\to\setobs$ and $\vdec:\settgt\to\setobs$ be local diffeomorphisms with $\vgen(\setsrc)\subseteq \vdec(\settgt)$.
Then $\vdec$ is locally disentangled w.r.t.\ $\vgen$ if:
\begin{enumerate}
\item $\setsrc\subseteq\prod_{i=1}^{\numslot}\setsrc_i$ is open, and the factors $\setsrc_i$ are Type~S independent and irreducible.
\item $\settgt\subseteq\prod_{j=1}^{\numslott}\settgt_j$ is open with $\numslott = \numslot$, and the factors $\settgt_j$ are independent of Type~S.
\end{enumerate}
If $\numslott < \numslot$, then $\vdec$ is locally partially disentangled.
\end{restatable}

Intuitively, identifiability follows by exploiting the strict sparsity gap in \eqref{eq:sparsity_gap}.  
While fairly general, \Eqref{eq:sparsity_gap} is intractable to optimize in practice. 
In Section~\ref{sec:experiments} we investigate whether \emph{compositional contrast} \citep{brady2023provably} can serve as a suitable surrogate loss.

\subsection{Local Identifiability of Type~H}
\label{subsec:typen}

Lastly, we simplify and generalize the asymmetric interaction principle of \citep{brady2024interaction}, subsuming as a special case the additive setting of \citep{lachapelle2023additive}.

\begin{definition}[Mechanistic independence of Type~$\text{H}_n$]\label{definition:independence_h_main}
Let $\setsrc \subseteq \prod_{i=1}^{\numslot} \setsrc_i$ be a smooth manifold, and let 
$\vgen \colon \setsrc \to \setobs$ be of class $C^n$ with $n \geq 2$.
We say that $\setsrc_i$ and $\setsrc_j$ are \emph{mechanistically independent of Type~$\text{H}_n$} if, for all 
$\vsrc \in \setsrc$,
\begin{equation}
    \drv_{i,j}^n \vgen_{\vsrc} = \vzero.
\end{equation}
\end{definition}

For $n=2$, this requires that all cross-Hessian blocks vanish, implying additivity as in \citep{lachapelle2023additive}.  
Irreducibility is defined analogously (see \Defref{definition:reducibility_2}).

To derive disentanglement, we additionally constrain the function class via \emph{separability}.

\begin{definition}[Separability of $n$-th order]\label{definition:separability_h_main}
We say that $\vgen \colon \setsrc \to \setobs$ is \emph{separable of order $n \geq 2$} if there exists $\vsrc \in \setsrc$ such that, for all $i\in[\numslot]$, the image of $\drv_{i,i}^n \vgen_{\vsrc}$ intersects trivially with
\begin{equation*}
    \operatorname{span}\Bigl\{
        \drv_{j,j}^n \vgen_{\vsrc},\ j \ne i;\ 
        \drv^k \vgen_{\vsrc},\ 1 \le k \le n-1
    \Bigr\}.
\end{equation*}
\end{definition}

Separability is closely related to \emph{sufficient independence} in \citep{brady2024interaction} and \emph{sufficient nonlinearity} in \citep{lachapelle2023additive}, but is slightly weaker: it allows arbitrary interactions among lower-order derivatives and within each block $\drv_{i,i}^n \vgen_{\vsrc}$.

\begin{restatable}[Local identifiability of Type~$\text{H}_n$]{theorem}{theoremidentifiabilityhmain}\label{theorem:identifiability_h_main}
Let $\vgen \colon \setsrc \to \setobs$ and $\vdec \colon \settgt \to \setobs$ be local $C^n$-diffeomorphisms with $n \geq 2$ satisfying $\vgen(\setsrc)\subseteq\vdec(\settgt)$.  
Then $\vdec$ is locally disentangled w.r.t.\ $\vgen$ if:
\begin{enumerate}
    \item $\setsrc \subseteq \prod_{i=1}^{\numslot} \setsrc_i$ is open, and the factors are Type~$\text{H}_n$ independent and irreducible.
    \item $\settgt \subseteq \prod_{j=1}^{\numslott} \settgt_j$ is open with $\numslott = \numslot$, and the factors are independent of Type~$\text{H}_n$.
    \item $\vgen$ is separable of order $n$.
\end{enumerate}
If $\numslott < \numslot$, then $\vdec$ is locally partially disentangled.
\end{restatable}

Compared to \citep{brady2024interaction}, our formulation highlights that source factors should be taken as irreducible, which we argue is a necessary and natural requirement.  
This perspective eliminates any dependence on $(n{+}1)$-th derivatives (which may not exist) and avoids the use of \emph{equivalent generators}.  
As with our other results, the conclusion also applies to non-invertible generators, and we provide an explicit proof for $n>3$ (corresponding to $n>2$ in their slightly different notation).

\section{Discussion}

\subsection{Hierarchy of Independence}

The different independence criteria form a natural hierarchy (see Figure~\ref{fig:independence_types}).  
Type~D independence is the strongest: it implies all others.  
Differentiating Type~D yields Type~$\text{H}_2$, and further differentiation gives Type~$\text{H}_3$, and so on.  
Type~M follows since disjointness is a special case of mutual non-inclusion.  
Type~S is also implied: in the sparsest product-respecting basis, Type~D ensures that supports are disjoint, and any linear combination of column vectors from different blocks strictly enlarges the support, creating a sparsity gap.  
Finally, Type~S implies Type~M independence when working in the sparsest product-splitting basis (but not in an arbitrary product-aligned basis).

\begin{figure}[ht]
    \centering
    \includegraphics[width=0.7\textwidth]{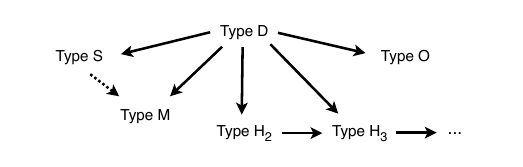}
    \caption{Relations among mechanistic independence types. Arrows indicate logical implications. The dotted arrow holds only in the sparsest product-splitting basis.}
    \label{fig:independence_types}
\end{figure}

Since reducibility describes whether a factor can be split into smaller independent subspaces, the implication relations among reducibility types largely mirror those among independence types, except for Type~M, which depends on the choice of basis.

This reveals a tradeoff between the identifiability results for Type~D and Type~S: by enforcing stronger coherence within each factor, we can tolerate stronger interactions between different factors.  
Relations among the other identifiability results are less direct, since they require additional assumptions (cf. the asymmetric interaction principle of \citep{brady2024interaction}).

As with statistical independence, one must distinguish between pairwise and mutual independence.  
For Types D, M, and $\text{H}_n$, the two coincide, but for Type~S they differ in general.  
While mutual independence always implies pairwise independence, Example~\ref{ex:types_showcase}, case $\mB$, shows a Jacobian where factors are pairwise Type~S independent but not mutually so.

\subsection{Factors of Variation as Connected Graph Components}

The factors of variation can also be viewed through graph structures.

\begin{definition}[Graph structures]\label{definition_graph}
Let $\vgen \colon \setsrc \to \setobs$ be sufficiently smooth, and let $B=(\vu_1, \dots, \vu_{\dimsrc})$ be a basis for $T_{\vsrc}\setsrc$.  
Define the following graphs:
\begin{enumerate}
    \item $\gG^D(\vsrc, B)=([\dimsrc],\,\gE^D)$ with 
    \(
        \gE^D=\{(i,j)\in[\dimsrc]^2 \mid \drv\vgen_{\vsrc}(\vu_i)\bullet\drv\vgen_{\vsrc}(\vu_j)\neq\vzero\}
        % =\{(i,j)\in[\dimsrc]^2 \mid \Omega_i \cap \Omega_j \neq \emptyset\}.
    \)
    \item $\gG^{H_2}(\vsrc, B)=([\dimsrc],\,\gE^{H_2})$ with 
    \(
        \gE^{H_2}=\{(i,j)\in[\dimsrc]^2 \mid \drv^2\vgen_{\vsrc}(\vu_i, \vu_j) \neq \vzero\}.
    \)
    \item $\gG^M(\vsrc, B)=([\dimsrc],\,\gE^M)$ with 
    \(
        \gE^M=\{(i,j)\in[\dimsrc]^2 \mid \Omega_i \nmni \Omega_j\}.
    \)
\end{enumerate}
\end{definition}

Consider $\gG^D$.  
In any product-splitting basis, the index sets $\concomp_i$ and $\concomp_j$ for $i \neq j$ are disconnected subsets of the vertex set.  
Type~D irreducibility ensures that no $\concomp_i$ can be further split into disconnected components by using a different basis for $T_{\vsrc_i}\setsrc_i$.  
Thus, the Type~D independent and irreducible factors correspond exactly to the connected components of $\gG^D$.  
Moreover, under the assumptions of Type~D independence and irreducibility, $\gG^D$ cannot have more than $\numslot$ connected components in any basis (see Proposition~\ref{prop:rank-graph-blocks-comp}), and in any non-aligned basis it has strictly fewer.  
Hence, Type~D independence and irreducibility could alternatively be characterized by a gap in the number of connected components between aligned and misaligned bases, paralleling the sparsity-gap perspective of Type~S.

A similar statement holds for $\gG^{H_2}$.  
If $\vgen$ is second-order separable and satisfies Type~$\text{H}_2$ independence and irreducibility, then no basis change increases the number of connected components, and any misaligned basis strictly reduces it.
For $\gG^M$, no analogous conclusion can be drawn, since its definition depends on a specific basis.
Nevertheless, the identification of factor subspaces with connected components still applies, though only in the standard basis of $\R^{\dimsrc}$.

This graph-based perspective also connects to recent work on identifiability for local (Euclidean) isometries \citep{horan2021unsupervised}, conformal maps, and orthogonal coordinate transformations \citep{ima, buchholz2022function, ghosh2023independent}.  
Each of these function classes can be characterized in terms of their Jacobians: the columns of the Jacobian are mutually orthogonal, differing only in whether they have unit norm, equal norm, or arbitrary norms.  
By analogy with Type~D independence, we may define \emph{Type~O independence} through \emph{orthogonality} in the inner-product sense:
\[
\forall i \neq j:\quad \drv_i \vgen_{\vsrc}(\vu) \cdot \drv_j \vgen_{\vsrc}(\vv) = \vzero.
\]
Constructing a graph analogous to $\gG^D$, but replacing the Hadamard product with the inner product, yields totally disconnected graphs for these maps when the source factors are one-dimensional.  

However, without additional statistical assumptions, identifiability remains limited: even in the smallest class (local isometries), it holds only up to affine transformations.  
Therefore, to achieve the stronger notion of identifiability pursued in this paper, extra assumptions on the latent distribution are required, even for one-dimensional factors.  
Nevertheless, such graph constructions may provide a useful tool when combining mechanistic and stochastic independence to recover multidimensional factors.

\subsection{Applicability and Limitations of Mechanistic Independence}

We illustrate the requirements for Type~D/M/S/$\text{H}_n$ mechanistic independence in the context of image data.  
Assume that individual latent factors $\vsrc_i$ encode distinct objects in a scene (e.g., position, shape, color), and let $\vgen$ denote the rendering process.

Type~D independence fails whenever two latent factors influence the same pixel. This excludes shadows, reflections, transparency, and partial occlusions. 
Type~H\textsubscript{2} independence fails when the generator cannot be decomposed additively, i.e., when $\vgen(\vsrc) \neq \sum_{i \in [\numslot]} \vsubgen^{(i)}(\vsrc_i)$ for any set of functions $\vsubgen^{(i)}$. Although this assumption is strictly weaker than Type~D independence, it still generally disallows partial occlusions, shadows, and reflections. In principle, it permits semi-transparency, but only in the absence of refraction and only when colors mix exactly additively.
This condition is further weakened for $n>2$, but in practice, the calculation of higher-order derivatives is not feasible.

Type~M independence fails when the set of pixels affected by a latent coordinate in one group is strictly contained in the set affected by a latent coordinate in another group; for example, when an object is visible solely through its reflection.

Type~S independence is more subtle. For one-dimensional slots (i.e., when each object is parameterized by a single latent variable), it can fail only when the fraction of shared affected pixels across slots exceeds one half (lower bound). As already mentioned, it is difficult to convey a similarly strong intuition for multidimensional slots.

\section{Experiments}\label{sec:experiments}

In an experiment mirroring \citet{brady2023provably}, we investigated whether the
\emph{compositional contrast}
\begin{equation*}
    C_{\text{comp}}(\vdec, \vtgt)
    = \sum_{n=1}^{\dimobs} \sum_{i=1}^{\numslot} \sum_{j=i+1}^{\numslot}
      \left\|\frac{\partial \dec_n}{\partial \vtgt_i}(\vtgt)\right\|\,
      \left\|\frac{\partial \dec_n}{\partial \vtgt_j}(\vtgt)\right\|
\end{equation*}
can serve as an effective surrogate loss for enforcing Type~S independence.
This question is motivated by the observation that some generators have latent components that are Type~S independent but not Type~D independent, yet minimizing \(C_{\mathrm{comp}}\) can nonetheless enforce Type~S independence in the learned representation (see Example~\ref{ex:ccomp_min_s}).
As argued in Section~\ref{sec:local_identifiability_type_s}, Type~S independence is likely to hold when only a small number of observation dimensions are influenced by multiple latent factors (slots).

To examine this, we generate synthetic datasets with varying degrees of overlap between the sets of observation dimensions affected by different slots, as illustrated on the right in Figure~\ref{fig:recon_cc_sis}.
Latent variables are sampled from a standard normal distribution, and observations are produced by passing them through an invertible MLP whose Jacobian is constructed to have the desired support structure.
Only when the overlap is \(0\%\) does the generator satisfy Type~D independence.

We train an autoencoder with reconstruction loss and compositional contrast,
\(\gL = \gL_{\text{recon}} + \lambda C_{\text{comp}}\),
across five random seeds, using \(\numslott = \numslot \in \{2,3,5\}\) slots and regularization strengths \(\lambda \in \{10^{-2}, 1\}\).
For comparability across hyperparameters, we normalize \(C_{\text{comp}}\) (see Appendix~\ref{sec:experimental_details} for details).

Figure~\ref{fig:recon_cc_sis} indicates that, for sufficiently small overlaps, \(C_{\text{comp}}\) acts as a reliable proxy for Type~S independence.
However, as the overlap ratio increases, the likelihood of convergence to bad local minima also grows.
Identifying more robust surrogate losses remains an open challenge, which we leave for future work.
Further experiments can be found in Appendix~\ref{sec:experimental_details}.

\begin{figure}[ht]
    \centering
    \begin{minipage}[c]{0.7\textwidth}
        \centering
        \includegraphics[width=\linewidth]{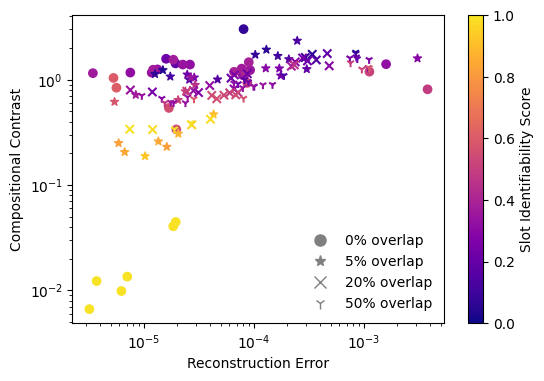}
    \end{minipage}\hspace{0.05\textwidth}%
    \begin{minipage}[c]{0.24\textwidth}
        \centering
        \includegraphics[width=\linewidth]{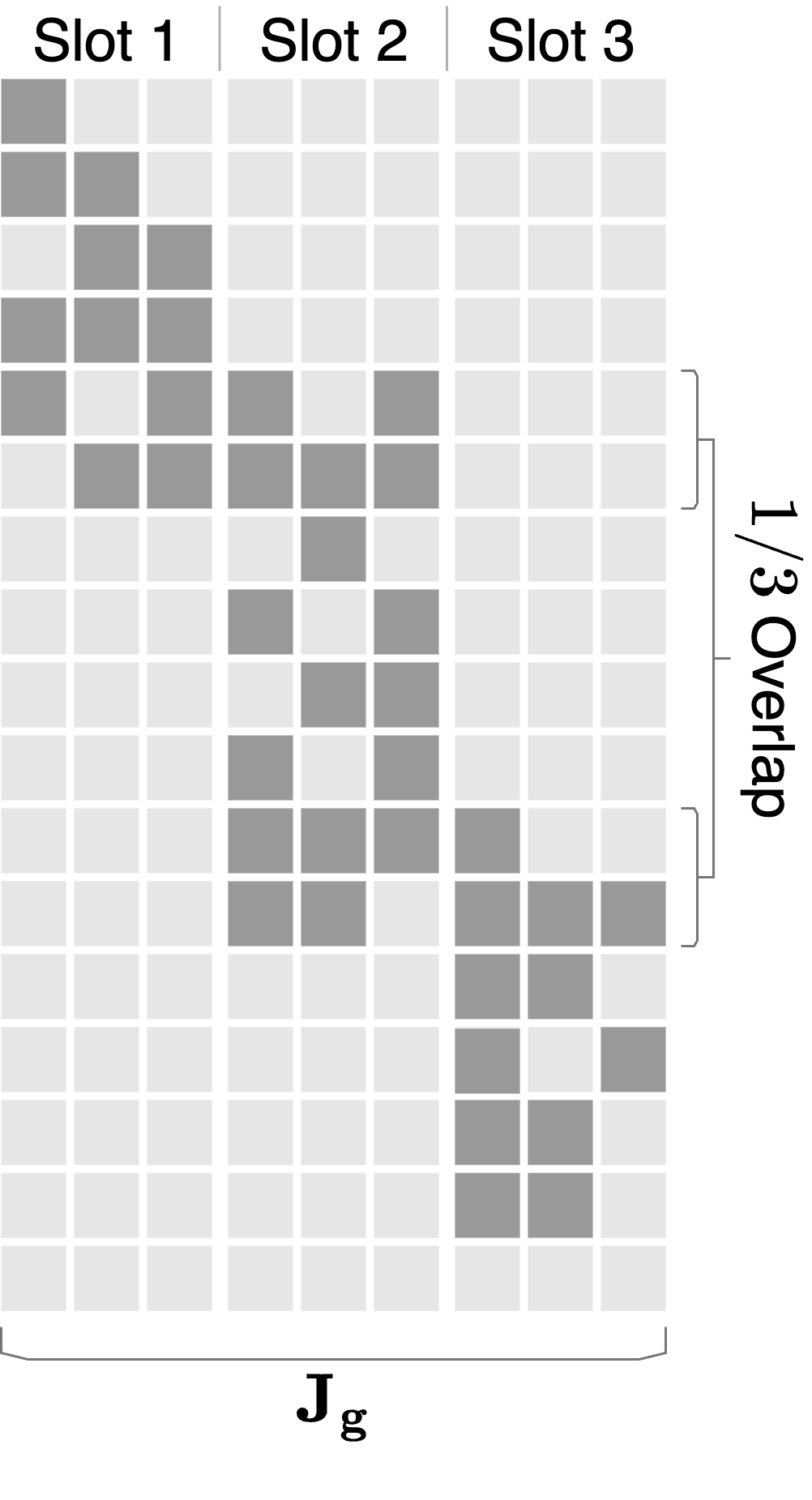}
    \end{minipage}
    \caption{Slot Identifiability Score (SIS) over reconstruction loss and compositional contrast for different support overlaps.}
    \label{fig:recon_cc_sis}
\end{figure}

\section{Related Work}

% The central goal of this work is to demonstrate when mechanistic independence can serve as a stand-alone guiding principle for the identifiability of disentangled representations, thereby unifying prior results \citep{brady2023provably,lachapelle2023additive,brady2024interaction} and partially generalizing the works of \citep{zheng2022identifiability,zheng2023generalizing}. We further show connections to Independent Mechanism Analysis (IMA) and related orthogonality assumptions on the decoder Jacobian \citep{horan2021unsupervised,ima,reizinger2022embrace,buchholz2022function}.

% Beyond these approaches, a number of other works establish identifiability by imposing structural constraints. 
Beyond the already mentioned approaches, a number of other works establish identifiability by imposing structural constraints. 
\citet{moran2021identifiable} prove identifiability in sparse VAEs by enforcing sparsity in the decoder; while our framework does not subsume theirs, their synthetic dataset can also be shown to satisfy Theorems~\ref{theorem:identifiability_s_main} and~\ref{theorem:identifiability_s2_main}.
\citet{rhodes2021local} provide empirical evidence that penalizing the decoder Jacobian with an $\ell_1$-norm helps break rotational symmetries in VAEs -- our results can be seen as offering the corresponding theoretical justification.
In contrast, \citet{lachapelle2022disentanglement} obtain identifiability of latent factors by enforcing sparsity on causal mechanisms, while \citet{reizinger2023jacobian} connect sparsity patterns in the Jacobian to identifiable causal graphs in nonlinear ICA.

A distinctive aspect of our work is that we establish identifiability at the subspace level, whereas most prior results assume that each latent factor is captured in a single dimension.
Recent research has also examined block-identifiability of latent variables under paired observations.
These include content–style separation via data augmentation \citep{von2021self} or multiple views \citep{daunhawer2023identifiability}, block-disentanglement under sparse perturbations \citep{fumero2021learning,ahuja2022weakly,mansouri2023object}, and temporal formulations leveraging causal graphs \citep{lachapelle2022partial,lachapelle2024nonparametric}.

\section{Conclusion}

In this work, we have developed a unifying framework for disentanglement and identifiability based on \emph{mechanistic independence}.  
By formulating independence at the level of generative mechanisms rather than distributions, we obtained identifiability results for subspaces that hold under minimal assumptions on the latent density and extend to nonlinear, non-invertible generators.
Our analysis revealed a hierarchy of independence criteria
% ranging from disjointness (Type~D) to mutual non-inclusion (Type~M) to sparsity (Type~S) and higher-order separability (Type~$\text{H}_n$).
, and clarified how these conditions trade off internal factor coherence against cross-factor interactions.
% We also showed how connected components in graphs naturally characterize the structure of latent factors.
Overall, the results establish when disentangled representations are identifiable without relying on statistical assumptions, providing a theoretical foundation for future work that explores other mechanistic independence criteria or combines mechanistic and stochastic assumptions.
% Another key direction for future research is to explore scenarios where the number and size of the learned latent factors differ from the ground-truth factors.

% \subsubsection*{Author Contributions}
% If you'd like to, you may include  a section for author contributions as is done
% in many journals. This is optional and at the discretion of the authors.

% \subsubsection*{Acknowledgments}
% Use unnumbered third level headings for the acknowledgments. All
% acknowledgments, including those to funding agencies, go at the end of the paper.

\bibliography{iclr2026_conference}
\bibliographystyle{iclr2026_conference}

\newpage

% \title{Mechanistic Independence: A Principle for Identifiable Disentangled Representations\\(Supplementary Material)}
% \maketitle

\appendix
\section*{Notation Index}\label{sec:notation}

\begin{multicols}{2} % Example with 2 columns
\setlength{\parindent}{0pt} % Remove standard first-line indentation
\setlength{\parskip}{2pt} % Add spacing between paragraphs
\everypar={\hangindent=3em}
    
$\displaystyle a$ \quad A scalar\par
$\displaystyle \va$ \quad A vector\par
$\displaystyle \mA$ \quad A matrix\par
$\displaystyle \gA$ \quad A set\par
$\displaystyle a_i$ \quad $i$-th coordinate of $\va$ (index starting at 1) \par
$\displaystyle \va_i$ \quad $i$-th factor of $\va$ if $\va$ lives in a product space \par
$\displaystyle a_{ij}$ \quad $j$-th coordinate of the $i$-th factor of $\va$ \par
% $\displaystyle \Conf_{n}$ \quad The $n$-th ordered configuration space\par
% $\displaystyle \Conf_{n}(\gA)$ \quad The $n$-th ordered configuration space of $\gA$\par
$\displaystyle \dimobs$ \quad Dimensionality of observations\par
$\displaystyle \dimsrc$ \quad Dimensionality of ground-truth latents\par
$\displaystyle d_i$ \quad Dimensionality of the $i$-th latent factor\par
$\displaystyle \dimtgt$ \quad Dimensionality of the learned representation\par
$\displaystyle \mD_n$ \quad Duplication matrix for $n \times n$ matrices\par
$\displaystyle \drv \vgen_{\vsrc}$ \quad Differential of $\vgen$ at $\vsrc$\par
$\displaystyle \drv_i \vgen_{\vsrc}$ \quad Partial derivative w.r.t.\ the $i$-th factor $\drv \vgen_{\vsrc} \circ \iota_i$\par
$\displaystyle \drv_{i,j}^3 \vgen_{\vsrc}$ \quad Mixed derivative $\drv^3\vgen_{\vsrc}\circ(\iota_i,\iota_j,\id)$\par
$\displaystyle \ve_i$ \quad Standard basis vector with a 1 at position $i$\par
% $\displaystyle \E_{\rx\sim P} [ f(x) ]\text{ or } \E f(x)$ \quad Expectation of $f(x)$ with respect to $P(\rx)$\par
% $\displaystyle \E$ \quad Expected value\par
% $\displaystyle f(\vx ; \vtheta) $ \quad A function of $\vx$ parametrized by $\vtheta$ (sometimes reduced to $f(\vx)$ to simplify notation)\par
$\displaystyle \vf$ \quad Ground-truth encoder\par
$\displaystyle \venc$ \quad Learned encoder\par
$\displaystyle \vgen$ \quad Ground-truth decoder\par
$\displaystyle \vdec$ \quad Learned decoder\par
$\displaystyle \gG = (\gV, \gE)$ \quad A graph $\gG$ defined by a set of vertices $\gV$ and edges $\gE$\par
$\displaystyle \vrep$ \quad Mapping from ground-truth to learned latents\par
% $\displaystyle \hes_{f} $ \quad Hessian matrix of $f: \R^n \rightarrow \R$ ($\hes_{f} \in \R^{n\times n}$)\par
% $\displaystyle \hesv_{\vf} $ \quad "Flattened" Hessian matrix $\heshv_{\vf} \in \R^{m\times \frac{n(n+1)}{2}}$ of $\vf: \R^n \rightarrow \R^m$\par
% $\displaystyle \heshv_{\vf} $ \quad "Flattened" Hessian matrix $\heshv_{\vf} \in \R^{m\times \frac{n(n+1)}{2}}$ of $\vf: \R^n \rightarrow \R^m$\par
$\displaystyle \mI$ \quad Identity matrix with implied size from context\par
$\displaystyle \mI_n$ \quad Identity matrix of size $n \times n$\par
% $\displaystyle \sff_{\vf} $ \quad Second Fundamental Form of $\vf$\par
% $\displaystyle \sff $ \quad Second Fundamental Form\par
$\displaystyle \jac_{\vf} $ \quad Jacobian matrix of $\vf: \R^n \rightarrow \R^m$ ($\mJ_{\vf} \in \R^{m\times n}$)\par
$\displaystyle \iota_i$ \quad The $i$-th canonical inclusion map\par
$\displaystyle \numslot$ \quad Number of latent factors\par
$\displaystyle \numslott$ \quad Number of factors in learned representation\par
$\displaystyle \mL_n$ \quad Elimination matrix for $n \times n$ matrices\par
% $\displaystyle \mathbb{N}$ \quad The set of natural numbers\par
$\displaystyle  \mathbb{P}$ \quad A probability distribution\par
% $\displaystyle \R$ \quad The set of real numbers\par
$\displaystyle \vsrc$ \quad Ground-truth latent variable\par
$\displaystyle \setsrc$ \quad Ground-truth latent space\par
$\displaystyle \setsrc_i$ \quad $i$-th latent subspace ($\setsrc \subseteq \setsrc_1 \times \dots \times \setsrc_{\numslot})$\par
$\displaystyle \supp(\cdot)$ \quad Support of a matrix (index set of nonzero elements) or support of a function (set of elements mapped to values not equal to zero)\par
$\displaystyle T_{\vsrc}\setsrc$ \quad Tangent space of $\setsrc$ at $\vsrc$\par
% $\displaystyle \mT_{ij}$ \quad Matrix with a 1 at positions $(i,j)$ and $(j,i)$, and 0 elsewhere (size is implied by context)\par
$\displaystyle \vrepinv$ \quad Mapping from learned to ground-truth latents\par
% $\displaystyle \vecop$ \quad Vectorization operation of a matrix\par
% $\displaystyle \vechop$ \quad Half-vectorization operation of a matrix\par
$\displaystyle \vobs$ \quad Observation or measurement\par
$\displaystyle \setobs$ \quad Data manifold ($\vobs \in \setobs \subseteq \R^{\dimobs}$)\par
$\displaystyle \vtgt$ \quad Learned representation (or encoding)\par
$\displaystyle \settgt$ \quad Learned representation space\par
% $\displaystyle \vpath$ \quad A path\par 
% $\displaystyle \perm$ \quad A permutation\par
% $\displaystyle \permm$ \quad A permutation\par
% $\displaystyle \vchart$ \quad A chart on a manifold\par
% $\displaystyle \vchartt$ \quad A chart on a manifold\par
$\displaystyle \times $ \quad Direct product\par
$\displaystyle \oplus $ \quad Direct sum\par
% $\displaystyle \cdot $ \quad Dot product\par
$\displaystyle \bullet $ \quad Hadamard product (element-wise product)\par
$\displaystyle \otimes $ \quad Kronecker product\par
$\displaystyle \odot $ \quad Row-wise Kronecker product (also face-splitting product)\par
$\displaystyle \setminus$ \quad Set subtraction\par
% $\cap$ \quad Set intersection\par
% $\cup$ \quad Set union\par
% $\subseteq$ \quad Subset or equal\par
% $\supseteq$ \quad Superset or equal\par
$\mni$ \quad Mutual non-inclusion ($\gA \nsubseteq \gB \;\wedge\; \gA \nsupseteq \gB$)\par

$\displaystyle |\gA|$ \quad Cardinality of set $\gA$ (the number of elements in $\gA$)\par
$\displaystyle [n]$ \quad The set $\{1, 2, \dots, n\}$ for $n \in \mathbb{N}$\par
% $\displaystyle \frac{\partial y} {\partial x}$ \quad Partial derivative of $y$ with respect to $x$\par
% $\displaystyle \nabla f$ \quad Gradient of $f$\par
% $\displaystyle \int_\sS f(\vx) d\vx$ \quad Definite integral with respect to $\vx$ over the set $\sS$\par
% $\displaystyle \ra \sim P$ \quad Random variable $\ra$ has distribution $P$\par
$\displaystyle f \circ g$ \quad Composition of the functions $f$ and $g$\par
% $\displaystyle \| \vx \|_p $ \quad $\normlp$ norm of $\vx$\par
% $\displaystyle \| \vx \| $ \quad $\normltwo$ norm of $\vx$\par
$\displaystyle \| \vx \|_0 $ \quad $\ell_0$ norm of $\vx$\par

\end{multicols}

\newpage

\section{PROOFS}\label{sec:proofs}

Before we turn to the theorems and proofs, let us recall the following definitions.

% \begin{definition}[Decomposable map]
% Let $\setsrc \subseteq \prod_{i=1}^{\numslot}\setsrc_i$ and 
% $\settgt \subseteq \prod_{j=1}^{\numslott}\settgt_j$.
% We say that a map $\widetilde{\vrep}\colon \setsrc \to \settgt$ is 
% \emph{decomposable} if there exists a surjection $\sigma\colon[\numslot]\to[\numslott]$
% and maps $\vrep_j\colon \prod_{i\in\sigma^{-1}(j)}\setsrc_i \to \settgt_j$ such that, for all $\vsrc\in\setsrc$,
% \[
% \widetilde{\vrep}(\vsrc) = \bigl(\vrep_j((\vsrc_i)_{i\in\sigma^{-1}(j)})\bigr)_{j=1}^{\numslott}.
% \]
% \end{definition}

% \begin{definition}[Disentanglement]\label{definition:disentanglement}
% A decoder $\vdec\colon \settgt \to \setobs$ is said to be \emph{disentangled} w.r.t. a generator $\vgen\colon \setsrc \to \setobs$ if there exists a decomposable map $\vrep\colon\setsrc\to\settgt$ such that $\vgen=\vdec\circ\vrep$.
% \end{definition}

% \begin{remark}[Partial/full and local/global disentanglement]
% \label{remark:symmetry_full}
% If $\numslott=\numslot$ and $\sigma$ is a bijection (i.e.,\ local \emph{full}
% disentanglement), \Defref{definition:disentanglement} gives
% \[
%   \vgen(\vsrc) \;=\; \vdec\bigl(\,\vrep_1\!(\vsrc_{\sigma^{-1}(1)}),\dots, \vrep_{\numslot}\!(\vsrc_{\sigma^{-1}(\numslot)})\bigr).
% \]
% To distinguish the cases $\numslott = \numslot$ from $\numslott < \numslot$, we say $\vdec$ is \emph{fully disentangled} or \emph{partially disentangled}, respectively.
% \end{remark}

\begin{definition}[Local disentanglement]\label{definition:local_disentanglement}
A decoder $\vdec\colon \settgt \to \setobs$ is \emph{locally disentangled} 
w.r.t.\ a generator $\vgen\colon \setsrc \to \setobs$ if for every 
$\vsrc^\ast\in\setsrc$ and $\vtgt^\ast\in\settgt$ with 
$\vgen(\vsrc^\ast)=\vdec(\vtgt^\ast)$ there exist a neighborhood 
$\gU\subseteq\setsrc$ of $\vsrc^\ast$ and a decomposable map 
$\vrep\colon\gU\to\settgt$ such that
\[
\restr{\vgen}{\gU} = \vdec\circ \vrep
\qquad\text{and}\qquad
\vrep(\vsrc^\ast)=\vtgt^\ast.
\]
\end{definition}

\begin{definition}[$k$-factor slice]\label{definition:k_slice}
    Let $k \in \{0, \dots, \numslot\}$, and let $\gI \subseteq [\numslot]$ be an index set with $\abs{\gI} = \numslot - k$. If $\setsrc$ is a subset of the product space $\setsrc_1 \times \dots \times \setsrc_{\numslot}$, a \emph{$k$-factor slice} (or simply a \emph{$k$-slice}) of $\setsrc$ is any set of the form
    \begin{equation*}
        \gU = \{\vsrc \in \setsrc \mid \vsrc_i = \vc_i \text{ for all } i \in \gI\},
    \end{equation*}
    where $\vc_i \in \setsrc_i$ for $i \in \gI$ are fixed constants.
\end{definition}

Put simply, a $k$-slice is a subspace in which all but $k$ factors are held constant.

\begin{remark}\label{remark:path-connected}
    Path-connectedness of $\setsrc \subseteq \prod_{i=1}^{\numslot}\setsrc_i$ and path-connectedness of its $(\numslot-1)$-slices are related but independent properties: neither one implies the other (see Figure~\ref{fig:connectivity_comparison}). 
    More generally, for $\numslot > 2$, connectedness of $1$-slices and $2$-slices are likewise independent (for $\numslot=2$ they coincide trivially). 
    A further related notion is \emph{orthogonal convexity}, which can be interpreted as the property that all $1$-slices are path-connected (when each factor is one-dimensional).
\end{remark}

\begin{figure}[ht]
    \centering
    \includegraphics[width=0.5\textwidth]{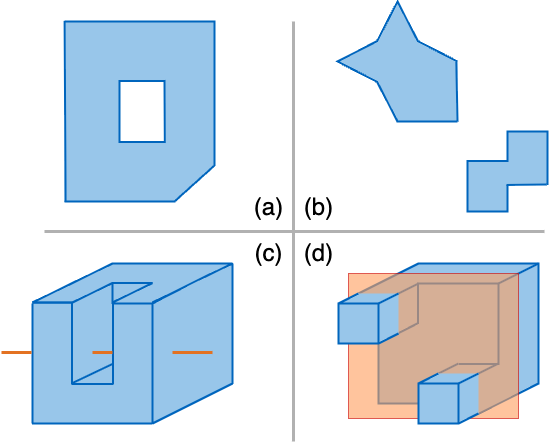}
    \caption{Examples illustrating independence of slice- and set-level connectedness. 
    (a) $\setsrc$ is path-connected, but not every $1$-slice is connected. 
    (b) $\setsrc$ is not path-connected, though every $1$-slice is connected. 
    (c) Some $1$-slices are disconnected, but every $2$-slice is connected. 
    (d) Some $2$-slices are disconnected, but every $1$-slice is connected.}
    \label{fig:connectivity_comparison}
\end{figure}

\subsection{Proof of Theorem~\ref{theorem:global_identifiability_main}}\label{sec:proof_global_identifiability}

{\renewcommand\footnote[1]{}\theoremglobalidentifiabilitymain*}

\begin{proof}
By the covering-space lifting criterion, the conditions that $\setsrc$ is simply connected and $\vdec$ a covering map imply the existence of a continuous lift\footnote{Given maps $\vf\colon\gA\to\gX$ and $\vg\colon\gB\to\gX$, a lift of $\vf$ to $\gB$ is a map $\vh\colon\gA\to\gB$ such that $\vf = \vg \circ \vh$.} $\vrep:\setsrc\to\settgt$ satisfying
\[
\vgen=\vdec\circ\vrep.
\]
This lift is locally injective since \(\vgen\) is locally injective.

We first note that local disentanglement induces a local decomposable structure in $\vrep$ and that the association $\sigma\colon[ \numslot]\to[\numslot]$ between source and target factors is uniquely determined when restricted to the global lift. 
To show this, fix some $\vsrc^\ast\in\setsrc$. 
By local disentanglement, applied with
\[
\vtgt^\ast=\vrep(\vsrc^\ast),
\]
there exists a neighborhood \(\gU\) of \(\vsrc^\ast\) and a decomposable map $\vrep^{\gU}:\gU\to\settgt$ such that
\[
\vdec\circ\vrep^{\gU}
=
\restr{\vgen}{\gU}
\]
and
\[
\vrep^{\gU}(\vsrc^\ast)=\vrep(\vsrc^\ast).
\]
Shrinking \(\gU\) if necessary, we may assume that \(\gU\) is connected. 
Since both \(\vrep^{\gU}\) and \(\restr{\vrep}{\gU}\) are lifts of \(\restr{\vgen}{\gU}\) through the local homeomorphism \(\vdec\) (implied by the covering map condition), and since they agree at \(\vsrc^\ast\), uniqueness of lifts on a connected set gives
\[
\vrep^{\gU}=\restr{\vrep}{\gU}.
\]
Hence the global lift \(\vrep\) is locally decomposable.

Let $\sigma:[\numslot]\to[\numslot]$ be the permutation with respect to which \(\restr{\vrep}{\gU}\) is decomposable.
We claim that every local decomposable description of \(\vrep\) uses the same \(\sigma\). Let \(\vsrc\in\setsrc\), and choose a path $\vpath:[0,1]\to\setsrc$ with
\[
\vpath(0)=\vsrc^\ast,
\qquad
\vpath(1)=\vsrc.
\]
Cover \(\vpath([0,1])\) by connected open sets $\gU_\alpha$ with $\alpha \in [0,1]$ such that \(\restr{\vrep}{\gU_\alpha}\) is decomposable. 
By compactness, there exists a finite subcover $\gU_1,\dots,\gU_M$.  
Using the Lebesgue number lemma, choose a partition  
\[
0=\alpha_0<\alpha_1<\dots<\alpha_M=1
\quad\text{such that}\quad
\vpath([\alpha_{m-1},\alpha_m])\subset \gU_m \ \text{for each } m \in [M].
\]
Then $\vpath(\alpha_m)\in \gU_m\cap\gU_{m+1}$, so consecutive sets intersect.

Suppose \(\restr{\vrep}{\gU_m}\) is decomposable with respect to \(\sigma_m\), and
\(\restr{\vrep}{\gU_{m+1}}\) is decomposable with respect to \(\sigma_{m+1}\).
On the open set $\gU_m\cap\gU_{m+1}$, both descriptions describe the same map \(\vrep\). We show that
\[
\sigma_m=\sigma_{m+1}.
\]

Fix some \(i\in[\numslot]\) and suppose, for contradiction, that
\[
\sigma_m(i)\neq\sigma_{m+1}(i).
\]
Choose a point $\vp\in\gU_m\cap\gU_{m+1}$. 
Since \(\setsrc\) is open in the product and \(\setsrc_i\) has positive dimension, there is a nonconstant path
\[
\veta:(-\epsilon,\epsilon)\to \gU_m\cap\gU_{m+1}
\]
for sufficiently small $\epsilon$ with $\veta(0)=\vp$ such that only the \(i\)-th source coordinate varies along \(\veta\).

Because \(\vrep\) is decomposable with respect to \(\sigma_m\) on \(\gU_m\), all target components except possibly the \(\sigma_m(i)\)-th one are constant along \(\veta\). Because \(\vrep\) is decomposable with respect to \(\sigma_{m+1}\) on \(\gU_{m+1}\), all target components except possibly the \(\sigma_{m+1}(i)\)-th one are constant along \(\veta\). Since
\[
\sigma_m(i)\neq\sigma_{m+1}(i),
\]
every target component is constant along \(\veta\). Hence $\vrep\circ\veta$ is constant.
This contradicts local injectivity of \(\vrep\). Therefore
\[
\sigma_m(i)=\sigma_{m+1}(i).
\]
Since \(i\) was arbitrary,
\[
\sigma_m=\sigma_{m+1}.
\]
By induction along the chain, every local decomposable description of \(\vrep\) along \(\vpath\) uses the same permutation \(\sigma\). Since \(\vsrc\) was arbitrary and \(\setsrc\) is path-connected, the same \(\sigma\) applies locally everywhere on \(\setsrc\).

It remains to show that \(\vrep\) is globally decomposable with respect to this \(\sigma\). Write
\[
\vrep=(\vrep_1,\dots,\vrep_{\numslot}).
\]
Fix some \(j\in[\numslot]\), and let \(\vsrc,\vsrc'\in\setsrc\) satisfy
\[
\vsrc_{\sigma^{-1}(i)}=\vsrc_{\sigma^{-1}(i)}'.
\]
By assumption, \(\vsrc\) and \(\vsrc'\) lie in the same path-connected ($\numslot$-1)-slice. 
Now choose a path $\vpath:[0,1]\to\setsrc$ inside this slice that connects 
\[
\vsrc=\vpath(0)
\qquad\text{with}\qquad
\vsrc'=\vpath(1).
\]
For every \(t\in[0,1]\), local decomposability gives an open neighborhood $\gU_t$ of \(\vpath(t)\) on which \(\vrep\) agrees with a decomposable local lift with respect to \(\sigma\). The sets $\gU_t$ form an open cover of \(\vpath([0,1])\).
Following the same argument as before, we can choose a partition
\[
0=t_0<t_1<\cdots<t_M=1
\]
such that for each \(m\),
\[
\vpath([t_{m-1},t_m])\subseteq \gU_m
\]
for some such open set \(\gU_m\).

On each \(\gU_m\), the component \(\vrep_j\) depends only on the coordinate indexed by \(\sigma^{-1}(j)\), which is constant along \(\vpath\). Hence $\vrep_j\circ\vpath$ is constant on each interval \([t_{m-1},t_m]\). Since consecutive intervals meet, the constants agree. Therefore
\[
\vrep_j(\vsrc)
=
\vrep_j(\vpath(0))
=
\vrep_j(\vpath(1))
=
\vrep_j(\vsrc').
\]

Thus \(\vrep_j\) depends only on $\vsrc_{\sigma^{-1}(j)}$. 
Since \(j\) was arbitrary, \(\vrep\) is decomposable with respect to \(\sigma\). 
So \(\vdec\) is globally disentangled with respect to \(\vgen\).
\end{proof}

\begin{remark}[Possible relaxations of Theorem~\ref{theorem:global_identifiability_main}]
\label{remark:relaxing_simply_connected_and_covering}
The assumptions that \(\setsrc\) is simply connected and \(\vdec\) is a covering map are only used once at the beginning of the proof to infer that there exists a global continuous lift. 
The assumption that \(\setsrc\) is simply connected is mainly a convenient way to rule out global ``branch switching'' of local lifts. It can be weakened.

First, it is enough to assume that \(\setsrc\) is path-connected and that the usual lifting condition holds. Namely, fix basepoints
\[
\vsrc^\ast\in\setsrc,
\qquad
\vtgt^\ast\in\settgt,
\qquad
\vgen(\vsrc^\ast)=\vdec(\vtgt^\ast).
\]
Then the simply connectedness assumption may be replaced by
\[
(\vgen)_*\pi_1(\setsrc,\vsrc^\ast)
\subseteq
(\vdec)_*\pi_1(\settgt,\vtgt^\ast).
\]
Here \(\pi_1(\setsrc,\vsrc^\ast)\) denotes the fundamental group of \(\setsrc\) based at \(\vsrc^\ast\), whose elements are loops in \(\setsrc\) starting and ending at \(\vsrc^\ast\), considered up to continuous deformation. The map $(\vgen)_*$ sends such a loop in \(\setsrc\) to the corresponding loop in \(\setobs\) obtained by applying \(\vgen\). Similarly, $(\vdec)_*$ records which loops in \(\setobs\) arise as images under \(\vdec\) of loops in \(\settgt\) based at \(\vtgt^\ast\). Thus the condition says that every loop in \(\setobs\) produced by moving around in \(\setsrc\) through \(\vgen\) can be lifted to a loop in \(\settgt\) through \(\vdec\). Equivalently, following a local lift of \(\vgen\) around any loop in \(\setsrc\) returns to the same branch of the lift.

Second, the assumption that \(\vdec\) is a covering map may itself be derived from more elementary regularity assumptions. Suppose $\setobs,\settgt$ are manifolds, $\setobs$ is connected, and $\vdec:\settgt\to\setobs$ is a surjective local homeomorphism. 
Then \(\vdec\) is a covering map under either of the following additional assumptions:
\begin{enumerate}
    \item $\vdec$ is proper, or
    \item $\abs{\vdec^{-1}(\vobs)}=n<\infty$ for all $\vobs\in\setobs$.
\end{enumerate}
\end{remark}

\begin{remark}
To extend local partial disentanglement to global partial disentanglement (with $\numslott < \numslot$), condition (2) in Theorem~\ref{theorem:global_identifiability_main} must be replaced by the following: for every \(j \in [\numslott]\), each slice
\[
\left\{
\vsrc \in \setsrc :
\vsrc_i = \bar{\vsrc}_i
\text{ for all } i \in \sigma^{-1}(j)
\right\}
\]
is path-connected. Here, $\sigma$ may be chosen from any locally decomposable map.
\end{remark}

\subsection{Proof of Proposition~\ref{proposition:disentanglement_equivalence_class}}
\label{sec:disentanglement_equivalence_class}

\begin{lemma}
\label{lemma:blockwise_inverse}
Let $\setsrc \subseteq \prod_{i=1}^{\numslot} \setsrc_i, \settgt \subseteq \prod_{j=1}^{\numslott} \settgt_j$, and suppose $\vgen\colon \setsrc \to \setobs$ and $\vdec\colon \settgt \to \setobs$ are local homeomorphisms.
Assume that for every $\vsrc^\ast \in \setsrc$ there exists $\vtgt^\ast \in \settgt$ with $\vgen(\vsrc^\ast) = \vdec(\vtgt^\ast)$.
Moreover, suppose that for each such $\vtgt^\ast$ there exist
\begin{itemize}
    \item a neighborhood $\gU \subseteq \settgt$ of $\vtgt^\ast$,
    \item a surjection $\sigma\colon [\numslot]\to[\numslott]$, and
    \item maps $\vrepinv_i \colon \settgt_{\sigma(i)} \to \setsrc_i$ for $i \in [\numslot]$,
\end{itemize}
such that for all $\vtgt \in \gU$,
\begin{equation}\label{eq:disentanglement_gen}
    \vdec(\vtgt)
    = \vgen\!\bigl(\vrepinv_1(\vtgt_{\sigma(1)}), \dots, \vrepinv_{\numslot}(\vtgt_{\sigma(\numslot)})\bigr).
\end{equation}
Then $\vdec$ is locally disentangled with respect to $\vgen$.
\end{lemma}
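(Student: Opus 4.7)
The plan is to locally invert the product-structured map implicit in the hypothesis and verify that this inverse is itself decomposable in the forward sense of \Defref{definition:disentanglement}. The local-homeomorphism assumptions on $\vgen$ and $\vdec$ let me pass freely between source-side and target-side local inverses.

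Fix a matching pair $(\vsrc^\ast,\vtgt^\ast)$ with $\vgen(\vsrc^\ast)=\vdec(\vtgt^\ast)$, and let $\widetilde{\vrepinv}\colon\gU\to\setsrc$ denote the map $\widetilde{\vrepinv}(\vtgt)\coloneqq(\vrepinv_i(\vtgt_{\sigma(i)}))_{i=1}^{\numslot}$ supplied by the hypothesis, so that $\vgen\circ\widetilde{\vrepinv}=\vdec$ on $\gU$. Then $\vsrc^\circ\coloneqq\widetilde{\vrepinv}(\vtgt^\ast)$ is a preimage of $\vgen(\vsrc^\ast)$ under $\vgen$; using local injectivity of $\vgen$ near $\vsrc^\ast$ and shrinking $\gU$ if necessary (this is the one place where the explicit pairing of $\vtgt^\ast$ with the given $\vsrc^\ast$ enters), I may assume $\vsrc^\circ=\vsrc^\ast$. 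Composing with the local inverse of $\vgen$ at $\vsrc^\ast$ then gives $\widetilde{\vrepinv}=\vgen^{-1}\circ\vdec$ on a smaller neighborhood of $\vtgt^\ast$, so $\widetilde{\vrepinv}$ is itself a local homeomorphism there.

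The crucial structural observation is that $\widetilde{\vrepinv}(\vtgt)_i$ depends only on $\vtgt_{\sigma(i)}$. After re-ordering source coordinates according to $\sigma$, this exhibits $\widetilde{\vrepinv}$ as a product $\widetilde{\vrepinv}^{(1)}\times\cdots\times\widetilde{\vrepinv}^{(\numslott)}$, where $\widetilde{\vrepinv}^{(j)}\colon\settgt_j\to\prod_{i\in\sigma^{-1}(j)}\setsrc_i$ is $\vtgt_j\mapsto(\vrepinv_i(\vtgt_j))_{i\in\sigma^{-1}(j)}$. Because a product map is a local homeomorphism at a point iff each factor is a local homeomorphism at the corresponding coordinate, each $\widetilde{\vrepinv}^{(j)}$ is a local homeomorphism at $\vtgt^\ast_j$; in particular dimensions match block by block, and each $\widetilde{\vrepinv}^{(j)}$ admits a local inverse $\vrep_j$.

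Setting $\vrep(\vsrc)\coloneqq(\vrep_j(\vsrc_{\sigma^{-1}(j)}))_{j=1}^{\numslott}$ on a neighborhood of $\vsrc^\ast$ then yields a map that is decomposable with the same surjection $\sigma$, satisfies $\vrep(\vsrc^\ast)=\vtgt^\ast$, and $\widetilde{\vrepinv}\circ\vrep=\id$ implies $\vdec\circ\vrep=\vgen\circ\widetilde{\vrepinv}\circ\vrep=\vgen$, which is exactly the local-disentanglement identity required by \Defref{definition:local_disentanglement}. The main obstacle is the block-inversion step: verifying that a product map being a local homeomorphism forces each factor to be one, so that the blockwise inverses $\vrep_j$ genuinely exist. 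Everything else reduces to routine rewriting using the local inverses of $\vgen$ and $\vdec$.
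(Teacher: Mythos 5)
Your proposal is correct and follows essentially the same route as the paper: both arguments reduce to inverting the blockwise structure of the local lift $\vgen^{-1}\circ\vdec$ — you via the observation that a product map which is a local homeomorphism must have locally invertible factors, the paper via injectivity of the block maps $\vphi_j(\valpha)=(\vrepinv_i(\valpha))_{i\in\sigma^{-1}(j)}$ followed by restricting their inverses to product neighborhoods inside the open image. One caution: your claim that local injectivity of $\vgen$ plus ``shrinking $\gU$'' lets you assume $\widetilde{\vrepinv}(\vtgt^\ast)=\vsrc^\ast$ is not justified as written (shrinking the domain does not move the value of $\widetilde{\vrepinv}$ at $\vtgt^\ast$, and injectivity of $\vgen$ near $\vsrc^\ast$ does not rule out $\widetilde{\vrepinv}(\vtgt^\ast)$ being a different preimage of $\vgen(\vsrc^\ast)$), but the paper's proof makes the same identification implicitly when it equates \eqref{eq:disentanglement_gen} with the block form of $\vpsi=\vgen^{-1}\circ\vdec$, so this is a shared rather than a new gap.
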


\begin{proof}
Fix an arbitrary $\vsrc^\ast\in\setsrc$ and pick $\vtgt^\ast\in\settgt$ with $\vdec(\vtgt^\ast)=\vgen(\vsrc^\ast)$.
By hypothesis at $\vtgt^\ast$, there is a neighborhood $\gU=\prod_{j=1}^{\numslott}\gU_j\subseteq\settgt$,
a surjection $\sigma$, and maps $\vrepinv_i$ giving \Eqref{eq:disentanglement_gen} on $\gU$.

Shrink to a neighborhood $\gW\subseteq\setsrc$ of $\vsrc^\ast$ on which
$\vgen\colon\gW\to \vgen(\gW)$ is a homeomorphism, and shrink $\gU$ if necessary so that
$\vdec(\gU)\subseteq \vgen(\gW)$.
Define
\[
  \vpsi \;:=\; \vgen^{-1}\!\circ\vdec \;\colon\; \gU \longrightarrow \gW .
\]
Then $\vpsi$ is a homeomorphism onto its image with $\vpsi(\vtgt^\ast)=\vsrc^\ast$.

For each $j \in [\numslott]$ set
\[
  \vphi_j \;\colon\; \gU_j \longrightarrow \prod_{i\in\sigma^{-1}(j)} \setsrc_i,
  \qquad
  \vphi_j(\valpha):=\bigl(\vrepinv_i(\valpha)\bigr)_{i\in\sigma^{-1}(j)} .
\]
Then for $\vtgt\in\gU$ \Eqref{eq:disentanglement_gen} is equivalent to
\begin{equation}\label{eq:block_form_with_perm}
  \vvarrho_\sigma\!\bigl(\vpsi(\vtgt)\bigr)\;=\;\bigl(\vphi_1(\vtgt_1),\dots,\vphi_{\numslott}(\vtgt_{\numslott})\bigr),
\end{equation}
where $\vvarrho_\sigma$ is a reindexing homeomorphism $\vsrc \mapsto \bigl( (\vsrc_i)_{i\in\sigma^{-1}(j)} \bigr)_{j=1}^{\numslott}$.
Therefore, each $\vphi_i$ must be injective, because the left hand side of \Eqref{eq:block_form_with_perm} is a homeomorphism onto its image.

Since $\vpsi(\gU)$ is an open neighborhood of $\vsrc^\ast$ in the product space $\prod_i \setsrc_i$,
we can choose product neighborhoods $\gV_i\subseteq\setsrc_i$ with
\[
  \prod_{i=1}^{\numslot}\gV_i \;\subseteq\; \vpsi(\gU).
\]
Then for each $j$ we have
$\prod_{i\in\sigma^{-1}(j)}\gV_i \subseteq \vphi_j(\gU_j)$, and we set
\[
  \vrep_j \;:=\; \vphi_j^{-1}\big|_{\prod_{i\in\sigma^{-1}(j)}\gV_i}
  \;\colon\; \prod_{i\in\sigma^{-1}(j)}\gV_i \longrightarrow \gU_j .
\]

Finally, for any $\vsrc\in\prod_i\gV_i$, define
$\vtgt := \bigl(\vrep_j((\vsrc_i)_{i\in\sigma^{-1}(j)})\bigr)_{j=1}^{\numslott}$.
Then, by construction and \Eqref{eq:block_form_with_perm},
$\vpsi^{-1}(\vsrc)=\vtgt$, hence
\[
  \vgen(\vsrc) \;=\; \vdec(\vtgt)
  \;=\; \vdec\Bigl(\,\vrep_1\!\bigl((\vsrc_i)_{i\in\sigma^{-1}(1)}\bigr),\dots,
                      \vrep_{\numslott}\!\bigl((\vsrc_i)_{i\in\sigma^{-1}(\numslott)}\bigr)\Bigr).
\]
Therefore, $\vdec$ is locally disentangled with respect to $\vgen$ on a neighborhood of the arbitrary point $\vsrc^\ast \in \setsrc$.
\end{proof}

\begin{proposition}
\label{proposition:disentanglement_equivalence_class}
Let \(\gF\) be a class of surjective local homeomorphisms $\vgen \colon \setsrc^\vgen \to \setobs$ into the same observation space, where each
\(\setsrc^\vgen\) is an open subset of a product space.
Define \(\vdec\sim_{ld}\vgen\) to mean that \(\vdec\) is locally disentangled w.r.t.\ \(\vgen\). Suppose that
\[
    \vgen\sim_{ld}\vgen
    \qquad
    \text{for all } \vgen\in\gF .
\]
Then \(\sim_{ld}\) is an equivalence relation on \(\gF\).
\end{proposition}

\begin{proof}
By assumption, \(\sim_{ld}\) is reflexive.

We next prove transitivity. Suppose $\vdec\sim_{ld}\vgen$ and $\vgen\sim_{ld}\bar{\vgen}$. 
Let \(\vy\in\setsrc^{\bar{\vgen}}\) and \(\vtgt\in\setsrc^{\vdec}\) satisfy
\[
    \bar{\vgen}(\vy)=\vdec(\vtgt).
\]
Since \(\vgen\) is surjective into the same observation space, choose \(\vsrc\in\setsrc^\vgen\) such that
\[
    \vgen(\vsrc)=\bar{\vgen}(\vy)=\vdec(\vtgt).
\]
From \(\vgen\sim_{ld}\bar{\vgen}\), it follows that there exist an open neighborhood \(\gU\) of \(\vy\) and a decomposable map $\vrep\colon \gU\to\setsrc^\vgen$ such that
\[
    \restr{\bar{\vgen}}{\gU}
    =
    \vgen\circ\vrep,
    \qquad
    \vrep(\vy)=\vsrc .
\]
Analogously from \(\vdec\sim_{ld}\vgen\), there exist an open neighborhood
\(\gV\) of \(\vsrc\) and a decomposable map $\veta\colon \gV\to\setsrc^{\vdec}$ such that
\[
    \restr{\vgen}{\gV}
    =
    \vdec\circ\veta,
    \qquad
    \veta(\vsrc)=\vtgt .
\]
After shrinking \(\gU\), we may assume $\vrep(\gU)\subseteq\gV$.
Then \(\veta\circ\vrep\) is decomposable as the composition of decomposable maps, and
\[
    \restr{\bar{\vgen}}{\gU}
    =
    \vgen\circ\vrep
    =
    \vdec\circ\veta\circ\vrep,
    \qquad
    (\veta\circ\vrep)(\vy)=\vtgt .
\]
Hence \(\vdec\sim_{ld}\bar{\vgen}\). Thus \(\sim_{ld}\) is transitive.

Symmetry follows from Lemma~\ref{lemma:blockwise_inverse}.
Thus \(\sim_{ld}\) is reflexive, transitive, and symmetric, and hence an
equivalence relation on \(\gF\).
\end{proof}

\begin{proposition}
\label{proposition:global_disentanglement_equivalence_class}
Let \(\gF\) be a class of homeomorphisms $\vgen \colon \setsrc^\vgen \to \setobs$ into the same observation space, where each \(\setsrc^\vgen \subseteq \prod_{i=1}^{\numslot} \setsrc_i^\vgen\) is open.
Assume that every \((\numslot-1)\)-slice of each \(\setsrc^\vgen\) is path-connected.
Define \(\vdec\sim_d\vgen\) to mean that \(\vdec\) is disentangled w.r.t.\ \(\vgen\). 
Then \(\sim_d\) is an equivalence relation on \(\gF\).
\end{proposition}

\begin{proof}
Reflexivity follows immediately, because the identity map is decomposable. Transitivity follows from the fact that the composition of decomposable maps is
decomposable. It remains to prove symmetry.

Let $\vgen\colon\setsrc\to\setobs$ and $\vdec\colon\settgt\to\setobs$. Suppose \(\vdec\sim_d\vgen\), and set
\[
    \vrep \coloneqq \vdec^{-1}\circ\vgen .
\]
Then \(\vrep\colon\setsrc\to\settgt\) is a homeomorphism and is decomposable.
Thus, for some permutation \(\sigma\),
\[
    \vrep_i(\vsrc)
    =
    \vrep^{(i)}(\vsrc_{\sigma^{-1}(i)}),
    \qquad i\in[\numslot].
\]
We show that \(\vrep^{-1}\) is also decomposable.

Fix \(i\in[\numslot]\), and write \(j=\sigma^{-1}(i)\). We first observe that
\(\vrep^{(i)}\) is locally injective on the coordinate values that occur in
\(\setsrc\). If \(\vrep^{(i)}\) were not locally injective at some
coordinate value \(\vsrc_j\), then one could vary
only the \(j\)-th coordinate in a sufficiently small product neighborhood while
leaving all other coordinates fixed. This would produce two distinct points in
\(\setsrc\) with the same image under \(\vrep\), contradicting the fact that
\(\vrep\) is a homeomorphism.

Now consider two points \(\vtgt,\tilde{\vtgt}\in\settgt\) that lie in the same
\((\numslot-1)\)-slice obtained by fixing the \(i\)-th coordinate, that is, 
\[
    \vtgt_i=\tilde{\vtgt}_i .
\]
By assumption, this slice is path-connected, so there exists a continuous path $\gamma\colon[0,1]\to\settgt$ such that
\[
    \gamma(0)=\vtgt,
    \qquad
    \gamma(1)=\tilde{\vtgt},
    \qquad
    \gamma_i(t)=\vtgt_i
    \quad\text{for all }t\in[0,1].
\]
Let
\[
    \eta(t)\coloneqq \vrep^{-1}(\gamma(t)).
\]
Then \(\eta\) is continuous, and for all \(t\),
\[
    \vrep^{(i)}(\eta_j(t))
    =
    \vrep_i(\eta(t))
    =
    \gamma_i(t)
    =
    \vtgt_i .
\]
Since \(\vrep^{(i)}\) is locally injective, the continuous function
\(\eta_j\) is locally constant, and because \([0,1]\) is connected, it is constant. Hence $\eta_j(0)=\eta_j(1)$ or equivalently,
\[
    \bigl(\vrep^{-1}(\vtgt)\bigr)_j
    =
    \bigl(\vrep^{-1}(\tilde{\vtgt})\bigr)_j .
\]
Thus, the \(j\)-th coordinate of \(\vrep^{-1}(\vtgt)\) depends only on the
\(i\)-th coordinate of \(\vtgt\), where \(i=\sigma(j)\).

Since this holds for every \(j\), there exist maps $\tilde{\vrep}^{(j)}\colon \settgt_{\sigma(j)}\to\setsrc_j$ such that
\[
    \vrep^{-1}(\vtgt)
    =
    \bigl(
        \tilde{\vrep}^{(1)}(\vtgt_{\sigma(1)}),
        \dots,
        \tilde{\vrep}^{(\numslot)}(\vtgt_{\sigma(\numslot)})
    \bigr).
\]
Therefore \(\vrep^{-1}\) is decomposable, with permutation \(\sigma^{-1}\).
Since $\vdec = \vgen\circ\vrep^{-1}$, we conclude that \(\vgen\sim_d\vdec\). Hence the relation is symmetric, and so \(\sim_d\) is an equivalence relation.
\end{proof}

\subsection{Proof of Theorem~\ref{theorem:identifiability_d_main}}
\label{sec:proof_identifiability_d}

\begin{definition}[Mechanistic independence of Type~D]\label{definition:independence_d}
    We say that $\setsrc_i$ and $\setsrc_j$ (equivalently, $\vsrc_i$ and $\vsrc_j$) are \emph{mechanistically independent of Type~D} if, for all $\vsrc \in \setsrc$, $\vxi \in T_{\vsrc_i}\setsrc_i$, and $\veta \in T_{\vsrc_j}\setsrc_j$,
    \begin{equation}
        \drv_i \vgen_{\vsrc}(\vxi) \bullet \drv_j \vgen_{\vsrc}(\veta) = \vzero,
    \end{equation}
    where $\bullet$ denotes the element-wise (Hadamard) product in $\R^{\dimobs}$.
\end{definition}

Independence of the $\settgt_j$ is analogously defined based on $\vdec$.

\begin{definition}[Reducibility of Type~D]\label{definition:reducibility_d}
    We say that $\setsrc_i$ is \emph{reducible of Type~D} if there exists $\vsrc \in \setsrc$ such that $T_{\vsrc_i}\setsrc_i$ admits a nontrivial direct-sum decomposition $T_{\vsrc_i}\setsrc_i = U \oplus V$ with the property that, for all $\vxi \in U$ and $\veta \in V$,
    \begin{equation}
        \drv_i \vgen_{\vsrc}(\vxi) \bullet \drv_i \vgen_{\vsrc}(\veta) = \vzero.
    \end{equation}
    If no such decomposition exists, we call $\setsrc_i$ \emph{irreducible of Type~D}.
\end{definition}

\begin{lemma}\label{lemma:function_dependence}
    Let $\{\settgt_j\}_{j=1}^L$ and $\{\setsrc_i\}_{i=1}^K$ be smooth manifolds of positive dimension with $L \leq K$, and let $\settgt \subseteq \settgt_1 \times \cdots \times \settgt_L$ and $\setsrc \subseteq \setsrc_1 \times \cdots \times \setsrc_K$ be open subsets.  
    Suppose $\vrepinv:\settgt \to \setsrc$ is a diffeomorphism such that for every $\vtgt\in \settgt$ there exists a surjection $\sigma_{\vtgt}:[K]\to[L]$ satisfying
    \[
    \drv_j(\vpi_i\circ \vrepinv)_{\vtgt} = 0, 
    \quad \text{for all } i \in [K],\; j\neq \sigma_{\vtgt}(i),
    \]
    where $\vpi_i \colon \setsrc \to \setsrc_i$ denotes a canonical projection.  
    Then for every $\vtgt\in \settgt$ there exists a neighborhood $\gU$ of $\vtgt$ such that $\sigma_{\vtgt'}=\sigma_{\vtgt}$ for all $\vtgt'\in \gU$, and moreover $\vrepinv_i(\vtgt')$ depends only on the component $\vtgt'_{\sigma(i)}$ for each $i\in[K]$.
\end{lemma}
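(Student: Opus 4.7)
The plan is to use invertibility of $\drv \vrepinv_{\vtgt}$ to upgrade the pointwise \emph{existence} of $\sigma_{\vtgt}$ to pointwise \emph{uniqueness}, then use continuity of partial derivatives to propagate this to a neighborhood, and finally integrate over a product slab to obtain actual coordinate dependence.

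First I would argue that $\sigma_{\vtgt}$ is intrinsically determined at every point. Writing $\drv \vrepinv_{\vtgt}$ as a $K \times L$ block matrix whose $(i,j)$-block is $\drv_j(\vpi_i \circ \vrepinv)_{\vtgt}$, the hypothesis says that in each row block $i$ at most one column block can be nonzero, namely the one at $j = \sigma_{\vtgt}(i)$. Since $\vrepinv$ is a diffeomorphism, $\drv \vrepinv_{\vtgt}$ is a square invertible matrix and cannot have a zero row; so at least one block in every row block $i$ must be nonzero. Combining the two bounds, there is exactly one $j \in [L]$ with $\drv_j(\vpi_i\circ\vrepinv)_{\vtgt}\neq 0$, and this $j$ must equal $\sigma_{\vtgt}(i)$. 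In particular, $\sigma_{\vtgt}$ is forced independently of whichever surjection the hypothesis provides.

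Next, fix $\vtgt^\ast \in \settgt$ and set $\sigma := \sigma_{\vtgt^\ast}$. For each $i$, the map $\vtgt \mapsto \drv_{\sigma(i)}(\vpi_i\circ\vrepinv)_{\vtgt}$ is continuous, and being nonzero is an open condition, so there is an open neighborhood $\gU_i$ of $\vtgt^\ast$ on which $\drv_{\sigma(i)}(\vpi_i\circ\vrepinv)\neq 0$. The uniqueness statement of the previous step, applied at every $\vtgt' \in \gU_i$, forces $\sigma_{\vtgt'}(i)=\sigma(i)$. Taking the finite intersection $\gU := \bigcap_{i=1}^K \gU_i$ gives a neighborhood of $\vtgt^\ast$ on which $\sigma_{\vtgt'}\equiv \sigma$. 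Using that $\settgt$ is open in the product $\prod_j \settgt_j$, after shrinking $\gU$ I may assume $\gU = \prod_{j=1}^L \gU^{(j)}$ with each $\gU^{(j)} \subseteq \settgt_j$ open and connected. On this slab, the vanishing of $\drv_j(\vpi_i\circ\vrepinv)$ for $j \neq \sigma(i)$ holds identically, and applying the fundamental theorem of calculus along paths that move only the $j$-th coordinate within the connected set $\gU^{(j)}$ shows that $\vpi_i\circ\vrepinv$ is constant under any such variation. Hence $\vrepinv_i(\vtgt')$ depends only on $\vtgt'_{\sigma(i)}$.

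The main obstacle is the first step. Without invertibility of $\drv \vrepinv_{\vtgt}$, the surjection $\sigma_{\vtgt}$ would genuinely be a choice: an entire row block of the Jacobian could vanish while the hypothesis still held for any assignment of $\sigma_{\vtgt}(i)$, and both local constancy and the final functional form could then fail (in fact the whole lemma would become false). The diffeomorphism assumption is precisely what excludes this pathology and pins $\sigma_{\vtgt}$ down as the indicator of the unique nonvanishing partial. Once that rigidity is in place, the remaining steps reduce to openness of the ``nonzero'' locus and a standard integration-on-a-product-slab argument, both of which use only the openness and smoothness already assumed.
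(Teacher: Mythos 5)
Your proposal is correct and follows essentially the same route as the paper's proof: invertibility of $\drv\vrepinv_{\vtgt}$ rules out a vanishing row block and pins down $\sigma_{\vtgt}$, an openness/continuity argument makes $\sigma$ locally constant, and the fundamental theorem of calculus on a product neighborhood yields the coordinate dependence. The only cosmetic difference is that you make the pointwise uniqueness of $\sigma_{\vtgt}$ explicit via the single nonvanishing block per row, whereas the paper phrases the same fact through openness of invertibility of the diagonal blocks $\Phi_{\vtgt,j}$.
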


\begin{proof}
At each $\vtgt\in\settgt$, the differential $\drv \vrepinv_{\vtgt}$ has block form
\[
\drv \vrepinv_{\vtgt} = \bigoplus_{j=1}^L \Phi_{\vtgt,j}, 
\qquad 
\Phi_{\vtgt,j}\colon T_{\vtgt_j}\settgt_j \to \bigoplus_{i\in \sigma_{\vtgt}^{-1}(j)} T_{\vpi_i(\vrepinv(\vtgt))} \setsrc_i.
\]
Since $\vrepinv$ is a diffeomorphism, $\drv \vrepinv_{\vtgt}$ is an isomorphism.  
Hence each block $\Phi_{\vtgt,j}$ must also be an isomorphism, and in particular
\[
\dim(\settgt_j) = \sum_{i\in \sigma_{\vtgt}^{-1}(j)} \dim(\setsrc_i).
\]

The maps $\vtgt \mapsto \drv_j(\vpi_i\circ \vrepinv)_{\vtgt}$ vary smoothly with $\vtgt$.  
Thus, if $\Phi_{\vtgt,j}$ is an isomorphism at $\vtgt$, it remains so in a neighborhood of $\vtgt$, since invertibility is an open condition.  
This implies $\sigma_{\vtgt'}^{-1}(j) \supseteq \sigma_{\vtgt}^{-1}(j)$ for all $j \in [L]$ as we assumed the $\setsrc_i$ have positive dimension.  
Because each $\sigma_{\vtgt'}$ is surjective, we must have $\sigma_{\vtgt'}=\sigma_{\vtgt}$ in a neighborhood $\gU$ of $\vtgt$.

As $\settgt$ is open in the product manifold, we may shrink $\gU$ so that $\gU=\gU_1 \times \cdots \times \gU_L$ with each $\gU_j$ path-connected.  
Fix $i \in [K]$ and let $\tilde{\vtgt}\in\gU$ satisfy $\tilde{\vtgt}_{\sigma(i)}=\vtgt_{\sigma(i)}$.  
Choose a smooth path $\vpath:[0,1]\to\gU$ with $\vpath(0)=\vtgt$ and $\vpath(1)=\tilde{\vtgt}$.  
By the fundamental theorem of calculus,
\[
\vrepinv_i(\tilde{\vtgt}) - \vrepinv_i(\vtgt) = \int_0^1 \frac{\diff}{\diff t}\, \vrepinv_i(\vpath(t)) \, \diff t.
\]

By the chain rule,
\begin{align*}
\frac{\diff}{\diff t} \vrepinv_i(\vpath(t))
&= \drv (\vpi_i \circ \vrepinv)_{\vpath(t)}\!\cdot \dot{\vpath}(t) \\
&= \drv_{\sigma(i)} (\vpi_i \circ \vrepinv)_{\vpath(t)}\!\cdot \dot{\vpath}_{\sigma(i)}(t) 
   + \sum_{j \neq \sigma(i)} \drv_j (\vpi_i \circ \vrepinv)_{\vpath(t)}\!\cdot \dot{\vpath}_j(t).
\end{align*}
The first term vanishes because $\vpath_{\sigma(i)}(t)$ is constant, and the second vanishes by the structural assumption on $\drv \vrepinv$.  
Thus the integral is zero, and we conclude $\vrepinv_i(\tilde{\vtgt}) = \vrepinv_i(\vtgt)$.  
Hence $\vrepinv_i$ depends only on the coordinate $\vtgt_{\sigma(i)}$, completing the proof.
\end{proof}

% \begin{restatable}[Local Identifiability of Type~D]{theorem}{theoremidentifiabilityd}\label{theorem:identifiability_d}
% Let $\vgen \colon \setsrc \to \setobs$ and $\vdec \colon \settgt \to \setobs$ be local diffeomorphisms with $\vgen(\setsrc) \subseteq \vdec(\settgt)$.
% Then $\vdec$ is locally disentangled w.r.t.\ $\vgen$ if the following conditions hold:
% \begin{enumerate}
%     \item $\setsrc \subseteq \prod_{i=1}^{\numslot} \setsrc_i$ is open, and all factors are Type~D independent and irreducible.
%     \item $\settgt \subseteq \prod_{i=1}^{\numslott} \settgt_i$ is open with $\numslott \leq \numslot$, and the factors are independent of Type~D.
% \end{enumerate}
% \end{restatable}

% \theoremidentifiabilitydmain*
{\renewcommand\footnote[1]{}\theoremidentifiabilitydmain*}

\begin{proof}
Fix an arbitrary point $\vsrc^\ast \in \setsrc$.
By the range assumption $\vgen(\setsrc) \subseteq \vdec(\settgt)$, there exists at least one $\vtgt^\ast \in \settgt$ such that
\[
    \vgen(\vsrc^\ast) = \vdec(\vtgt^\ast).
\]
Since both $\vgen$ and $\vdec$ are assumed to be local diffeomorphisms, there exists a neighborhood $\gU \subseteq \settgt$ of $\vtgt^\ast$ such that, for all $\vtgt \in \gU$,
\begin{equation}
    \vdec(\vtgt) = \vgen \circ \vrepinv(\vtgt),
\end{equation}
where we define
\[
    \vrepinv \coloneqq \restr{\vgen^{-1} \circ \vdec}{\gU} \colon \gU \to (\vgen^{-1} \circ \vdec)(\gU),
\]
and $\vgen^{-1}$ denotes the local inverse satisfying $\vrepinv(\vtgt^\ast) = \vsrc^\ast$.
Differentiating gives
\begin{equation}
    \drv \vdec_{\vtgt} = \drv \vgen_{\vrepinv(\vtgt)} \circ \drv \vrepinv_{\vtgt}.
\end{equation}
To obtain matrix representations, choose product-aligned bases on $T_{\vrepinv(\vtgt)}(\prod_i\setsrc_i)$ and $T_{\vtgt}(\prod_j\settgt_j)$, and identify $T_{\vdec(\vtgt)}\setobs$ and $T_{\vgen(\vrepinv(\vtgt))}\setobs$ with their natural inclusions into $\R^{\dimobs}$.

By Type~D independence for $\vgen$, the row supports of the partial derivatives $\drv_i\vgen_{\vsrc}$ and $\drv_{j}\vgen_{\vsrc}$ are disjoint whenever $i\neq j$.
Thus there is a partition of observation coordinates $[\dimobs]=\gR_1 \cup \dots \cup \gR_{\numslot}$ such that rows in $\gR_i$ depend only on $T_{\vsrc_i}\setsrc_i$.
Permuting rows by $\mP$ to group $\gR_1,\dots,\gR_{\numslot}$ consecutively makes $\mA=\mP\,\drv\vgen_{\vrepinv(\vtgt)}$ block-row diagonal.
Set
\[
\mA \coloneqq \mP \, \drv\vgen_{\vrepinv(\vtgt)},\qquad
\mB \coloneqq \drv\vrepinv_{\vtgt},\qquad
\mC \coloneqq \mP \, \drv\vdec_{\vtgt},
\]
so that $\mC=\mA\,\mB$.

For $k\in[\numslott]$, let $\mB_{:,k}$ denote the block-columns of $\mB$
corresponding to $T_{\vtgt_k}\settgt_k$, and let $\mB_{:,-k}$ denote the
block-columns corresponding to $\bigoplus_{j\neq k}T_{\vtgt_j}\settgt_j$.
Define $\mC_{:,k}$ and $\mC_{:,-k}$ analogously as the corresponding
block-columns of $\mC$.
Then
\begin{equation}
    \begin{bmatrix}
        \mC_{:,k} & \mC_{:,-k}
    \end{bmatrix}
    =
    \begin{bmatrix}
        \mA_{1,1} & \vzero & \cdots & \vzero\\
        \vzero & \mA_{2,2} & \cdots & \vzero\\
        \vdots & \vdots & \ddots & \vdots\\
        \vzero & \vzero & \cdots & \mA_{\numslot,\numslot}
    \end{bmatrix}
    \begin{bmatrix}
        \mB_{1,k} & \mB_{1,-k}\\
        \mB_{2,k} & \mB_{2,-k}\\
        \vdots & \vdots\\
        \mB_{\numslot,k} & \mB_{\numslot,-k}
    \end{bmatrix}.
\end{equation}

By Type~D independence for $\vdec$, the column supports of $\mC$ from different target slots are disjoint in observation coordinates, which is preserved by left-multiplication with $\mP$.
Hence the supports of the columns of $\mC_{:,k}$ are disjoint from those of $\mC_{:,-k}$, so all pairwise Hadamard products between them vanish.
Denoting the Kronecker product by $\otimes$ and the row-wise Kronecker product (also known as the face-splitting product) by $\rkron$, we obtain
\begin{align*}
    \vzero &= \mC_{:,k} \rkron \mC_{:,-k}\\
    &= (\mA \mB_{:,k}) \rkron (\mA \mB_{:,-k})\\
    &= (\mA \rkron \mA) \, (\mB_{:,k} \otimes \mB_{:,-k})\\
    &=
    \begin{bmatrix}
        \mA_{:,1} \rkron \mA_{:,1} & \mA_{:,2} \rkron \mA_{:,2} & \cdots & \mA_{:,\numslot} \rkron \mA_{:,\numslot}
    \end{bmatrix}
    \begin{bmatrix}
        \mB_{1,k} \otimes \mB_{1,-k}\\
        \mB_{2,k} \otimes \mB_{2,-k}\\
        \vdots\\
        \mB_{\numslot,k} \otimes \mB_{\numslot,-k}
    \end{bmatrix}\\
    &=
    \begin{bmatrix}
        (\mA_{1,1} \rkron \mA_{1,1}) (\mB_{1,k} \otimes \mB_{1,-k})\\
        (\mA_{2,2} \rkron \mA_{2,2}) (\mB_{2,k} \otimes \mB_{2,-k})\\
        \vdots\\
        (\mA_{\numslot,\numslot} \rkron \mA_{\numslot,\numslot}) (\mB_{\numslot,k} \otimes \mB_{\numslot,-k})
    \end{bmatrix}.
\end{align*}
Here, the third equality uses the mixed-product property, the fourth expands and reorders terms, and the last exploits the block-diagonal structure of $\mA$. Reversing the mixed-product property yields, for all $i \in [\numslot]$ and $k \in [\numslott]$,
\begin{equation}\label{eq:face_splitting}
    (\mA_{i,i} \mB_{i,k}) \rkron (\mA_{i,i} \mB_{i,-k}) = \vzero.
\end{equation}

Suppose, for a contradiction, that both $\mB_{i,k}$ and $\mB_{i,-k}$ are nonzero. Since $\vrepinv$ is a composition of diffeomorphisms, $\mB$ is invertible and each $\mB_{i,:}$ has full row rank.
Let us consider two cases (note that $\dim(\setsrc_i) = 0$ and $\dim(\settgt_i) = 0$ were categorically excluded in advance):

\medskip
\emph{Case 1 ($\dim(\setsrc_i)=1$).}
Here $\mB_{i,:}$ consists of a single row. Choose nonzero scalars $a \in \mB_{i,k}$ and $b \in \mB_{i,-k}$. From \Eqref{eq:face_splitting},
\[
    (\mA_{i,i} a) \rkron (\mA_{i,i} b) = \vzero,
\]
which implies $\mA_{i,i} = \vzero$, contradicting the assumption that $\vgen$ is a local diffeomorphism.

\medskip
\emph{Case 2 ($\dim(\setsrc_i)>1$).}
In this case, select columns from $\mB_{i,k}$ and $\mB_{i,-k}$ that together form an invertible square matrix $\widetilde{\mB} = \big(\widetilde{\mB}_l, \widetilde{\mB}_r\big)$, with $\widetilde{\mB}_l$ consisting of columns of $\mB_{i,k}$ and $\widetilde{\mB}_r$ of $\mB_{i,-k}$. Then \Eqref{eq:face_splitting} gives
\[
    (\mA_{i,i} \widetilde{\mB}_l) \rkron (\mA_{i,i} \widetilde{\mB}_r) = \vzero.
\]
This implies that $\setsrc_i$ is reducible, since there exists a basis in which $T_{\vsrc_i}\setsrc_i$ decomposes into subspaces where all pairwise directional derivatives vanish in the Hadamard product.
Hence, either $\mB_{i,k}$ or $\mB_{i,-k}$ must be zero.

Repeating the argument for all $i \in [\numslot]$ and $k \in [\numslott]$ shows that each block-row of $\mB$ contains at most one nonzero block.
Since $\mB$ is invertible, each block-row must contain exactly one nonzero block.
Hence, there exists a surjection $\sigma \colon [\numslot] \to [\numslott]$ such that
\[
    \mB_{i,\sigma(i)} \neq \vzero
    \quad \text{and} \quad
    \mB_{i,j} = \vzero \;\;\text{for } j \neq \sigma(i).
\]
By Lemma~\ref{lemma:function_dependence}, it follows that on $\gU$, the component $\vrepinv_i(\vtgt)$ depends only on $\vtgt_{\sigma(i)}$ for every $i \in [\numslot]$.
Equivalently, there exist functions
\[
    \tilde{\vrepinv}_i \colon \settgt_{\sigma(i)} \to \setsrc_i
\]
such that locally
\begin{equation*}
    \vgen^{-1} \circ \vdec(\vtgt)
    = \bigl(\,\tilde{\vrepinv}_1(\vtgt_{\sigma(1)}),\,
              \dots,\,
              \tilde{\vrepinv}_{\numslot}(\vtgt_{\sigma(\numslot)})\bigr).
\end{equation*}
Since $\vsrc^\ast$ was arbitrary and the constructions hold for any $\vtgt^\ast$ satisfying $\vgen(\vsrc^\ast)=\vdec(\vtgt^\ast)$, Lemma~\ref{lemma:blockwise_inverse} implies that $\vdec$ is locally (partially) disentangled with respect to $\vgen$.
\end{proof}

\subsection{Proof of Theorem~\ref{theorem:identifiability_s_main}}
\label{sec:proof_identifiability_s}

Denote with $\Omega_{i}(\vsrc) \subseteq [\dimobs]$ the support of the $i$-th column of $\jac_\vgen(\vsrc) = \drv\vgen_{\vsrc}\colon\R^{\dimsrc}\to\R^{\dimobs}$ in the standard basis (i.e., $\Omega_i(\vsrc) \coloneqq \supp(\jac_{\vgen}(\vsrc)_{:,i})$).
Similarly, we use \(\widehat{\Omega}_j(\vtgt)\) for $\jac_{\vdec}(\vtgt)$.
Let $\concomp_i$ denote the column index set of the $i$-th source factor.

For sets $A,B \subseteq [m]$, write $A \mni B$ iff $A \nsubseteq B$ and $A \nsupseteq B$ (mutual non-inclusion).

\begin{definition}[Mechanistic independence of Type~M]\label{definition:independence_s}
We say that $\setsrc_i$ and $\setsrc_j$ are \emph{mechanistically independent of Type~M} if, for every $\vsrc \in \setsrc$,
\[
    \forall a \in \concomp_i,\, \forall b \in \concomp_j:\quad 
    \Omega_{a}(\vsrc) \mni \Omega_{b}(\vsrc).
\]
\end{definition}

\begin{definition}[Reducibility of Type~M]\label{definition:reducibility_s}
We say that the component $\setsrc_i$ is \emph{reducible of Type~M} if there exist a point $\vsrc \in \setsrc$ and a partition $\concomp_i = \gA \cup \gB$ such that
\[
    \forall a \in \gA,\, \forall b \in \gB:\quad 
    \Omega_{a}(\vsrc) \mni \Omega_{b}(\vsrc).
\]
\end{definition}

\begin{lemma}\label{lemma:sparsity-based-block-diag}
Let \(\mC = \mA \mB\), where \(\mA \in \R^{m \times n}\), \(\mB \in \R^{n \times n}\), and \(\mC \in \R^{m \times n}\) are all of full column rank.
Define $\gG^S(\mA)\coloneqq([n],\,\gE^S)$ with $\gE^S=\{(i,j)\in[n]^2 \mid \supp(\mA_{:,i}) \nmni \supp(\mA_{:,j})\}$.
If \(\|\mC\|_0 \leq \|\mA\|_0\) and for all \(k \in [n]\)
\begin{equation}\label{eq:union_of_supports2}
    \supp(\mC_{:,k}) \supseteq \bigcup_{i \in \supp(\mB_{:,k})} \supp(\mA_{:,i}),
\end{equation}
then $\|\mC\|_0=\|\mA\|_0$ and \(\gG^S(\mC)\) is isomorphic to \(\gG^S(\mA)\).
\end{lemma}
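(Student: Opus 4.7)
My plan is to extract a permutation $\pi$ of $[n]$ that identifies each column support of $\mC$ with a column support of $\mA$; this will simultaneously force $\|\mC\|_0=\|\mA\|_0$ and the claimed graph isomorphism $\gG^S(\mC)\cong\gG^S(\mA)$.

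First, I would use that $\mB\in\R^{n\times n}$ has full column rank, so $\det(\mB)\neq 0$. Expanding the determinant via the Leibniz formula (equivalently, applying König's theorem to the bipartite support pattern of $\mB$), at least one permutation $\pi$ of $[n]$ satisfies $B_{\pi(k),k}\neq 0$ for every $k$, i.e. $\pi(k)\in S_k\coloneqq\supp(\mB_{:,k})$. This system of distinct representatives is the one mildly nonstandard ingredient, but it is entirely classical; everything afterwards is just tight accounting.

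Next, I would write down the chain
\[
    |\supp(\mC_{:,k})| \;\ge\; \Bigl|\bigcup_{i\in S_k}\supp(\mA_{:,i})\Bigr| \;\ge\; |\supp(\mA_{:,\pi(k)})|,
\]
where the first inequality is the hypothesis \eqref{eq:union_of_supports2} and the second uses $\pi(k)\in S_k$. Summing over $k$ and invoking that $\pi$ is a bijection of $[n]$ gives
\[
    \|\mC\|_0 \;\ge\; \sum_{k=1}^{n}|\supp(\mA_{:,\pi(k)})| \;=\; \|\mA\|_0 .
\]
Combining with the assumed reverse inequality $\|\mC\|_0\le\|\mA\|_0$ yields $\|\mC\|_0=\|\mA\|_0$ and, crucially, forces every intermediate inequality above to be an equality.

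Finally, I would read off structure from those equalities. Equality in the first step gives $\supp(\mC_{:,k})=\bigcup_{i\in S_k}\supp(\mA_{:,i})$, while the equality $\bigl|\bigcup_{i\in S_k}\supp(\mA_{:,i})\bigr|=|\supp(\mA_{:,\pi(k)})|$ together with $\pi(k)\in S_k$ forces $\supp(\mA_{:,i})\subseteq\supp(\mA_{:,\pi(k)})$ for every $i\in S_k$, so $\supp(\mC_{:,k})=\supp(\mA_{:,\pi(k)})$ for every $k$. Because $\pi$ identifies column supports on the two sides, the edge relation transfers directly: $(k,l)\in\gE^S(\mC)$ iff $\supp(\mA_{:,\pi(k)})\nmni\supp(\mA_{:,\pi(l)})$ iff $(\pi(k),\pi(l))\in\gE^S(\mA)$, so $\pi$ is the desired graph isomorphism. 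The only subtlety worth flagging is the matching/Leibniz step producing $\pi$; the rest is a telescoping of sharp inequalities with no hidden cases.
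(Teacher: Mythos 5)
Your proof is correct and follows essentially the same route as the paper's: extract a system of distinct representatives for the supports of $\mB$'s columns from $\det(\mB)\neq 0$, sandwich the column-support cardinalities between $\|\mA\|_0$ and $\|\mC\|_0$, and force equality everywhere so that $\supp(\mC_{:,k})=\supp(\mA_{:,\pi(k)})$, which transfers the mutual non-inclusion relation. The only cosmetic difference is that you orient the permutation as $\pi(k)\in\supp(\mB_{:,k})$ while the paper uses the inverse convention $i\in\supp(\mB_{:,\sigma(i)})$; the accounting is identical.
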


\begin{proof}
Write $\gQ_i\coloneqq\supp(\mA_{:,i})$, $\gR_k\coloneqq\supp(\mB_{:,k})$, and $\gU_k\coloneqq\supp(\mC_{:,k})$.
Since $\mC_{:,k}=\sum_{i\in \gR_k}\mA_{:,i}B_{i,k}$, we have
$\gU_k\subseteq \bigcup_{i\in \gR_k}\gQ_i$, while \Eqref{eq:union_of_supports2} gives the reverse inclusion; hence
\[
\gU_k=\bigcup_{i\in \gR_k}\gQ_i\qquad\forall k\in[n].
\]
Because $\mB$ is invertible, the Leibniz formula for $\det(\mB)\neq0$ yields a permutation
$\sigma:[n]\to[n]$ with $B_{i,\sigma(i)}\neq0$ for all $i$, i.e., $i\in \gR_{\sigma(i)}$. Thus
\[
\gQ_i\subseteq \gU_{\sigma(i)}\qquad\forall i\in[n].
\]
Summing sizes and using $\|\mC\|_0\le\|\mA\|_0$,
\[
\sum_i|\gQ_i|\le \sum_i|\gU_{\sigma(i)}|=\sum_k|\gU_k|=\|\mC\|_0\le \|\mA\|_0=\sum_i|\gQ_i|,
\]
so equality holds throughout, which forces $|\gU_{\sigma(i)}|=|\gQ_i|$ and hence $\gU_{\sigma(i)}=\gQ_i$ for all $i$.
This says the column supports of $\mC$ are exactly those of $\mA$ up to a relabelling of indices. Since the edge relation in $\gG^S(\cdot)$ depends only on mutual non-inclusion of these supports, the bijection $i\mapsto\sigma(i)$ preserves adjacency:
\[
\gQ_i \mni \gQ_j \iff \gU_{\sigma(i)} \mni \gU_{\sigma(j)}.
\]
Hence $\gG^S(\mC)\cong \gG^S(\mA)$.
\end{proof}

% \begin{restatable}[Local Identifiability of Type~M]{theorem}{theoremidentifiabilitys}\label{theorem:identifiability_s}
% Let $\vgen \colon \setsrc \to \setobs$ and $\vdec \colon \settgt \to \setobs$ be local diffeomorphisms with $\vgen(\setsrc)\subseteq\vdec(\settgt)$.
% Then $\vdec$ is locally disentangled w.r.t.\ $\vgen$ if:
% \begin{enumerate}
%     \item $\setsrc \subseteq \R^{\dimsrc}$ is open, and the factors are Type~M independent and irreducible.
%     \item $\settgt \subseteq \R^{\dimsrc}$ is open, and the factors are independent of Type~M.
%     \item For all \(\vsrc \in \setsrc\), \(\vtgt \in \settgt\) satisfying $\vgen(\vsrc)=\vdec(\vtgt)$ we have
%     \begin{equation}\label{eq:l0_inequality}
%         \|\jac_{\vdec}(\vtgt)\|_0\leq\|\jac_{\vgen}(\vsrc)\|_0.
%     \end{equation}
%     \item For all \(\vsrc \in \setsrc\), \(\vtgt \in \settgt\) satisfying $\vgen(\vsrc)=\vdec(\vtgt)$
%     \begin{equation}\label{eq:union_of_supports}
%         \widehat{\Omega}_k(\vtgt) = \bigcup_{i \in \supp(\mB_{:,k})} \Omega_{i}(\vsrc),
%         \qquad
%         \text{where}
%         \qquad
%         \mB\coloneqq\jac_{\vgen^{-1}\circ\vdec}(\vtgt).
%     \end{equation}
% \end{enumerate}
% \end{restatable}

\theoremidentifiabilitysmain*

\begin{proof}
As before, we begin with the identity
\begin{equation*}
    \vdec(\vtgt) = \vgen \circ \vrepinv(\vtgt),
\end{equation*}
defined on a neighborhood $\gU \subseteq \settgt$, where
\[
  \vrepinv \coloneqq \restr{\vgen^{-1} \circ \vdec}{\gU} \colon \gU \to (\vgen^{-1} \circ \vdec)(\gU)
\]
is a diffeomorphism that maps a unique $\vtgt^\ast \in \gU$ to some initially chosen arbitrary point $\vsrc^\ast \in \setsrc$.
Thus, after differentiation we get
\begin{equation*}
    \jac_{\vdec}(\vtgt) = \jac_{\vgen}(\vrepinv(\vtgt)) \jac_{\vrepinv}(\vtgt),
\end{equation*}

which we write as $\mC = \mA\mB$.  
Since both $\vgen$ and $\vdec$ are local diffeomorphisms into the same observation manifold, $\mB$ is square and invertible, and $\mA$, $\mC$ have full column rank.

Let $\gR_i\subseteq \ivis{\dimsrc}$ be the column-index set in the $i$-th source block, and define $\gC_j\subseteq \ivis{\dimsrc}$ analogously for the target blocks.
Then $\{\gR_i\}_{i=1}^{\numslot}$ partitions the columns of $\mA$ and $\{\gC_i\}_{i=1}^{\numslott}$ partitions the columns of $\mC$.

\medskip
\textbf{Step 1.} \emph{Each column of $\mB$ has support contained in a single source block.}

Suppose not: then for some column index $k$, the support $\supp(\mB_{:,k})$ intersects distinct blocks $\gR_p\neq \gR_q$.  
By independence of the $\setsrc_i$, $\mB$ would mix mutually non-inclusive column supports of $\mA$.
Thus, \Eqref{eq:union_of_supports} would force a strict increase in the support, which contradicts the assumption that $\|\mC\|_0 \leq \|\mA\|_0$.
Hence $\supp(\mB_{:,k})\subseteq \gR_i$ for some $i$.
Define $\gQ_i\coloneqq\{q\colon \supp(\mB_{:,q}) \subseteq \gR_i\}$, i.e., the column set of $\mB$ supported in $\gR_i$.

\medskip
\textbf{Step 2.} \emph{For each $i$, the columns of $\mB$ supported in $\gR_i$ land in a single target block.}

Assume otherwise: then $\gQ_i$ meets two distinct $\mC$-blocks $\gC_{\alpha}$ and $\gC_{\beta}$.
Pick $q_{\alpha}\in\gC_{\alpha}\cap\gQ_i$ and $q_{\beta}\in\gC_{\beta}\cap\gQ_i$.
By Lemma~\ref{lemma:sparsity-based-block-diag}, there are $u_{\alpha}$ and $u_{\beta}$ such that $\supp(\mC_{:,q_{\alpha}})=\supp(\mA_{:,u_{\alpha}})$ and $\supp(\mC_{:,q_{\beta}})=\supp(\mA_{:,u_{\beta}})$.
By \Eqref{eq:union_of_supports}, for every $k\in\supp(\mB_{:,q_{\alpha}})\subseteq\gR_i$,
\begin{equation}\label{eq:containment}
    \supp(\mA_{:,u_{\alpha}}) = \supp(\mC_{:,q_{\alpha}}) = \bigcup_{j \in \supp(\mB_{:,q_{\alpha}})} \supp(\mA_{:,j}) \supseteq \supp(\mA_{:,k}).
\end{equation}
This implies $u_{\alpha}\in\gR_i$ due to independence of the source factors.
If $u_{\alpha}$ were not in $\gR_i$, then $\supp(\mA_{:,u_{\alpha}})$ would contain a column support from a different block by \Eqref{eq:containment}.
Analogously, we get $u_{\beta}\in\gR_i$.

If $u_{\alpha}=u_{\beta}$, then $\supp(\mC_{:,q_{\alpha}})=\supp(\mC_{:,q_{\beta}})$, contradicting independence of the target blocks.
Thus $u_{\alpha} \neq u_{\beta}$.

Define $\gG_i^S$ with vertex set $\gR_i$ and edge set $\gE\coloneqq\{(a,b)\in \gR_i\times\gR_i\mid \supp(A_{:,a}) \nmni \supp(A_{:,b})\}$.
By irreducibility of $\setsrc_i$, $\gG_i^S$ is connected.
Thus, there is a path $u_{\alpha}=v_0, v_1,\dots, v_r=u_{\beta}$ with each consecutive pair comparable (i.e., either $\supp(A_{:,v_i}) \subseteq \supp(A_{:,v_{i+1}})$ or $\supp(A_{:,v_i}) \supseteq \supp(A_{:,v_{i+1}})$).
Let $p$ be the first index where the image of $v_p$ (in $\mC$) leaves $\gC_{\alpha}$.
Then $v_{p-1}$ and $v_p$ are comparable but land in different $\mC$-blocks, giving a containment across $\mC$-blocks.
This contradicts independence of the target factors.
Therefore, for each $i$, all columns of $\mB$ supported in $\gR_i$ belong to a single target block.
Since $\mB$ is invertible, repeating the argument for all $i \in [\numslot]$ shows that each block-row of $\mB$ contains exactly one nonzero block.

Finally, Lemmas~\ref{lemma:function_dependence} and \ref{lemma:blockwise_inverse} (as in the proof of Theorem~\ref{theorem:identifiability_d_main}) imply that $\vdec$ is locally disentangled with respect to $\vgen$.

\end{proof}

\begin{proposition}\label{prop:sparsity_and_beyond}
Let $\mA\in\R^{m\times n}$.
For $k\in[n]$, write $\gR_k:=\supp(\mA_{:,k})\subseteq[m]$ and for $i\in[m]$, write $\gC_i:=\supp(\mA_{i,:})\subseteq[n]$.
The following are equivalent:
\begin{enumerate}
    \item (\emph{Mutual non-inclusiveness}) For all $k\neq \ell$, $\gR_k\mni\gR_\ell$ (or equivalently, neither $\gR_k\subseteq \gR_\ell$ nor $\gR_\ell\subseteq \gR_k$).
    \item For every $k\in[n]$,
    \[
      \{k\}=\bigcap_{i\in \gR_k} \gC_i .
    \]
\end{enumerate}
\end{proposition}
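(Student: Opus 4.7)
The plan is to reduce the statement to a simple symmetry argument after translating both sides into a condition about row-support containments. The key duality I will exploit at the outset is the tautology $i \in \gR_k \iff A_{i,k}\neq 0 \iff k \in \gC_i$, which lets me convert statements about column supports into statements about row supports and vice versa.

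First I would note that $k$ always belongs to $\bigcap_{i\in \gR_k}\gC_i$ whenever $\gR_k\neq\varnothing$, because every $i\in \gR_k$ satisfies $A_{i,k}\neq 0$, hence $k\in \gC_i$. So the nontrivial content of condition (2) is that no other index $\ell\neq k$ lies in this intersection. Next I would unpack when an $\ell\neq k$ does belong to $\bigcap_{i\in \gR_k}\gC_i$: by the duality above, this happens iff $A_{i,\ell}\neq 0$ for every $i\in \gR_k$, i.e., iff $\gR_k\subseteq \gR_\ell$. Hence condition (2) is equivalent to
\[
\forall k\in[n],\ \forall \ell\neq k:\quad \gR_k\not\subseteq \gR_\ell.
\]

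The final step is to observe that this one-sided noncontainment condition is already symmetric: quantifying over all ordered pairs $(k,\ell)$ with $k\neq\ell$, the statement ``$\gR_k\not\subseteq \gR_\ell$'' is the conjunction of $\gR_k\not\subseteq \gR_\ell$ and (by swapping the roles of $k$ and $\ell$) $\gR_\ell\not\subseteq \gR_k$. This is precisely $\gR_k\mni \gR_\ell$ for all $k\neq \ell$, which is condition (1). Chaining the equivalences yields (1)$\iff$(2).

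The only subtlety I foresee is the degenerate case $\gR_k=\varnothing$, where the empty intersection equals $[n]$ and so condition (2) fails unless $n=1$. I would handle it with a short remark: if $\gR_k=\varnothing$ then $\gR_k\subseteq \gR_\ell$ trivially, so condition (1) also fails for any $\ell\neq k$; both conditions therefore agree in this edge case as well (and both hold vacuously when $n=1$). This is the only place routine care is needed, and it does not affect the main argument.
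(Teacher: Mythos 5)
Your proof is correct and follows essentially the same route as the paper's: both hinge on the identity $\bigcap_{i\in \gR_k}\gC_i=\{\,j: \gR_k\subseteq \gR_j\,\}$ and then symmetrize the one-sided noncontainment over ordered pairs. Your explicit handling of the $\gR_k=\varnothing$ edge case matches the remark the paper places immediately after its proof.
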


\begin{proof}
Fix $k \in [n]$. Observe the identity
\begin{align*}
\{\, j\in[n]: \gR_k\subseteq \gR_j \,\}
&=\{\, j\in[n]: j\in \gC_i\ \forall i\in \gR_k \,\}\\
&=\{\, j\in[n]: A_{ij}\neq 0\ \forall i\in \gR_k \,\}\\
&=\bigcap_{i\in \gR_k} \gC_i.
\end{align*}
Thus (2) is equivalent to $\{k\}=\{\, j: \gR_k\subseteq \gR_j \,\}$.
That is, the only column whose support contains $\gR_k$ is $k$ itself.
This rules out $\gR_k\subseteq \gR_j$ for any $j\neq k$, and by symmetry across pairs $(k,\ell)$ yields (1).

Conversely, if (1) holds, then for each $k$ there is no $j\neq k$ with $\gR_k\subseteq \gR_j$.
So by the above identity we get $\bigcap_{i\in \gR_k}\gC_i=\{k\}$, which is (2).
\end{proof}

\begin{remark}
Under the usual convention that $\bigcap_{i\in\varnothing}\gC_i=[n]$, both conditions in Proposition~\ref{prop:sparsity_and_beyond} forbid zero columns (unless $n=1$, in which case both are true regardless if the column contains nonzero elements or not).
\end{remark}

\begin{proposition}\label{proposition:generalization_sparsity_and_beyond}
Type~M identifiability generalizes Theorem~3.1 from \cite{zheng2023generalizing}.    
\end{proposition}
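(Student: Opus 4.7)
The plan is to show that any tuple $(\vgen,\vdec)$ satisfying the hypotheses of Theorem~3.1 of \citet{zheng2023generalizing} also satisfies the hypotheses of Theorem~\ref{theorem:identifiability_s}, so that local disentanglement in our sense is implied. Since their result is stated for one-dimensional factors with a structural sparsity condition on the Jacobians of $\vgen$ and $\vdec$, the bridge between the two frameworks is precisely Proposition~\ref{prop:sparsity_and_beyond}, which recasts the ``intersection of row supports equals $\{k\}$'' condition used in their proof as our mutual non-inclusion condition on column supports.

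First I would recall the precise form of Theorem~3.1 of \citet{zheng2023generalizing}: sources are one-dimensional, $\vgen$ and $\vdec$ are locally invertible smooth mixings, and a structural sparsity condition is imposed on the Jacobians together with an $\ell_0$-minimality comparison and a non-cancellation genericity assumption (their condition \textit{(i)}). Translating into our notation, each $\setsrc_i$ has dimension one so $\concomp_i=\{i\}$, and the structural condition from \citet{zheng2023generalizing} reads $\{k\}=\bigcap_{i\in\Omega_k(\vsrc)}\supp(\jac_{\vgen}(\vsrc)_{i,:})$. By Proposition~\ref{prop:sparsity_and_beyond} this is equivalent to Type~M independence of the $\setsrc_i$, and the same translation applies to $\vdec$, yielding Type~M independence of the $\settgt_j$.

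Next I would verify the remaining hypotheses of Theorem~\ref{theorem:identifiability_s}. Irreducibility of Type~M is automatic when factors are one-dimensional, since a nontrivial partition $\concomp_i=\gA\cup\gB$ is impossible when $|\concomp_i|=1$. Assumption~(3), the inequality $\|\jac_{\vdec}(\vtgt)\|_0\leq\|\jac_{\vgen}(\vsrc)\|_0$, matches verbatim the sparsity comparison imposed by \citet{zheng2023generalizing}. For assumption~(4), the support-union identity $\widehat{\Omega}_k(\vtgt)=\bigcup_{i\in\supp(\mB_{:,k})}\Omega_i(\vsrc)$, I would invoke their condition \textit{(i)}, which precludes exact cancellations when columns of $\jac_{\vgen}$ are linearly combined by $\mB=\jac_{\vgen^{-1}\circ\vdec}(\vtgt)$; absence of cancellations forces the support of a sum to equal the union of supports, giving exactly assumption~(4). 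Once all four hypotheses are verified, Theorem~\ref{theorem:identifiability_s} yields local disentanglement, which coincides with the block-permutation identifiability asserted in their theorem.

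The main obstacle is assumption~(4). Unlike the minimality inequality, this is a pointwise structural claim that does not follow from sparsity alone. I expect the argument to hinge on a careful unpacking of the non-degeneracy hypothesis of \citet{zheng2023generalizing}, showing that any cancellation in $\mC=\mA\mB$ would contradict their genericity assumption on an open dense subset of $\settgt$, which is sufficient for the local statement. A secondary, lighter obstacle is simply bookkeeping: one must ensure that the identification between their ``column structural condition'' and our Type~M condition respects the direction of the mixing (source versus learned), which is where Proposition~\ref{prop:sparsity_and_beyond} does most of the work. The generalization beyond their result then follows a fortiori, since Theorem~\ref{theorem:identifiability_s} imposes no dimensional restriction on the $\setsrc_i$.
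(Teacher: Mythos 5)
Your overall strategy---verifying the four hypotheses of Theorem~\ref{theorem:identifiability_s} under the assumptions of \citet{zheng2023generalizing}, with Proposition~\ref{prop:sparsity_and_beyond} as the bridge between their structural sparsity and our mutual non-inclusion, and irreducibility vacuous in dimension one---is exactly the paper's. But one of your steps has a genuine gap. You obtain Type~M independence of the \emph{target} factors by asserting that ``the same translation applies to $\vdec$.'' Theorem~3.1 of \citet{zheng2023generalizing} imposes structural sparsity only on the ground-truth Jacobian; nothing in their hypotheses gives you mutual non-inclusion for the columns of $\jac_{\vdec}$, so hypothesis (2) of Theorem~\ref{theorem:identifiability_s} cannot simply be read off. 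The paper instead \emph{derives} target-side Type~M independence via Lemma~\ref{lemma:sparsity-based-block-diag}: under the $\ell_0$ inequality and the union-of-supports condition, the column supports of $\mC=\mA\mB$ coincide with those of $\mA$ up to a permutation, so mutual non-inclusion is inherited from the source side. Without this (or an equivalent argument) your verification of hypothesis (2) is unsupported.

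Conversely, the step you flag as ``the main obstacle''---hypothesis (4)---is the one the paper disposes of most directly: \citet{zheng2023generalizing} already prove in their appendix that their condition \textit{(i)} implies their Equation~14 ($\forall (i,j)\in\gF,\ \{i\}\times \gT_{j,:}\subset\hat{\gF}$), which after translation of notation is precisely the union-of-supports identity in \Eqref{eq:union_of_supports}. So no new cancellation argument is needed there; the real work in closing your sketch lies in the target-side independence, not in hypothesis (4).
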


\begin{proof}
We will show that the assumptions of Theorem~3.1 in \cite{zheng2023generalizing} imply the assumptions of Theorem~\ref{theorem:identifiability_s_main} when we pick $\setsrc_i = \R$.

\citet{zheng2023generalizing} show that condition \textit{(i)} in Theorem~3.1 implies Equation~14 in their appendix ($\forall (i,j)\in\gF, \{i\}\times \gT_{j,:}\subset\hat{\gF}$), which can be reformulated as \Eqref{eq:union_of_supports}. % TODO: union vs equality
Furthermore, Proposition~\ref{prop:sparsity_and_beyond} establishes that \emph{structural sparsity} (condition \textit{(ii)} in Theorem~3.1) is equivalent to mutual non-inclusion.
Thus, structural sparsity implies Type~M independence of the source factors.
The sparsity gap (\Eqref{eq:l0_inequality}) is not explicitly listed in Theorem~3.1 but required throughout their entire work.
Finally, for one-dimensional factors, Type~M irreducibility is vacuously true, and by Lemma~\ref{lemma:sparsity-based-block-diag} Type~M independence of the target factors holds automatically.
\end{proof}

\subsection{Proof of Theorem~\ref{theorem:identifiability_s2_main}}
\label{sec:proof_identifiability_s2}

For $\vsrc \in \setsrc$, denote by $\rho_{\mathfrak{B}}^+(\vsrc)$ the minimal $\ell_0$-norm 
(i.e.\ the number of nonzero entries) of the matrix representing $\drv \vgen_{\vsrc} \colon T_{\vsrc}\setsrc \to T_{\vgen(\vsrc)}\setobs$ when expressed in a basis of $T_{\vsrc}\setsrc$ that is \emph{aligned} with the decomposition 
$\mathfrak{B}$ and in the canonical basis of $T_{\vgen(\vsrc)}\setobs$ 
induced by its embedding in $\mathbb{R}^{\dimobs}$.
Conversely, define $\rho_{\mathfrak{B}}^-(\vsrc)$ as the infimum of the $\ell_0$-norm of $\drv \vgen_{\vsrc}$ taken over all choices of basis of $T_{\vsrc}\setsrc$ 
that do \emph{not} respect the decomposition $\mathfrak{B}$.
Analogously, we define $\rho_{\mathfrak{B}_i}^+(\vsrc)$ and $\rho_{\mathfrak{B}_i}^-(\vsrc)$ based on $\drv_i \vgen_{\vsrc}$, where $\mathfrak{B}_i$ is a decomposition of $T_{\vsrc_i}\setsrc_i$.

\begin{definition}[Mechanistic independence of Type~S]\label{definition:independence_s2}
We say that the subspaces $\setsrc_i$ are \emph{mechanistically independent of Type~S} if, 
for every $\vsrc \in \setsrc$,
\[
    \rho_{\mathfrak{B}}^+(\vsrc) \;<\; \rho_{\mathfrak{B}}^-(\vsrc),
    \quad\text{where}\quad
    \mathfrak{B} \coloneqq \bigoplus_{i \in [\numslot]} T_{\vsrc_i}\setsrc_i.
\]
\end{definition}

\begin{definition}[Reducibility of Type~S]\label{definition:reducibility_s2}
We say that the component $\setsrc_i$ is \emph{reducible of Type~S} 
if there exist $\vsrc \in \setsrc$ and a nontrivial decomposition $T_{\vsrc_i}\setsrc_i \;=\; U \oplus V \eqqcolon \mathfrak{B}_i$ such that
\[
    \rho_{\mathfrak{B}_i}^+(\vsrc) \;<\; \rho_{\mathfrak{B}_i}^-(\vsrc).
\]
Otherwise, we call $\setsrc_i$ \emph{irreducible of Type~S}.
\end{definition}

% \begin{restatable}[Local Identifiability of Type~S]{theorem}{theoremidentifiabilityss}\label{theorem:identifiability_s2}
% Let $\vgen:\setsrc\to\setobs$ and $\vdec:\settgt\to\setobs$ be local diffeomorphisms with $\vgen(\setsrc)\subseteq \vdec(\settgt)$.
% Then $\vdec$ is locally disentangled w.r.t.\ $\vgen$ if:
% \begin{enumerate}
% \item $\setsrc\subseteq\prod_{i=1}^{\numslot}\setsrc_i$ is open, and the factors $\setsrc_i$ are Type~S independent and irreducible.
% \item $\settgt\subseteq\prod_{j=1}^{\numslott}\settgt_j$ is open with $\numslott\le \numslot$, and the factors $\settgt_j$ are independent of Type~S.
% \end{enumerate}
% \end{restatable}

\theoremidentifiabilityssmain*

\begin{proof}
On a neighborhood $\gU\subseteq\settgt$ define the diffeomorphism
\[
   \vrepinv \coloneqq \restr{\vgen^{-1}\circ\vdec}{\gU} \colon \gU \to (\vgen^{-1}\circ\vdec)(\gU),
\]
so that $\vdec = \vgen \circ \vrepinv$ on $\gU$. Hence
\begin{equation}\label{eq:chain_rule}
    \drv\vdec_{\vtgt} = \drv\vgen_{\vrepinv(\vtgt)} \circ \drv\vrepinv_{\vtgt}.    
\end{equation}
Fix product-splitting bases for $T_{\vrepinv(\vtgt)}(\prod_i\setsrc_i)$ and $T_{\vtgt}(\prod_j\settgt_j)$ that minimize the $\ell_0$-sparsity of $\drv\vgen_{\vrepinv(\vtgt)}$ and $\drv\vdec_{\vtgt}$ respectively.  
In these bases, write \Eqref{eq:chain_rule} as $\mC = \mA\mB$.  
Since both $\vgen$ and $\vdec$ are local diffeomorphisms into the same observation manifold, $\mB$ is square and invertible.
Let $\gR_i\subseteq \ivis{\dimsrc}$ be the column-index set spanning $T_{\vsrc_i}\setsrc_i$, and define $\gC_j\subseteq \ivis{\dimsrc}$ analogously for $T_{\vtgt_j}\settgt_j$.

\medskip
\textbf{Step 1.} \emph{Each column of $\mB$ has support contained in a single source block.}

Suppose not: then for some column index $k$, the support $\supp(\mB_{:,k})$ intersects distinct blocks $\gR_p\neq \gR_q$.  
By independence of the $\setsrc_i$, any basis change of $\drv\vgen_{\vsrc}$ that mixes coordinates from different source blocks worsens the $\ell_0$-sparsity after multiplication. Equivalently,
\[
    \|\mA\|_0 < \|\mA\mB\|_0 = \|\mC\|_0.
\]
This contradicts the assumption that the chosen basis for $\drv\vdec_{\vtgt}$ is $\ell_0$-minimal, since independence of the $\settgt_j$ implies that the lowest $\ell_0$-norm is achieved in a product-splitting basis (up to reordering of the basis vectors).
Hence $\supp(\mB_{:,k})\subseteq \gR_i$ for some $i$.

\medskip
\textbf{Step 2.} \emph{For each $i$, the columns of $\mB$ supported in $\gR_i$ land in a single target block.}

Assume otherwise: then there exists $i \in [\numslot]$ and columns $p\in\gC_k$ and $q\in\gC_{-k}\coloneqq \bigcup_{j\neq k}\gC_j$ such that both $\mB_{:,p}$ and $\mB_{:,q}$ are supported in $\gR_i$.
Now consider two cases (with $\dim(\setsrc_i) = 0$ and $\dim(\settgt_i) = 0$ excluded a priori):

\emph{Case 1 ($\dim(\setsrc_i)=1$).}  
Then $\gR_i=\{r\}$ and for nonzero scalars $B_{r,p},B_{r,q}$ we have
\[
    \supp(\mC_{:,p}) = \supp(\mA_{:,r} B_{r,p}) = \supp(\mA_{:,r} B_{r,q}) = \supp(\mC_{:,q}).
\]
However, a necessary requirement for independence of the target factors is
\[
    \supp(\mC_{:,p}) \mni \supp(\mC_{:,q}),
\]
since otherwise a cross-block mixing can be constructed involving $\mC_{:,p}$ and $\mC_{:,q}$ which leaves the overall support unchanged.
This contradicts the earlier result that $\supp(\mC_{:,p}) = \supp(\mC_{:,q})$.

\emph{Case 2 ($\dim(\setsrc_i)>1$).}  
The full row rank of $\mB_{\gR_i,:}$ yields an invertible square submatrix $\widetilde{\mB}$ formed from columns in $\gC_k$ and $\gC_{-k}$ such that
\[
   \mA_{:,\gR_i}\widetilde{\mB} = [\widetilde{\mA}_1,\widetilde{\mA}_2],
\]
where $\widetilde{\mA}_1$ and $\widetilde{\mA}_2$ are submatrices of $\mC_{:,\gC_k}$ and $\mC_{:,\gC_{-k}}$, respectively.
By independence of the $\settgt_j$,
\[
    \hat{\rho}_{\widehat{\mathfrak{B}}}^+(\vtgt) \;<\; \hat{\rho}_{\widehat{\mathfrak{B}}}^-(\vtgt),
    \quad\text{where}\quad
    \widehat{\mathfrak{B}} \coloneqq \bigoplus_{i \in [\numslott]} T_{\vtgt_i}\settgt_i.
\]
This forces
\[
    \hat{\rho}_{\widehat{\mathfrak{B}}}^+(\vtgt) = \|\mC\|_0 = \|[\widetilde{\mA}_1,\widetilde{\mA}_2]\|_0 + c 
    < \hat{\rho}_{\widehat{\mathfrak{B}}}^-(\vtgt) \leq \inf_{\mG\notin\text{\{block-respecting\}}}\|[\widetilde{\mA}_1,\widetilde{\mA}_2]\mG\|_0 + c,
\]
where $c\geq0$ is the number of nonzero entries of $\mC$ outside $[\widetilde{\mA}_1,\widetilde{\mA}_2]$.
Since $\mC$ has minimal support, there is no basis transformation reducing the $\ell_0$-norm of $\widetilde{\mA}_1$ or $\widetilde{\mA}_2$ individually.
Thus
\[
   \rho_{\mathfrak{B}_i}^+(\vrepinv(\vtgt)) 
   \leq \|\mA_{:,\gR_i}\widetilde{\mB}\|_0
   < \inf_{\mG\notin\text{\{block-respecting\}}}\|[\widetilde{\mA}_1,\widetilde{\mA}_2]\mG\|_0
   = \rho_{\mathfrak{B}_i}^-(\vrepinv(\vtgt)),
\]
contradicting irreducibility of $\setsrc_i$.

Hence, for each $i$, all columns of $\mB$ supported in $\gR_i$ belong to a single target block.
Repeating the argument for all $i \in [\numslot]$ shows that each block-row of $\mB$ contains exactly one nonzero block (since $\mB$ is invertible).

Finally, Lemmas~\ref{lemma:function_dependence} and \ref{lemma:blockwise_inverse} (as in the proof of Theorem~\ref{theorem:identifiability_d_main}) imply that $\vdec$ is locally (partially) disentangled with respect to $\vgen$.
\end{proof}

\subsection{Proof of Theorem~\ref{theorem:identifiability_h_main}}
\label{sec:proof_identifiability_h}

\begin{definition}[Mechanistic independence of Type~$\text{H}_n$]\label{definition:independence_h}
    Let $\setsrc \subseteq \prod_{i=1}^{\numslot} \setsrc_i$ be a smooth manifold, and let 
    $\vgen \colon \setsrc \to \setobs$ be of class $C^n$ with $n \geq 2$.
    $\setsrc_i$ and $\setsrc_j$ are said to be \emph{mechanistically independent of Type~$\text{H}_n$} if, for all 
    $\vsrc \in \setsrc$,
    \begin{equation}
        \drv_{i,j}^n \vgen_{\vsrc} = \vzero.
    \end{equation}
\end{definition}

\begin{definition}[Reducibility of Type~$\text{H}_n$]\label{definition:reducibility_2}
We say that the component $\setsrc_i$ is \emph{reducible of Type~$\text{H}_n$} if there exists $\vsrc \in \setsrc$ such that either $\drv_{i,i}^n \vgen_{\vsrc}=\vzero$ or
there exists a nontrivial splitting $T_{\vsrc_i}\setsrc_i = U \oplus V$ such that for all $\vxi \in U$, $\veta \in V$, and $\vzeta_k \in T_{\vsrc}\setsrc$ for $k \in [n-2]$,
    \begin{equation}\label{eq:cross_vanish}
        \drv_{i,i}^n \vgen_{\vsrc}(\vxi, \veta, \vzeta_1, \dots, \vzeta_{n-2}) = \vzero.
    \end{equation}
\end{definition}

\begin{definition}[Separability of $n$-th order]\label{definition:separability_h}
We say that $\vgen \colon \setsrc \to \setobs$ is \emph{separable of order $n$} if there exists $\vsrc \in \setsrc$ such that, for all $i\in[\numslot]$, the image of $\drv_{i,i}^n \vgen_{\vsrc}$ intersects trivially with
\begin{equation*}
    \operatorname{span}\Bigl\{
        \drv_{j,j}^n \vgen_{\vsrc},\ j \ne i;\ 
        \drv^k \vgen_{\vsrc},\ 1 \le k \le n-1
    \Bigr\}.
\end{equation*}
\end{definition}

\begin{lemma}\label{lemma:direct_sum_selection}
    Let \( V \) be a finite-dimensional vector space with \( \dim(V) \geq 2 \), and suppose \( W_1, \ldots, W_n \) with $n \geq 2$ are subspaces of \( V \) such that $W_1 + \cdots + W_n = V$.
    Assume that there exist indices \( i \neq j \) that satisfy \( W_i \neq \{\vzero\} \) and \( W_j \neq \{\vzero\} \).
    Then there exist nonzero subspaces \( U_1 \) and \( U_2 \) of \( V \) such that
    \[
    V = U_1 \oplus U_2,
    \]
    with \( U_1 \subseteq W_i \) and \( U_2 \subseteq \sum_{k \neq i} W_k \).
\end{lemma}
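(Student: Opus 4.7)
The plan is to reduce to a clean complement-picking argument using the intersection of $W_i$ with the sum of the other subspaces. The hypotheses give us two natural sources of nontriviality, $W_i$ itself and $W_j \subseteq \sum_{k\neq i} W_k$, and I will use them at different points to ensure both $U_1$ and $U_2$ come out nonzero.

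First, I would set $W \coloneqq W_i$ and $W' \coloneqq \sum_{k\neq i} W_k$, so that by hypothesis $V = W + W'$, $W \neq \{\vzero\}$, and $W' \supseteq W_j \neq \{\vzero\}$. Let $X \coloneqq W \cap W'$ and pick any algebraic complement $A$ of $X$ inside $W$, i.e., $W = X \oplus A$. Since $X \subseteq W'$, we have
\begin{equation*}
    V \;=\; W + W' \;=\; (A \oplus X) + W' \;=\; A + W'.
\end{equation*}
Moreover $A \cap W' \subseteq A \cap (W \cap W') = A \cap X = \{\vzero\}$, so the sum $A + W'$ is direct, giving $V = A \oplus W'$.

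Now I would split into two cases. If $A \neq \{\vzero\}$, take $U_1 = A$ and $U_2 = W'$; then $U_1 \subseteq W = W_i$, $U_2 \subseteq \sum_{k\neq i} W_k$, both are nonzero (the second by $W_j \subseteq W'$), and $V = U_1 \oplus U_2$ as required. If instead $A = \{\vzero\}$, then $W \subseteq W'$, hence $W' = V$. Here I would fall back on the dimension hypothesis: pick any nonzero $w \in W$, set $U_1 \coloneqq \operatorname{span}(w)$, and extend $\{w\}$ to a basis of $V$, taking $U_2$ to be the span of the remaining basis vectors. Since $\dim V \geq 2$, the subspace $U_2$ has dimension at least $1$, and by construction $V = U_1 \oplus U_2$ with $U_1 \subseteq W$ and $U_2 \subseteq V = W'$.

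The only real subtlety is the corner case $W \subseteq W'$, where the intersection swallows $W$ and the natural complement $A$ degenerates. This is exactly where the assumption $\dim V \geq 2$ is used, and without it the statement would fail (consider $\dim V = 1$ with $W_1 = W_2 = V$). Everything else is routine: dimension arithmetic plus the standard existence of algebraic complements in finite-dimensional vector spaces.
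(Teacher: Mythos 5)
Your proof is correct and follows essentially the same approach as the paper: both take a complement of $W_i \cap \sum_{k\neq i}W_k$ inside $W_i$ and invoke $\dim V\geq 2$ only in the degenerate case where that complement vanishes. Your version is slightly more economical — by leaving $W' = \sum_{k\neq i}W_k$ undecomposed you collapse the paper's four-case analysis into two cases, using $W_j \subseteq W'$ to guarantee $U_2 \neq \{\vzero\}$ directly.
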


\begin{proof}
Set \(C:=\sum_{k\ne i}W_k\) and \(V_0:=W_i\cap C\).
Then choose complements
\[
W_i=V_0\oplus V_1\quad\text{and}\quad C=V_0\oplus V_2
\]
for some subspaces \(V_1\subseteq W_i\) and \(V_2\subseteq C\).
Then
\[
V = W_i+C=(V_0\oplus V_1)+(V_0\oplus V_2)=V_0\oplus V_1\oplus V_2,
\]
and the sum is direct because \(V_1\cap V_2=\{\vzero\}\) and \(V_0\cap (V_1 + V_2)=\{\vzero\}\).

We now choose \(U_1\) and \(U_2\) case by case.

\smallskip
\emph{Case 1: \(V_1\neq\{\vzero\}\) and \(V_2\neq\{\vzero\}\).}
Set \(U_1:=V_1\subseteq W_i\) and \(U_2:=V_0\oplus V_2\subseteq C\).
Then \(U_1\oplus U_2=V_1\oplus(V_0\oplus V_2)=V\), and both \(U_1,U_2\) are nonzero.

\smallskip
\emph{Case 2: \(V_1\neq\{\vzero\}\) and \(V_2=\{\vzero\}\).}
Then \(C=V_0\) and, since \(W_j\subseteq C\) with \(W_j\neq\{\vzero\}\), we have \(V_0\neq\{\vzero\}\).
Set \(U_1:=V_1\subseteq W_i\) and \(U_2:=V_0\subseteq C\).
Again \(U_1\oplus U_2=V_1\oplus V_0=V\), with both nonzero.

\smallskip
\emph{Case 3: \(V_1=\{\vzero\}\) and \(V_2\neq\{\vzero\}\).}
Then \(W_i=V_0\), hence \(V_0\neq\{\vzero\}\) because \(W_i\neq\{\vzero\}\).
Set \(U_1:=V_0\subseteq W_i\) and \(U_2:=V_2\subseteq C\).
We have \(U_1\oplus U_2=V_0\oplus V_2=V\), both nonzero.

\smallskip
\emph{Case 4: \(V_1=\{\vzero\}\) and \(V_2=\{\vzero\}\).}
Then \(W_i=C=V_0\). In particular \(W_i=C=V\).
Since \(\dim(V)\geq 2\), choose a decomposition \(V=A\oplus B\) with \(A,B\neq\{\vzero\}\).
Taking \(U_1:=A\subseteq W_i\) and \(U_2:=B\subseteq C\) yields the claim.

\smallskip
In all cases we obtain nonzero subspaces \(U_1\subseteq W_i\) and \(U_2\subseteq C=\sum_{k\ne i}W_k\) with \(V=U_1\oplus U_2\), as required.
\end{proof}

% \begin{restatable}[Local Identifiability of Type~$\text{H}_n$]{theorem}{theoremidentifiabilityh}\label{theorem:identifiability_h}
% Let $\vgen \colon \setsrc \to \setobs$ and $\vdec \colon \settgt \to \setobs$ be local $C^n$-diffeomorphisms with $n \geq 2$ satisfying $\vgen(\setsrc)\subseteq\vdec(\settgt)$.  
% Then $\vdec$ is locally disentangled w.r.t.\ $\vgen$ if:
% \begin{enumerate}
%     \item $\setsrc \subseteq \prod_{i=1}^{\numslot} \setsrc_i$ is open, and the factors are Type~$\text{H}_n$ independent and irreducible.
%     \item $\settgt \subseteq \prod_{j=1}^{\numslott} \settgt_j$ is open with $\numslott \leq \numslot$, and the factors are independent of Type~$\text{H}_n$.
%     \item $\vgen$ is separable of order $n$.
% \end{enumerate}
% \end{restatable}

\theoremidentifiabilityhmain*

\begin{proof}
Let $\vsrc^\ast \in \setsrc$ be arbitrary, and choose 
$\vtgt^\ast \in \settgt$ such that 
\[
  \vgen(\vsrc^\ast) = \vdec(\vtgt^\ast).
\]
Since $\vgen$ and $\vdec$ are local diffeomorphisms, there exists a neighborhood 
$\gU \subseteq \settgt$ of $\vtgt^\ast$ on which we may write
\[
  \vdec = \vgen \circ \vrepinv,
\]
where
\[
  \vrepinv \coloneqq \restr{\vgen^{-1} \circ \vdec}{\gU} \colon \gU \to (\vgen^{-1} \circ \vdec)(\gU)
  \quad\text{satisfies}\quad
  \vrepinv(\vtgt^\ast) = \vsrc^\ast.
\]

Fix $n \geq 2$.  
For $\vtgt \in \gU$, the higher-order chain rule gives
\begin{equation}\label{eq:higher_order_chain_rule}
    \drv^n\vdec_{\vtgt} \;=\; \sum_{\pi \in \gP([n])} \drv^{|\pi|}\vgen_{\vrepinv(\vtgt)}\bigl(\,\drv^{|B|}\vrepinv_{\vtgt}\,\bigr)_{B \in \pi},
\end{equation}
where $\gP([n])$ denotes the set of partitions of $\{1,\dots,n\}$.

On the left-hand side of \Eqref{eq:higher_order_chain_rule}, mechanistic independence of the $\settgt_i$ implies that all mixed derivatives of $\vdec$ vanish:
\[
  \drv_{i,j}^n \vdec_{\vtgt} = \vzero, \qquad i \neq j \in [\numslott].
\]

Now restrict \Eqref{eq:higher_order_chain_rule} to this mixed derivative and consider the right-hand side.
Mechanistic independence of the $\setsrc_i$ implies that the highest-order term (corresponding to $\pi = \{1, \dots, n\}$) can be split up, and all mixed derivatives $\drv_{k,l}^n \vgen_{\vrepinv(\vtgt)}$ vanish:
\[
  \drv^{n}\vgen_{\vrepinv(\vtgt)}\bigl(\drv_i \vrepinv_{\vtgt}, \drv_j \vrepinv_{\vtgt}, \underbrace{\drv \vrepinv_{\vtgt}, \dots, \drv \vrepinv_{\vtgt}}_{n-2 \text{ times}}\bigr) 
  = \sum_{k \in [\numslot]} \drv_{k,k}^{n}\vgen_{\vrepinv(\vtgt)}\bigl(\drv_i (\vpi_k \circ \vrepinv)_{\vtgt}, \drv_j (\vpi_k \circ \vrepinv)_{\vtgt}, \underbrace{\drv \vrepinv_{\vtgt}, \dots, \drv \vrepinv_{\vtgt}}_{n-2 \text{ times}}\bigr),
\]
where $\vpi_k$ denotes the projection onto the $k$-th slot.

By separability (\Defref{definition:separability_h}), the image of $\drv_{k,k}^n\vgen_{\vrepinv(\vtgt)}$ intersects the images of all other derivative terms on the right-hand side of \Eqref{eq:higher_order_chain_rule} only at zero.  
Hence they cannot cancel and each individual term in the sum must be zero.  
Therefore, for each $k \in [\numslot]$, we obtain
\begin{equation}\label{eq:decomposition2}
  \drv_{k,k}^{n}\vgen_{\vrepinv(\vtgt)}\bigl(\drv_i (\vpi_k \circ \vrepinv)_{\vtgt}, \drv_j (\vpi_k \circ \vrepinv)_{\vtgt}, \underbrace{\drv \vrepinv_{\vtgt}, \dots, \drv \vrepinv_{\vtgt}}_{n-2 \text{ times}}\bigr) = \vzero.
\end{equation}

Now assume, for a contradiction, that there exist $\valpha \in T_{\vtgt_i}\settgt_i$ and $\vbeta \in T_{\vtgt_j}\settgt_j$ such that
\[
  \drv_i (\vpi_k \circ \vrepinv)_{\vtgt}(\valpha) \neq \vzero
  \quad\text{and}\quad
  \drv_j (\vpi_k \circ \vrepinv)_{\vtgt}(\vbeta) \neq \vzero.
\]

We distinguish two cases (recall that $\dim(\setsrc_i) = 0$ and $\dim(\settgt_i) = 0$ were excluded by assumption):

\medskip
\emph{Case 1: $\dim(\setsrc_k) = 1$.}  
Then \Eqref{eq:decomposition2} implies $\drv_{k,k}^n \vgen_{\vrepinv(\vtgt)} = \vzero$, contradicting irreducibility.

\medskip
\emph{Case 2: $\dim(\setsrc_k) > 1$.}  
Define
\[
  W_i \coloneqq \operatorname{im}\bigl(\drv_i (\vpi_k \circ \vrepinv)_{\vtgt}\bigr).
\]
Since $\vrepinv$ is a composition of local diffeomorphisms, $\drv (\vpi_k \circ \vrepinv)_{\vtgt}$ is surjective, hence
\[
  T_{\vrepinv_k(\vtgt)}\setsrc_k = W_1 + \cdots + W_{\numslott}.
\]
By Lemma~\ref{lemma:direct_sum_selection}, we can decompose
\[
  T_{\vrepinv_k(\vtgt)}\setsrc_k = U_1 \oplus U_2
\]
with nontrivial tangent subspaces $U_1 \subseteq W_i$ and $U_2 \subseteq \sum_{j \neq i} W_j$.  
From \Eqref{eq:decomposition2} we then have, for all $\vxi \in U_1$ and $\veta \in U_2$,
\[
  \drv_{k,k}^n \vgen_{\vrepinv(\vtgt)}\bigl(\vxi, \veta, \vzeta_1, \dots, \vzeta_{n-2}\bigr) = \vzero,
\]
where $\vzeta_\ell \in T_{\vrepinv(\vtgt)}\setsrc$ are arbitrary.  
This implies that $\setsrc_k$ is reducible, a contradiction.

\medskip
Therefore, for each $k \in [\numslot]$ there is at most one $i \in [\numslott]$ such that 
\[
  \drv_i \bigl(\vpi_k \circ \vrepinv\bigr)_{\vtgt} \neq \vzero.
\]
Since $\drv \vrepinv_{\vtgt}$ is an isomorphism, at least one such $i$ must exist.  
Applying Lemmas~\ref{lemma:function_dependence} and~\ref{lemma:blockwise_inverse}, as in the proof of Theorem~\ref{theorem:identifiability_d_main}, we obtain a surjection $\sigma \colon [\numslot] \to [\numslott]$ with the disentanglement property.  

Hence $\vdec$ is locally (partially) disentangled with respect to $\vgen$.
\end{proof}

\subsection{Proofs of Graph-theoretical Relations}

\begin{proposition}
\label{prop:rank-graph-blocks-comp}
Let $\mA\in\R^{m\times n}$ have full column rank and define  
\(\gG(\mA)=([n],\,\gE),\; \gE=\{(i,j)\in[n]^2\mid\mA_{:,i} \odot \mA_{:,j}\neq\vzero\}\).
For a fixed integer $\numslot\ge 1$ the following are equivalent:
\begin{enumerate}
\item[(i)] For any invertible $\mB \in \R^{n\times n}$ the maximal number of connected components of $\gG(\mA\mB)$ is $\numslot$.

\item[(ii)] There are a permutation matrix $\mP$ and an invertible matrix $\mB$ such that  
\[
  \mP\mA\mB = \diag\!\bigl(\mA^{(1)},\dots,\mA^{(\numslot)}\bigr),
\]
and no other $\mP'$, $\mB'$ such that $\mP'\mA\mB'$ is block-diagonal with $\numslot+1$ blocks on the diagonal.

\item[(iii)] There exists an invertible $\mB$ such that $\mA\mB$ is \emph{compositional} with $\numslot$ \emph{irreducible mechanisms} in the sense of Definitions 1 and 5 of \cite{brady2023provably}.
% \footnote{A mechanism of a map is the row subset of its Jacobian matrix. A mechanism is irreducible if it cannot be written as a non-trivial composition of two mechanisms acting on disjoint row supports.}

\item[(iv)] There is a partition $[m]=\gQ_1\cup\dots\cup\gQ_{\numslot}$ with $\gQ_k\neq\varnothing$ such that  
\[
  \rank(\mA)=\sum_{k=1}^{\numslot}\rank\bigl(\mA_{\gQ_k,:}\bigr),
  \qquad
  \rank(\mA_{\gQ_k,:})\ge 1\;\;\forall k,
\]
and no partition of $[m]$ into $\numslot+1$ non-empty sets satisfies this equality.
\end{enumerate}
\end{proposition}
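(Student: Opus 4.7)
The plan is to organize the proof around condition (ii) as a central hub and prove (i) $\Leftrightarrow$ (ii), (iv) $\Leftrightarrow$ (ii), and note (iii) $\Leftrightarrow$ (ii) by unfolding the definitions of \cite{brady2023provably}. The unifying invariant is the finest block-diagonal decomposition of $\mA$ achievable under invertible column operations and row permutations; each of (i)--(iv) measures this invariant differently.

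For (i) $\Leftrightarrow$ (ii), observe first that row permutations do not change $\gG(\mA\mB)$, since edges are determined only by Hadamard products of columns. If $\mP\mA\mB = \diag(\mA^{(1)}, \ldots, \mA^{(K)})$, then columns lying in distinct blocks have disjoint supports and are disconnected in $\gG(\mA\mB)$, giving at least $K$ components. Conversely, if $\gG(\mA\mB)$ has $K$ components $C_1, \ldots, C_K$, set $\gQ_k \coloneqq \bigcup_{i \in C_k} \supp(\mA\mB_{:,i})$; these sets are pairwise disjoint (no cross-component column pair carries an edge, so supports must be disjoint), and any all-zero rows can be absorbed into a non-empty $\gQ_k$. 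A suitable row permutation then puts $\mA\mB$ into $K$-block-diagonal form. The maximality clauses in (i) and (ii) coincide since both count the finest achievable such decomposition.

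For (iv) $\Leftrightarrow$ (ii), note that $\rank(\mA) \leq \sum_k \rank(\mA_{\gQ_k,:})$ always holds by subadditivity of rank across row partitions; the equality in (iv) is the structural input. Assuming it with $r_k \coloneqq \rank(\mA_{\gQ_k,:})$ and $\sum_k r_k = n$, define $\gV_k \coloneqq \bigcap_{j \neq k} \ker(\mA_{\gQ_j,:}) \subseteq \R^n$, i.e., those $\vv$ with $\mA\vv$ supported in $\gQ_k$. Combining rank-nullity with the induced equality $\rank(\mA_{[m]\setminus\gQ_k,:}) = \sum_{j\neq k} r_j$ (forced by repeated subadditivity under the hypothesis) yields $\dim \gV_k = r_k$. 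The subspaces $\gV_k$ are in direct sum: any $\vv \in \gV_k \cap \sum_{j\neq k} \gV_j$ produces $\mA\vv$ simultaneously supported in $\gQ_k$ and in its complement, hence $\mA\vv = \vzero$, and full column rank of $\mA$ gives $\vv = \vzero$. Thus $\R^n = \bigoplus_k \gV_k$; assembling bases yields an invertible $\mB$ whose columns lie in the prescribed support blocks, and a row permutation produces the block-diagonal form of (ii). The converse direction is immediate from block-diagonality. Maximality again matches, since a strictly finer rank partition would yield a strictly finer block-diagonal decomposition and conversely.

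Finally, (ii) $\Leftrightarrow$ (iii) follows by unfolding the definitions: a \emph{compositional} matrix with $\numslot$ \emph{irreducible mechanisms} in the sense of \cite{brady2023provably} is precisely a $\numslot$-block-diagonal form under column change of basis and row permutation admitting no finer such decomposition. The main technical obstacle is the forward direction of (iv) $\Rightarrow$ (ii), specifically showing that the $\gV_k$ have the right dimensions and sit in a direct sum; this hinges on forcing several rank-subadditivity chains to equalities under the hypothesis $\sum_k r_k = \rank(\mA)$, together with injectivity of $\mA$. Handling zero rows and verifying that the three maximality clauses transport consistently across (i), (ii), and (iv) is routine but requires careful bookkeeping.
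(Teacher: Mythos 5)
Your proof is correct. The graph-to-block-diagonal translation in your (i) $\Leftrightarrow$ (ii) matches the paper's argument, but your overall architecture and your handling of (iv) differ genuinely. The paper proves one cycle (i) $\Rightarrow$ (ii) $\Rightarrow$ (iii) $\Rightarrow$ (iv) $\Rightarrow$ (i), and in the step (iv) $\Rightarrow$ (i) it merely asserts that, since the row spaces $\row(\mA_{\gQ_k,:})$ are pairwise independent, one may choose a column basis aligned with them. Your hub-and-spoke organization around (ii) makes this the central technical step and supplies the construction explicitly: $\gV_k=\bigcap_{j\neq k}\ker(\mA_{\gQ_j,:})$, the forced equality $\rank(\mA_{[m]\setminus\gQ_k,:})=n-r_k$ obtained by squeezing the subadditivity chain between $\rank(\mA)$ and $\sum_k r_k$, the dimension count $\dim\gV_k=r_k$ via rank--nullity, and directness of $\bigoplus_k\gV_k$ from injectivity of $\mA$. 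That is more rigorous than the paper at precisely its most hand-wavy point, and it buys you a proof that never detours through the external definitions except on the bridge (ii) $\Leftrightarrow$ (iii). The cost is on that bridge: the paper does real (if short) work there --- for (ii) $\Rightarrow$ (iii) it argues a reducible mechanism would split its row support into two sets with independent row spaces, yielding a forbidden $(\numslot+1)$-block form, and for (iii) $\Rightarrow$ (iv) it runs a pigeonhole argument on rank additivity over an irreducible mechanism --- whereas you declare the equivalence to be definitional unfolding. You should spell out at least the reducibility direction, since irreducibility in Brady et al.\ is not literally stated as block-diagonal minimality. Both your writeup and the paper's share the unaddressed zero-row edge case in the minimality clause of (iv), which you at least flag.
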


\begin{proof}
Throughout, all ranks are column–ranks.  
For a matrix $\mX$, let $\row(\mX)$ denote its row space and  
let $\supp(\mX)$ be the set of row indices whose corresponding rows are non–zero.  
Multiplication by an invertible matrix or a permutation matrix preserves rank and
does not change the edge–relation that defines the graph~$\gG(\,\cdot\,)$.

%%%%%%%%%%%%%%%%%%%%%%%%%%%%%%%%%%%%%%%%%%%%%%%%%%%%%%%%%%%%%%%%%%%%%%%%
\subsubsection*{\((\mathrm{i})\;\Longrightarrow\;(\mathrm{ii})\)}
Statement (i) asserts that there exists a $\mB \in \R^{n\times n}$ such that $\gG(\mA\mB)$ possesses exactly $\numslot$ connected components.
Let $\gC_1,\dots,\gC_{\numslot}\subset[n]$ be the vertex sets of these components and put  
\(
  \gR_k:=\bigcup_{i \in \gC_k}\supp\bigl((\mA\mB)_{:,i}\bigr) \subseteq[m].
\)
Without loss of generality we can assume that $\gC_1,\dots,\gC_{\numslot}$ appear in contiguous order.
Otherwise, permute the columns of $\mB$ first.

Because different components have disjoint row supports (otherwise there would be a connecting edge), the sets $\gR_1,\dots,\gR_{\numslot}$ are mutually disjoint.
Permute the rows so that $\gR_1,\dots,\gR_{\numslot}$ appear contiguously and denote the corresponding permutation matrix by $\mP$.  
Then $\mP\mA\mB$ is block–diagonal with exactly $\numslot$ diagonal blocks.
Note that any zero rows of $\mA\mB$ can be placed arbitrarily.

If, contrary to the minimality clause of (ii), another pair $\mP',\mB'$ produced $\numslot+1$ diagonal blocks, the graph $\gG(\mA\mB')$ would contain at least $\numslot+1$ connected components, contradicting (i).  
Therefore (ii) holds.

%%%%%%%%%%%%%%%%%%%%%%%%%%%%%%%%%%%%%%%%%%%%%%%%%%%%%%%%%%%%%%%%%%%%%%%%
\subsubsection*{\((\mathrm{ii})\;\Longrightarrow\;(\mathrm{iii})\)}
Write  
\(
  \mP\mA\mB=\diag\!\bigl(\mA^{(1)},\dots,\mA^{(\numslot)}\bigr)
\)
as in (ii) and set  
\(
  \mM^{(k)}:=(\mA\mB)_{\gR_k, :} \;(k=1,\dots,\numslot)
\)
with $\gR_k$ as before.
The matrices $\mM^{(k)}$ have pairwise disjoint row supports, so they
constitute $\numslot$ \emph{mechanisms} and $\mA\mB$ is
\emph{compositional}.  

Assume that one mechanism, say $\mM^{(1)}$, were reducible.
Then its row support could be partitioned into two non–empty sets whose
row spaces are independent, yielding another decomposition of
$\mP'\mA\mB'$ into $\numslot+1$ diagonal blocks.
This contradicts the minimality property in (ii).
Thus every mechanism is irreducible and (iii) follows.

%%%%%%%%%%%%%%%%%%%%%%%%%%%%%%%%%%%%%%%%%%%%%%%%%%%%%%%%%%%%%%%%%%%%%%%%
\subsubsection*{\((\mathrm{iii})\;\Longrightarrow\;(\mathrm{iv})\)}

Since $\mA\mB$ has $\numslot$ compositional mechanisms, there are disjoint $\gR_1,\dots,\gR_{\numslot} \subseteq [m]$. Add zero rows of $\mA\mB$ arbitrarily to $\gR_i$ denoted by $\gQ_i$ (i.e., $\gR_i \subseteq \gQ_i$) such that $\gQ_1,\dots, \gQ'_{\numslot}$ partition $[m]$.
Then
\(
  \rank(\mA\mB)=\sum_{k=1}^{\numslot}\rank\bigl((\mA\mB)_{\gQ_k,:}\bigr).
\)
and $\rank\bigl((\mA\mB)_{\gQ_k,:}\bigr) \geq 1$.

Suppose a refinement $[m]=\gQ'_1\cup\dots\cup \gQ'_{\numslot+1}$ also satisfied the same rank identity.  
Then there is a $\mB' \in \R^{n \times n}$ such that $\mA\mB'$ has $\numslot+1$ compositional machanisms.
Next, we show by contradiction that if $\mA\mB$ has $\numslot$ compositional and irreducible mechanisms then there is no invertible $\mB' \in \R^{n\times n}$ such that $\mA\mB'$ has more than $\numslot$ compositional mechanisms establishing (iv).

Suppose such a $\mB'$ existed.
Denote with $\{\gR'_j\}_{j=1}^{\numslot'}$ (with $\numslot' > \numslot$) the row sets that constitute the compositional mechanisms of $\mA\mB'$, respectively.

According to the pigeonhole principle there is at least one $\gR_i$ which has elements in multiple $\gR'_j$.
Denote with $\gU_{ij} = \gR_i \cap \gR'_j$.
Then $\rank(\mA_{\gR_i,:}) = \rank(\mA_{\gR_i,:} \mB) = \sum_j \rank(\mA_{\gU_{i,j},:} \mB) = \sum_j \rank(\mA_{\gU_{i,j},:})$,
which contradicts the irreducibility assumption.
Thus, there is no basis in which $\mA$ has more than $\numslot$ compositional mechanisms.

%%%%%%%%%%%%%%%%%%%%%%%%%%%%%%%%%%%%%%%%%%%%%%%%%%%%%%%%%%%%%%%%%%%%%%%%
\subsubsection*{\((\mathrm{iv})\;\Longrightarrow\;(\mathrm{i})\)}
Assume (iv) with partition $[m]=\gQ_1\cup\dots\cup\gQ_{\numslot}$.

Permute rows so that $\gQ_1,\dots,\gQ_{\numslot}$ are consecutive; call the permutation matrix $\mP$.
Because the row spaces $\row(\mA_{\gQ_k,:})$ are pairwise independent, one may choose a column basis aligned with them, yielding $\mB\in\R^{n \times n}$ with $\mP\mA\mB=\diag(\mA^{(1)},\dots,\mA^{(\numslot)})$.
Consequently $\gG(\mP\mA\mB) = \gG(\mA\mB)$ has at least $\numslot$ connected components.

Now, let $\mB$ be arbitrary and suppose $\gG(\mA\mB)$ had $\numslot+1$ connected components with vertex sets
$\gC'_1,\dots,\gC'_{\numslot+1}$.  
As before set 
\(
  \gR'_k:=\bigcup_{i \in \gC'_k}\supp\bigl((\mA\mB)_{:,i}\bigr)\subset[m].
\)
Disjointness of components implies  
$[m]=\gR'_1\cup\dots\cup\gR'_{\numslot+1}$ and, as before,
\[
  \rank(\mA)
  =\sum_{k=1}^{\numslot+1}\rank\bigl(\mA_{\gR'_k,:}\bigr),
\]
contradicting the minimality clause in (i).
Therefore every invertible $\mB$ produces at most $\numslot$ connected components.

We have established the chain of implications
\[
  (\mathrm{i})\;\Longrightarrow\;(\mathrm{ii})
  \;\Longrightarrow\;(\mathrm{iv})
  \;\Longrightarrow\;(\mathrm{iii})
  \;\Longrightarrow\;(\mathrm{i}),
\]
hence all four statements are equivalent.
\end{proof}

\section{Examples}\label{sec:examples}

\begin{example}[Type~M and Type~S mechanistic independence vs.\ reducibility]\label{ex:types_showcase}
This example illustrates Type~M and Type~S mechanistic independence and reducibility.
We display four Jacobians, each written in a basis aligned with a given product decomposition of the source tangent space.
Block columns (corresponding to distinct source components) are separated by vertical rules:
\[
\mA \;=\;
\left[
\begin{array}{c|c}
 1 & 0 \\
 1 & 0 \\
-2 & 1 \\
-1 & 1 \\
 1 & 1 \\
 2 & 1 \\
 0 & 1 \\
 0 & 1
\end{array}
\right]
\quad
\mB \;=\;
\left[
\begin{array}{c|c|c}
 1 & 0 &-1 \\
 1 & 0 & 0 \\
-1 & 1 & 0 \\
 0 & 1 & 0 \\
 0 &-1 & 1 \\
 0 & 0 & 1
\end{array}
\right]
\]
\[
\mC \;=\;
\left[
\begin{array}{c|c|c}
 1 & 0 & 1 \\
 1 & 0 & 0 \\
-1 & 1 & 0 \\
 0 & 1 & 0 \\
 0 &-1 & 1 \\
 0 & 0 & 1
\end{array}
\right]
\quad
\mD \;=\;
\left[
\begin{array}{cc|cc}
-1 & 1 & 0 & 0 \\
 1 & 0 & 0 & 0 \\
 1 & 2 & 0 & 0 \\
 0 & 1 & 1 & 0 \\
 3 &-1 & 1 & 0 \\
 0 & 0 & 2 &-1 \\
 0 & 0 & 1 & 0 \\
 0 & 0 &-1 & 3
\end{array}
\right]
\]

For a Jacobian \(\jac\) displayed in a basis aligned with the product decomposition
\(\mathfrak{B}=\bigoplus_i T_{\vsrc_i}\setsrc_i\), let \(\|\jac\|_0\) denote the number of its nonzero entries.
In this aligned basis, we have
\[
\rho_{\mathfrak{B}}^+ \;\leq\; \|\jac\|_0.
\]
For a (right) change of source basis \(\mG\in\operatorname{GL}(T_{\vsrc}\setsrc)\) that \emph{respects} \(\mathfrak{B}\),
the transformed Jacobian is \(\jac \mG\), and
\[
\rho_{\mathfrak{B}}^+
= \min_{\mG\in \text{\{block-diagonal\}}} \|\jac \mG\|_0.
\]

Conversely, for a change of basis \(\mG\in\operatorname{GL}(T_{\vsrc}\setsrc)\) that \emph{does not} respect \(\mathfrak{B}\),
\[
\rho_{\mathfrak{B}}^-
= \inf_{\mG\notin \text{\{block-respecting\}}} \|\jac \mG\|_0.
\]
Likewise, for a single component \(i\) with a (nontrivial) split
\(\mathfrak{B}_i=U\oplus V = T_{\vsrc_i}\setsrc_i\), we compare \(\rho_{\mathfrak{B}_i}^+\) vs.\ \(\rho_{\mathfrak{B}_i}^-\)
using changes of basis that do (or do not) respect \(\mathfrak{B}_i\) while fixing basis elements spanning
\(\bigoplus_{j\in[\numslot]\setminus\{i\}}T_{\vsrc_j}\setsrc_j\).

\medskip
\paragraph{Mechanistic independence and irreducibility of Type~M.}
Since no column support contains or is contained in the support of a column from a different block, Type~M mechanistic independence holds in all cases.
For \(\mA,\mB,\mC\), each component is one-dimensional, so Type~M irreducibility holds vacuously.
The first block of $\mD$ is further reducible since $\mD_{:,1} \mni \mD_{:,2}$ while the second block is irreducible as $\mD_{:,3} \supset \mD_{:,4}$.
Note that in the sparsest product-splitting basis multi-dimensional factors cannot be Type~M reducible.  

\medskip
\paragraph{Irreducibility of Type~S.}
Again, since each component of \(\mA,\mB,\mC\) is one-dimensional, Type~S irreducibility holds automatically.
For the Jacobian \(\mD\), each component is two-dimensional; thus we must verify that no \(2\)D block can be internally split to reduce sparsity compared with all other possible splits.

\emph{First block (columns 1--2).}
Consider the displayed split \(\mathfrak{B}_1=T_{\vsrc_1}\setsrc_1\) and any other nontrivial internal split
\(\widetilde{\mathfrak{B}}_1=U\oplus V\).
Since both $U$ and $V$ are one-dimensional, no further \(\mathfrak{B}_1\)-respecting basis transformation can reduce the support.
Counting nonzeros yields $\rho_{\mathfrak{B}_1}^+=8$, and since
\[
\mG=
\begin{bmatrix}
1 & 0\\[2pt]
1 & 1
\end{bmatrix},
\quad
\|\mD_{:, \{1,2\}}\mG\|_0=\|\mD_{:, \{1,2\}}\|_0,
\]
we obtain \(\rho_{\mathfrak{B}_1}^+=\rho_{\mathfrak{B}_1}^-\).

For a distinct split $\widetilde{\mathfrak{B}} \neq \mathfrak{B}$, we have
\(\rho_{\widetilde{\mathfrak{B}}_1}^-\leq \rho_{\mathfrak{B}_1}^+\), since we can always revert to the current split.
Moreover, \(\rho_{\widetilde{\mathfrak{B}}_1}^+ \geq\rho_{\mathfrak{B}_1}^+\), as the current split already achieves minimal support.
Hence, the first block is irreducible.
(We could construct an alternative Jacobian with reducible first component by setting both $-1$ entries in $\mD$ to $0$; modifying only one is insufficient.)

\emph{Second block (columns 3--4).}
Here a local simplification is possible: by mixing the third and fourth columns appropriately, we can reduce the third column by one nonzero.
After this adjustment, the same argument as above shows that the second block is also Type~S irreducible.

\medskip
\paragraph{Mechanistic independence of Type~S.}
We now check mechanistic independence for each Jacobian individually.

\emph{Case \(\mA\).}
Columns (blocks) have \emph{exclusive rows}: rows \(1,2\) are nonzero only in the first block, and rows \(7,8\) only in the second.
Any non-respecting change of basis mixes the two one-dimensional components, introducing nonzeros into these exclusive rows while at most one of the four shared rows in the middle can be canceled.
Thus, any genuine mixing strictly increases the total $\ell_0$-norm, so \(\rho_{\mathfrak{B}}^- > \rho_{\mathfrak{B}}^+=\|\mA\|_0\).

\emph{Case \(\mB\).}
Pairwise, \(\mB\) behaves analogously to \(\mC\): for each column pair there are four exclusive rows and only one shared.
This enforces a lower bound under any \(2\times2\) mix, so all pairwise checks pass.

However, there exists a full $\mG\in\R^{3\times3}$ mixing all three columns without increasing the overall support (thus violating strict inequality in Def.~\ref{definition:independence_s2}):
\[
\mG=
\begin{bmatrix}
1 & 0 & 1\\[2pt]
1 & 1 & 0\\[2pt]
1 & 0 & 0
\end{bmatrix},
\quad
\|\mB\mG\|_0=\|\mB\|_0.
\]
Hence, \(\mB\) is pairwise but not fully mechanistically independent.

\emph{Case \(\mC\).}
As in $\mB$, all pairwise checks pass.
The key difference is that in \(\mC\) the three shared rows (1st, 3rd, 5th) cannot be simultaneously eliminated by any invertible $\mG\in\R^{3\times3}$.
Thus, any combination involving all three blocks necessarily preserves the three exclusive rows (2nd, 4th, 6th) and increases $\ell_0$.
Therefore, \(\rho_{\mathfrak{B}}^- > \rho_{\mathfrak{B}}^+=\|\mC\|_0\), i.e., \(\mC\) is fully mechanistically independent.

\emph{Case \(\mD\).}
A local simplification inside the second block (mixing the third and fourth columns) reduces the third column by one nonzero.
After this, the first, second, and third columns each have four nonzeros (the fourth remains at two), giving
\(\rho_{\mathfrak{B}}^+=14=4+4+4+2\).

To \emph{break} Type~S independence, one would need a cross-block mix:
there must exist a vector \((a,b,c,d)\) with either \(a\) or \(b\) nonzero and either \(c\) or \(d\) nonzero such that
\[
\mD\,(a,b,c,d)^\top
\]
has at most four nonzero entries (matching $\rho_{\mathfrak{B}}^+$).
This is impossible: any such combination has at least five nonzeros, even under careful cancellations.
Hence, every cross-block mixing strictly increases the $\ell_0$-norm, and $\mD$ is Type~S mechanistically independent.

\medskip
In summary, all components of \(\mA,\mB,\mC,\mD\) are Type~S \emph{irreducible};
\(\mA,\mC,\mD\) are Type~S \emph{mechanistically independent};
\(\mB\) is \emph{pairwise} but not fully mechanistically independent.
\end{example}

\begin{example}[Minimizers of compositional contrast yield Type~S independence in some generators]
\label{ex:ccomp_min_s}
This example shows that there exist generators whose latent components are Type~S independent but not Type~D independent, yet for which the compositional contrast \(C_{\mathrm{comp}}\) recovers the sources up to permutation and element-wise transformations.

Let \(\vsrc\in\R^2\) and \(\vgen\colon\R^2\to\R^5\) with \(\vgen(\vsrc)=\mA\vsrc\), where
\[
\mA=\begin{pmatrix}
1 & 0 \\[2pt]
1 & 0 \\[2pt]
1 & 1 \\[2pt]
0 & 1 \\[2pt]
0 & 1
\end{pmatrix}.
\]
We immediately observe that \(\src_1\) and \(\src_2\) are Type~S but not Type~D independent.

Consider now a learned decoder \(\vdec\colon\R^2\to\R^5\).
If \(\vdec\) minimizes the reconstruction error, then its Jacobian at some \(\vtgt^*\in\R^2\) takes the form \(\jac_{\vdec}(\vtgt^*)=\mA\mB\) for a nonsingular matrix \(\mB\).
Equivalently, \(|\det(\mB)|\ge q\) for some \(q>0\).
Writing
\[
\mB=\begin{pmatrix} a & b \\ c & d \end{pmatrix},
\]
we obtain
\[
\mA\mB=\begin{pmatrix}
a & b \\[2pt]
a & b \\[2pt]
a+c & b+d \\[2pt]
c & d \\[2pt]
c & d
\end{pmatrix}
\qquad
\text{and}
\qquad
C_{\mathrm{comp}}(\mB)=2|a||b| + 2|c||d| + |a+c||b+d|.
\]
We will show that
\[
\min_{|\det(\mB)|\ge q} C_{\mathrm{comp}}(\mB)=q,
\]
and that every global minimizer of $C_{\mathrm{comp}}$ is a generalized permutation matrix (i.e., a matrix with exactly one nonzero entry in each row and each column).
This means that the learned latent factors are Type~S independent after joint minimization of the reconstruction error and $C_{\mathrm{comp}}$. 
\end{example}

\begin{proof}

We prove the claim in three steps.

\textbf{Step 1.} \emph{Reduction to the case \(|\det(\mB)| = q\).}

For $t>0$ we get
\[
C_{\mathrm{comp}}(t\mB)=t^2 C_{\mathrm{comp}}(\mB) ,\qquad
|\det(t\mB)|=t^2 |\det(\mB)|.
\]
If \(|\det(\mB)|>q\), choose \(t=\sqrt{q/|\det(\mB)|}<1\). Then
\[
|\det(t\mB)|=q, \qquad
C_{\mathrm{comp}}(t\mB)=t^2 C_{\mathrm{comp}}(\mB)<C_{\mathrm{comp}}(\mB).
\]
Thus any minimizer must satisfy \(|\det(\mB)|=q\).
It therefore suffices to prove
\[
C_{\mathrm{comp}}(\mB)\ge |\det(\mB)| \qquad \text{for all }\mB,
\]
and to identify the matrices for which equality holds.

\medskip
\textbf{Step 2.} \emph{A chain of inequalities.}

Let
\[
x=|a|,\qquad y=|b|,\qquad u=|c|,\qquad v=|d|.
\]
By the triangle inequality,
\[
|a+c|\ge |x-u|,\qquad |b+d|\ge |y-v|.
\]
Hence,
\[
C_{\mathrm{comp}}(\mB)\ge 2xy + 2uv + |x-u|\cdot|y-v| \;\eqqcolon\; F(x,y,u,v).
\]

We now claim
\[
F(x,y,u,v) \ge xv + yu \quad \text{for all } x,y,u,v \ge 0.
\]

Define
\[
D := F(x,y,u,v) - (xv+yu)
= 2xy + 2uv + |x-u||y-v| - xv - yu.
\]
We analyze \(D\) by cases on the signs of \(x-u\) and \(y-v\).

\smallskip
\emph{Case 1: \(x \ge u\) and \(y \ge v\).} Then \(|x-u| = x-u\), \(|y-v| = y-v\), and
\[
\begin{aligned}
D 
&= 2xy + 2uv - xv - yu + (x-u)(y-v) \\
&= 3xy + 3uv - 2xv - 2yu \\
&= (xy + uv) + 2(x-u)(y-v) \;\ge\; 0.
\end{aligned}
\]

\emph{Case 2: \(x \ge u\) and \(y < v\).} Then \(|x-u| = x-u\), \(|y-v| = v-y\), and
\[
\begin{aligned}
D 
&= 2xy + 2uv - xv - yu + (x-u)(v-y) \\
&= 2xy + 2uv - xv - yu + (xv - xy - uv + uy) \\
&= xy + uv \;\ge\; 0.
\end{aligned}
\]

\emph{Case 3: \(x < u\) and \(y \ge v\).} By symmetry with Case~2 (interchanging \((x,u)\)), we again obtain
\[
D = xy + uv \ge 0.
\]

\emph{Case 4: \(x < u\) and \(y < v\).} Then \(|x-u| = u-x\), \(|y-v| = v-y\), and
\[
\begin{aligned}
D 
&= 2xy + 2uv - xv - yu + (u-x)(v-y) \\
&= 2xy + 2uv - xv - yu + (uv - uy - xv + xy) \\
&= 3xy + 3uv - 2xv - 2yu \\
&= (xy + uv) + 2(x-u)(y-v) \;\ge\; 0.
\end{aligned}
\]

In all cases we have \(D \ge 0\), so indeed
\[
F(x,y,u,v) \ge xv + yu = |a||d| + |b||c|.
\]

Finally, the determinant satisfies
\[
|\det(\mB)| = |ad - bc| \le |ad| + |bc| = |a||d| + |b||c| = xv + yu
\]
by the triangle inequality.
In summary, we have the chain
\[
C_{\mathrm{comp}}(\mB) \;\ge\; F(x,y,u,v) \;\ge\; xv + yu \;\ge\; |\det(\mB)|.
\]
In particular, if \(|\det(\mB)| = q\), then
\[
C_{\mathrm{comp}}(\mB) \ge q.
\]

\medskip
\textbf{Step 3.} \emph{Equality conditions and classification of minimizers.}

To attain the minimum under \(|\det(\mB)| \ge q\), we must have \(|\det(\mB)| = q\) and equality throughout the chain
\[
C_{\mathrm{comp}}(\mB) \ge F(x,y,u,v) \ge xv + yu \ge |\det(\mB)|.
\]

\smallskip
\emph{(i) Equality in \(C_{\mathrm{comp}}(\mB) \ge F(x,y,u,v)\).}  

We used
\[
|a+c| \ge \bigl||a|-|c|\bigr|, \qquad |b+d| \ge \bigl||b|-|d|\bigr|.
\]
This requires
\[
ac \le 0, \qquad bd \le 0.
\]

\smallskip
\emph{(ii) Equality in \(F(x,y,u,v) \ge xv + yu\).}  

From the case analysis above, equality \(F(x,y,u,v) = xv + yu\) forces
\[
xy = 0 \quad\text{and}\quad uv = 0,
\]
that is,
\[
|a||b| = 0, \qquad |c||d| = 0,
\]
so
\[
\text{either } a=0 \text{ or } b=0, \quad\text{and}\quad \text{either } c=0 \text{ or } d=0.
\]

\smallskip
\emph{(iii) Equality in \(xv + yu \ge |\det(\mB)|\).}  

We used
\[
|\det(A)| = |ad - bc| \le |ad| + |bc| = xv + yu,
\]
which requires
\[
(ad)(bc) \le 0.
\]

\medskip

From (ii) we get four structural patterns, two of which are incompatible with \(\det(\mB)\ne 0\).
The remaining possibilities are
\[
\mB=\begin{pmatrix} a & 0 \\ 0 & d \end{pmatrix}
\qquad\text{or}\qquad
\mB=\begin{pmatrix} 0 & b \\ c & 0 \end{pmatrix}.
\]

For these matrices,
\[
C_{\mathrm{comp}}(\mB)=|\det(\mB)| \in \{\,|ad|,\ |bc|\,\},
\]
so equality holds everywhere.

These are precisely the \(2\times2\) generalized permutation matrices.
Consequently, the global minimizers of \(C_{\mathrm{comp}}\) under the constraint \(|\det(\mB)|\ge q\) are exactly the generalized permutation matrices with \(|\det(\mB)|=q\), and the minimum value of \(C_{\mathrm{comp}}\) is \(q\).
\end{proof}

\section{Disentanglement for Non-Invertible Generators}

In some applications the underlying generator is non-invertible when modelling the latent space as a product space (or a subset thereof).
For two such scenarios we can nevertheless make meaningful statements about disentanglement:  
(1) \emph{local invertibility}, and  
(2) \emph{invertibility on an open subset}.

For the former category, an example is image data containing multiple objects with identical appearance, since the generator is then (block-)permutation invariant.
Another example is the angle of a rotary joint with multiple revolutions in a robotic arm, as $\theta + n(2\pi)$ maps to the same physical state.
More generally, these situations involve symmetries such as permutation or rotational symmetry.

For the second category, an example arises from occlusions in image data.
Here the generator is also non-invertible, but now entire regions of the latent space map to the same observation (e.g., when one object is hidden behind another).

In such cases multiple latent codes map to the same observation, which makes the encoder inherently ambiguous.
The learning algorithm must make a choice about how to represent those observations.
This choice may lead to defects in the encoder, such as discontinuities (as discussed later).
However, a decoder need not suffer from this issue.
As long as we can learn a decoder that generates the observation manifold and whose components are mechanistically independent, we can still obtain disentanglement in case~(1), and at least on the invertible subsets in case~(2).

We now consider two illustrative examples.

\paragraph{Example 1.}
Consider images depicting two balls of identical appearance at arbitrary positions in the image, but without occlusion.
In this case the latent space can be modelled using an ordered configuration space, representing tuples of pairwise distinct object states:
\begin{equation}
    \Conf_{\numslot}(\Omega)
    \coloneqq
    \left\{
        \vsrc \in \Omega^{\numslot}
        \;\middle|\;
        \vsrc_i \neq \vsrc_j,\ \forall\, i \neq j \in [\numslot]
    \right\},
\end{equation}
where $\Omega$ denotes the state space of a single object (e.g., position, color), and $\numslot$ is the number of objects.
Since any permutation of the factors (i.e., objects) yields the same observation, the ground-truth decoder must be permutation invariant.
The observation manifold can therefore be viewed as an \emph{unordered} configuration space, obtained by quotienting out permutations.

Assuming a soft rasterizer, the generator can be modelled as a local diffeomorphism: the map is locally invertible, and small latent perturbations produce small and reversible changes in the observations.
A direct check verifies that we have Type~D independence and, by implication, also Type~M/S/$\mathrm{H}_n$ independence.
Because a single ball cannot be itself represented as an additive function with two or more components, we have Type~$\mathrm{H}_2$ irreducibility, and thus also Type~D irreducibility.
% TODO:
The generator is also second-order separable, since all first- and second-order partial derivatives are linearly independent at every point in the latent space.
Considering the affine equivariance of positional encoding, the generator must also be Type~S irreducible.
Thus the local identifiability results for Type~D, S, and $\mathrm{H}_2$ apply.

Furthermore, any configuration space with $K \geq 2$ and $\dim(\Omega) \geq 2$ is connected.
Its $1$-slices are also connected: fixing one ball, the other can be moved continuously to any other position in the image while avoiding collisions.
Hence Theorem~\ref{theorem:global_identifiability_main} applies, and local disentanglement extends to global disentanglement.

\paragraph{Example 2.}
Consider images of two balls, one large and one small, placed at different locations, with possible occlusion (the smaller potentially disappearing behind the larger).
The latent space can be modelled as $\setsrc = \R^2 \times \R^2$, describing the $(x,y)$-positions of both balls.
The generator then maps a hyper-tube of latent codes (corresponding to positions of the smaller ball behind the larger one) to the same observations.
With a soft rasterizer, the generator becomes differentiable, but it is not invertible (not even locally invertible) on the full domain.
However, at any point outside the hyper-tube it \emph{is} locally invertible.

Following the same reasoning as in Example~1, we obtain local identifiability of the decoder at all points outside the hyper-tube.
Restricted to this region, the model is even globally identifiable, since the resulting space is identical to that of the previous example.
Therefore, if we train a decoder with mechanistically independent components that can generate the observation manifold, it must be globally disentangled outside the hyper-tube.

Whether disentanglement holds \emph{within} the hyper-tube is undecidable:  
if the large ball occludes the smaller one and moves by a small amount, we cannot determine from the observation alone whether the smaller ball behind it moved as well.

\section{Experimental Details and Further Experiments}\label{sec:experimental_details}

\subsection{Compositional Contrast as a Surrogate for Type~S Independence}

This experiment closely follows the setup of \citet{brady2023provably}. 
We first sample latent variables from a standard normal distribution and then generate observations by passing them through an invertible MLP. 
The outputs are concatenated as
\[
    \vgen(\vsrc) 
    = \bigl(\vgen^{(1)}(\vsrc_1), \vgen^{(1,2)}(\vsrc_1, \vsrc_2), \vgen^{(2)}(\vsrc_2), \vgen^{(2,3)}(\vsrc_2, \vsrc_3), \dots, \vgen^{(\numslot)}(\vsrc_{\numslot})\bigr).
\]
For each $\vgen^{(i)}$, the slot dimension is fixed at $\dim(\setsrc_i) = 3$, and the slot-output dimension is set to 20. 
The overlap ratio is determined by the output dimensions of $\vgen^{(1)}$ and $\vgen^{(1,2)}$: if they have the same number of output dimensions, the overlap is 50\%. 
Strictly speaking, for $\numslot > 2$, this implies that in Figure~\ref{fig:recon_cc_sis}, $\numslot - 2$ slots exhibit a 66\% overlap.  

We train models with $\numslot \in \{2,3,5\}$ slots and regularization parameters $\lambda \in \{10^{-2}, 1\}$, where the loss is $\gL = \gL_{\text{recon}} + \lambda C_{\text{comp}}$. 
For each configuration, we run five random seeds across overlap levels $\{0\%, 5\%, 20\%, 50\%\}$, resulting in 120 models in total. 
To ensure comparability across different numbers of slots and regularization parameters, we apply the same normalization procedure to all experiments. 
In addition, within each group of models sharing the same overlap ratio, we normalize $C_{\text{comp}}$ by dividing by the group mean, since the achievable minimum of $C_{\text{comp}}$ varies substantially with overlap.

\subsection{Experiments on Non-Invertible Generators}

We now consider images depicting two balls whose colors lie between green and red and which may appear at any position in the image, but without occlusion ($\numslot = 2$, $\dimsrc = 6$); see Figure~\ref{fig:responsibility_problem_reconstruction}.
We train an autoencoder with an \emph{additive decoder} (i.e., Type~$\mathrm{H}_2$ independence), defined as
\[
    \vdec(\vtgt)
    = \sum_{i \in [\numslot]} \vsubdec^{(i)}(\vtgt_i),
\]
using the standard Mean Squared Error (MSE) reconstruction loss (implementation details are provided below).

\begin{figure}[ht]
    \centering
    \includegraphics[width=0.6\linewidth]{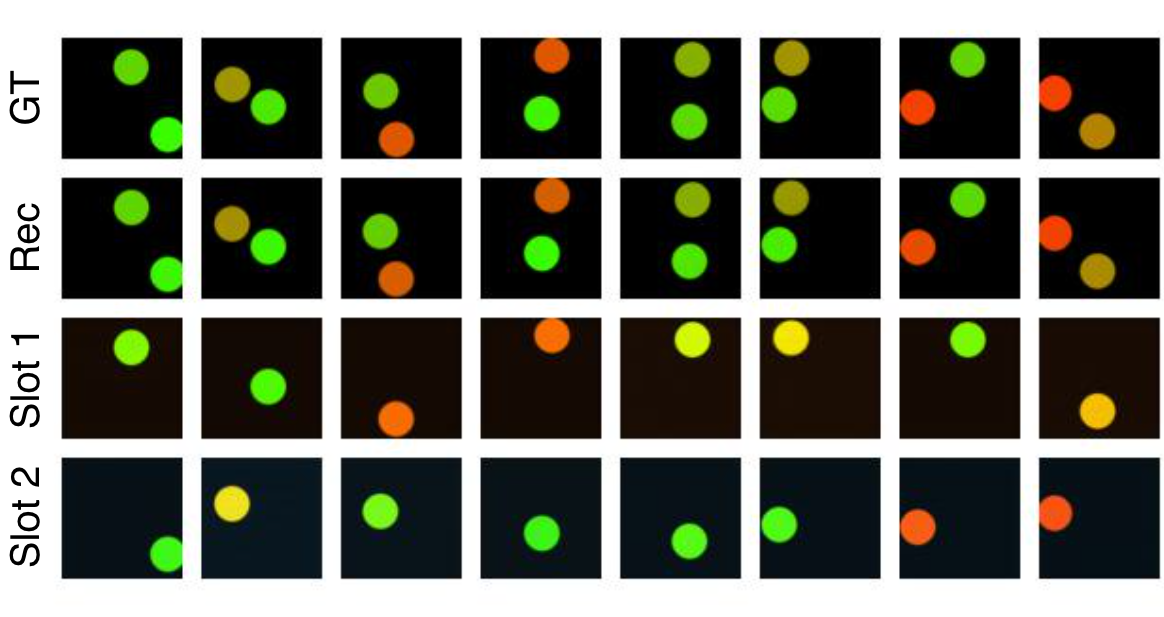}
    \caption{Image reconstructions for the entire latent code and for individual slots. Note that the per-slot reconstructions appear brighter because the offset is undetermined in an additive model.}
    \label{fig:responsibility_problem_reconstruction}
\end{figure}

To evaluate disentanglement quantitatively in this setting, we cannot directly predict the ground-truth latent code from the learned representation using the Slot Identifiability Score (SIS) as before.
Because multiple latent codes always correspond to the same observation, the prediction target is ambiguous.
Instead, we first determine the best-fitting \emph{fundamental domain} of the latent space under permutations.
A fundamental domain is any connected subset of the configuration space containing exactly one representative of each latent orbit under permutations.
Restricting the generator to a fundamental domain renders it invertible, making the prediction target unique.

However, there exist infinitely many choices of fundamental domains, and for most of them the learned regressor would need to approximate a discontinuous function.
To avoid this, for each reconstructed image we compute the centers of mass of both balls in image space and select, among all permutations of the ground-truth factors, the closest match.
This produces a partition of the latent space that aligns as closely as possible with the encoder’s (arbitrary) convention.
We then compute the SIS on this selected fundamental domain and denote the resulting metric by SIS*.
Table~\ref{tbl:responsibility_problem_disentanglement} shows that the model achieves nearly perfect disentanglement.

\begin{table}[h!]
\centering
\caption{Slot Identifiability Scores after selecting the best-fitting fundamental domain over 5 random seeds.}
\begin{tabular}{cc}
    \toprule
    RMSE & SIS* \\
    \midrule
    1.30 $\pm$ 0.18 & 99.6 $\pm$ 0.05 \\
    \bottomrule
    \label{tbl:responsibility_problem_disentanglement}
\end{tabular}
\end{table}

Next, we examine the encoder in more detail.
As noted by \citet{zhang2019fspool, hayes2023responsibility}, the encoder must approximate discontinuities in this setting, a phenomenon known as the \emph{responsibility problem}.
Such discontinuities necessarily arise whenever we traverse a path in the latent space that connects a point to its block-permutation.

Figure~\ref{fig:latent_traversal} shows the learned latent variables for several latent traversals: in each row, only one ground-truth latent variable is varied while the others remain fixed.
The corresponding images and reconstructions are shown in Figure~\ref{fig:image_traversal}.
When we vary the coordinates of the first ball $\vsrc_1$, only the second encoded slot $\vtgt_2 = (\tgt_{2,1}, \tgt_{2,2}, \tgt_{2,3})^\top$ changes; modifying $\vsrc_2$ analogously affects only $\vtgt_1$, with one exception:
The lower-left traversal in Figure~\ref{fig:latent_traversal} reveals an (approximately) discontinuous jump in the middle.
On either side of this jump, changes to the ground-truth latent affect only one slot.
This discontinuity is unavoidable and can, in principle, hinder autoencoder training by trapping the optimization in poor local minima.
However, in this instance, training succeeds without issue.

\begin{figure}[ht]
    \centering
    \includegraphics[width=0.9\linewidth]{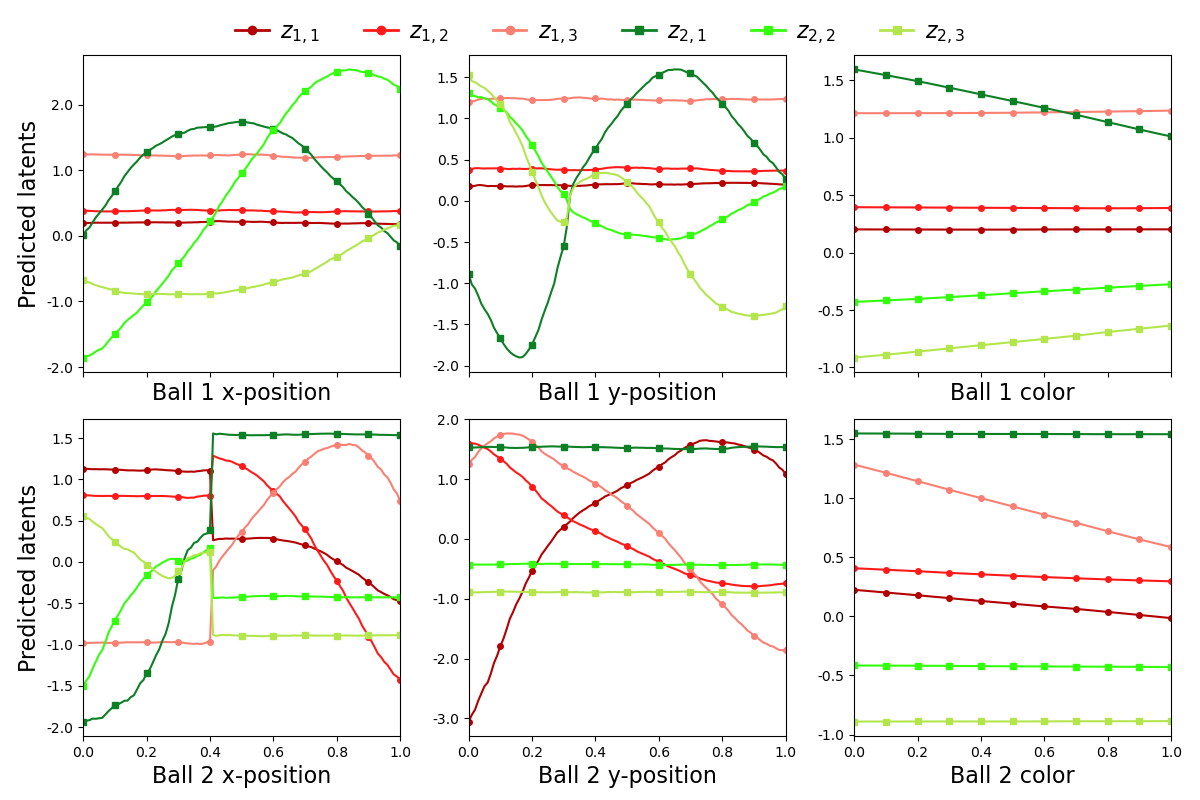}
    \caption{Latent traversals: in each subplot, one ground-truth latent variable varies while all others remain fixed. The curves depict the learned latent codes.}
    \label{fig:latent_traversal}
\end{figure}

In Figure~\ref{fig:image_traversal}, consider the traversal of the $x$-position of the second ball: the image reconstructions align perfectly with the ground truth, and nothing in the visual output betrays the latent discontinuity.

\begin{figure}[ht]
    \centering
    \includegraphics[width=0.6\linewidth]{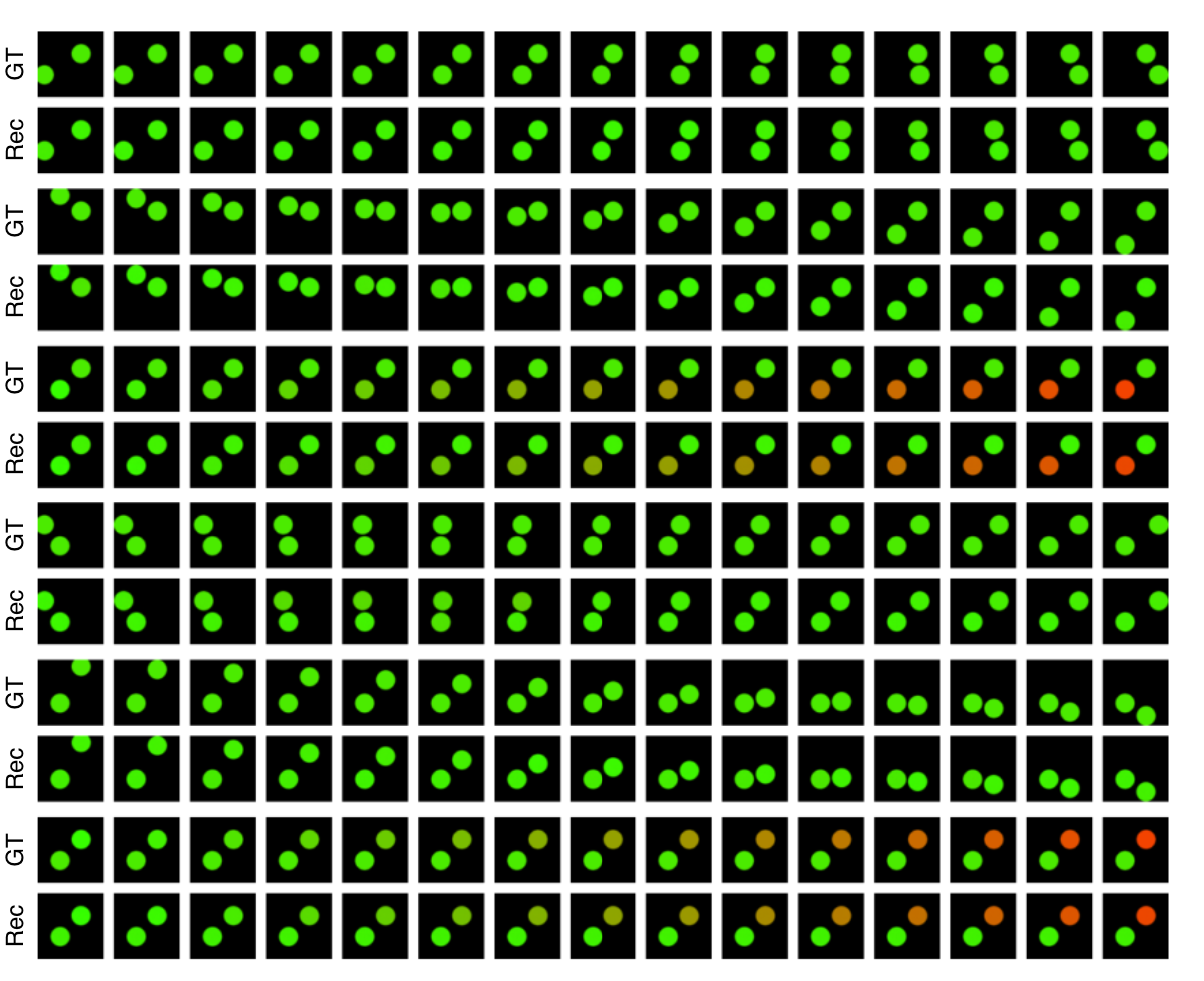}
    \caption{Image reconstructions for latent traversals: in each row, a single ground-truth latent variable is varied while all others remain fixed. From top to bottom, we vary the $x$-position, $y$-position, and color of the first ball, followed by the same for the second.}
    \label{fig:image_traversal}
\end{figure}

\paragraph{Implementation Details}

\paragraph{Dataset.}
We use images of size $64 \times 64 \times 3$ with pixel values in $[0, 255]$.
The dataset contains 300{,}000 training images, 10{,}000 validation images, and 50{,}000 test images.
Images with occlusions are removed, introducing mild dependencies between object positions.
Apart from this, the latent variables are sampled uniformly over their support.

\paragraph{Model Architecture.}
The \emph{encoder} consists of:
\begin{enumerate}
    \item A ResNet-18 backbone with the final classification layer removed (output dimension: 512).
    \item A linear layer mapping $512 \to 4096$, followed by Batch Normalization and a Leaky ReLU (slope $0.01$ for negative inputs).
    \item A fully connected layer of size $4096 \times 4096$, again followed by Batch Normalization and a Leaky ReLU.
    \item A final linear layer mapping $4096$ to the total ground-truth latent dimension, followed by Batch Normalization.
\end{enumerate}

We use an \emph{additive decoder} $\vdec(\vtgt) = \sum_{i \in [\numslot]} \vsubdec^{(i)}(\vtgt_i)$, where each subdecoder $\vsubdec^{(i)}$ has the same architecture (with no shared weights):
\begin{enumerate}
    \item A linear layer mapping from $d_i = \frac{\dimsrc}{\numslot}$ to $1024$, followed by Batch Normalization and a Leaky ReLU.
    \item Four fully connected layers of size $1024 \times 1024$, each followed by Batch Normalization and a Leaky ReLU. The output is reshaped into 64 feature channels over a $4 \times 4$ grid.
    \item A stack of deconvolutional layers, each followed by a Leaky ReLU:
    \begin{enumerate}
        \item Deconvolution: $64 \to 1024$, kernel size $4$, stride $2$, padding $1$.
        \item Deconvolution: $1024 \to 512$, kernel size $4$, stride $2$, padding $1$.
        \item Deconvolution: $512 \to 128$, kernel size $4$, stride $2$, padding $1$.
        \item Deconvolution: $128 \to 3$, kernel size $4$, stride $2$, padding $1$.
    \end{enumerate}
\end{enumerate}

\paragraph{Hyperparameters.}
We use the AdamW optimizer \citep{Loshchilov2017DecoupledWD} with:
\begin{itemize}
    \item Batch size: 1024,
    \item Learning rate: $5 \times 10^{-5}$,
    \item Weight decay: $1 \times 10^{-5}$,
    \item Number of training epochs: 1000.
\end{itemize}

\section*{LLM Usage Disclosure}

In accordance with the ICLR policy on large language model (LLM) usage, we disclose that an LLM (OpenAI's ChatGPT) was used solely for minor language polishing. This included limited grammar correction and rephrasing for clarity. All research ideas, technical content, analyses, and conclusions were generated entirely by the authors, who remain fully responsible for the paper's content.
For full transparency, this very disclosure note was also drafted with the help of ChatGPT.

\end{document}